\def\ps@pprintTitle{%
 \let\@oddhead\@empty
 \let\@evenhead\@empty
 \def\@oddfoot{\centerline{\thepage}}%
 \let\@evenfoot\@oddfoot}
\newtheorem{theorem}{Theorem}
\newtheorem{theo}{Theorem}
\newtheorem{lemma}[theo]{Lemma}
\newtheorem{propo}[theo]{Proposition}
\newtheorem{remark}[theo]{Remark}
\newtheorem{assumption}{Assumption}
\theoremstyle{definition}
\newcommand{\R}{{\ensuremath{\mathbb{R}}}}
\newcommand{\N}{{\ensuremath{\mathbb{N}}}}
\newcommand*{\Id}{\operatorname{Id}}
\newcommand*{\inte}{\operatorname{int}}
\newcommand*{\cl}{\operatorname{cl}}
\newcommand*{\Hess}{\operatorname{Hess}}
\newcommand\Hgen{\mathcal{H}^{\text{\normalfont gen}}}
\newcommand\Hdis{\mathcal{H}^{\text{\normalfont dis}}}
\newcommand\Hpot{\mathcal{H}^{\text{\normalfont pot}}}
\journal{Journal of \LaTeX\ Templates}
\begin{document}

\begin{frontmatter}

\title{Learning Brenier Potentials with Convex Generative Adversarial Neural Networks}

\author{C.\ Drygala${}^\dagger$, H.\ Gottschalk${}^\dagger$,
T.\ Kruse${}^\ast$, S.\ Martin${}^\dagger$ and A.\ Mütze${}^\ast$}
\address{${}^\ast$University of Wuppertal, School of Mathematics and Natural Sciences, IMACM \& IZMD\\
\{muetze, tkruse\}@uni-wuppertal.de\\
${}^\dagger$ Faculty II - Mathematics and Naural Sciences, Technical University Berlin \& Math+\\
\{drygala,gottschalk,martin\}@math.tu-berlin.de}
\begin{abstract}
Brenier proved that under certain conditions on a source and a target probability measure there exists a strictly convex function such that its gradient is a transport map from the source to the target distribution. This function is called the Brenier potential. Furthermore, detailed information on the H\"older regularity of the Brenier potential is available. In this work we develop the statistical learning theory of generative adversarial neural networks that learn the Brenier potential. As by the transformation of densities formula, the density of the generated measure depends on the second derivative of the Brenier potential, we develop the universal approximation theory of ReCU networks with cubic activation $\mathtt{ReCU}(x)=\max\{0,x\}^3$ that combines the favorable approximation properties of H\"older functions with a Lipschitz continuous density. In order to assure the convexity of such general networks, we introduce an adversarial training procedure for a potential function represented by the ReCU networks that combines the classical discriminator cross entropy loss with a penalty term that enforces (strict) convexity. We give a detailed decomposition of learning errors and show that for a suitable high penalty parameter all networks chosen in the adversarial min-max optimization problem are strictly convex. This is further exploited to prove the consistency of the learning procedure for (slowly) expanding network capacity. We also implement the described learning algorithm and apply it to a number of standard test cases from Gaussian mixture to image data as target distributions. As predicted in theory, we observe that the convexity loss becomes inactive during the training process and the potentials represented by the neural networks have learned convexity. 
\end{abstract}
\begin{keyword}
Generative adversarial networks $\bullet$ Brenier potential $\bullet$ convexity loss function. 
\end{keyword}
\end{frontmatter}
\nolinenumbers
\section{Introduction}

Generative learning models have become the fourth major family of machine learning algorithms besides supervised learning, unsupervised learning and reinforcement learning~\cite{bond2021deep,harshvardhan2020comprehensive}. In generative learning, the goal is to learn from a number of samples from a prescribed target distribution how to generate further samples that stem from a distribution that is as close to the target distribution as possible. As the generation usually involves sampling from some (pseudo) random number generator, the so-called source  distribution, generative learning can be understood as learning a map that transforms the source distribution to a  learned distribution that is as close to the target distribution as possible. 



A considerable number of generative algorithms have been proposed. Roughly, they can be subdivided in three categories: energy-based, map-based and flow-based. Restricted and deep Boltzmann machines and deep belief networks are classical energy based methods~\cite{montufar2011refinements,salakhutdinov2009deep}, where the target measure is represented as $d\mu^*(x)=\frac{1}{Z}e^{-V(x)}\,d\lambda(x)$, where $\lambda$ stands for the Lebesgue measure, see also~\cite{du2019implicit} for more recent models. The learning task is to learn the potential $V$ with some model $V(\cdot,\theta)$. However, the normalizing constant $Z(\theta)=\int e^{-V(x,\theta)} d\lambda(x)$ is notoriously intractable, which complicates training. As a result, computationally intensive Markov Chain Monte Carlo (MCMC) methods are needed for sampling.

The second family of algorithms is map-based and therefore directly relates to transport theory. The goal is to learn the transport map directly using some parametric family of maps, e.g.\ deep neural networks. Here we distinguish between explicit and implicit methods. Explicit methods -- like affine coupling flows~\cite{dinh2022density,papamakarios2021normalizing,teshima2020coupling} or LU-Nets~\cite{chan2023lu} -- rely on special layers like affine coupling blocks which make the density of the generated measure $\hat \mu$ easy to compute. This enables both, efficient generation and likelihood based training. However, the architecture limitations often prevent an efficient scaling to high dimension. Implicit methods, in contrast to this, do not require an explicit representation of the density, but generate samples during training. Generative adversarial networks (GANs)~\cite{vanilla_gan} use this strategy and are trained using the variational representation of $f$-divergences~\cite{nowozin2016f}, in particular of the Jensen-Shannon divergence. The maximization problem in the variational representation of the loss function then leads to a min-max kind of training which is not easily conducted numerically but, if done right, leads to convincing results in high dimensions, see e.g.~\cite{munoz2021temporal,wang2018esrgan}.

Finally, flow-based methods learn a continuous transformation of probability densities, where the terminal distribution represents the learned measure. NeuralODEs~\cite{chen2018neural} model this transformation differentially using the time-dependent flow associated with an ordinary differential equation (ODE). At the terminal time, the flow defines the transport map. In NeuralODE, the learned component is the vector field that governs the ODE dynamics. The training either is likelihood-based using Liouville's formula, or rely on a more recent flow matching technique~\cite{lipman2022flow,liu2022flow}. Diffusion models~\cite{sohl2015deep, welling2011bayesian,rombach2022high} replace the ODE by a stochastic differential equation and compute the flow of probability densities using Langevin's equation.


In this work, we explore GANs through the lens of optimal transport theory. In a GAN, the objective is to learn a generator $G$ that acts as a transport map, transforming samples from a source distribution into samples from a target (data) distribution. Inspired by a foundational result of Y.~Brenier~\cite{brenier1991polar}, we model the generator $G$ as the gradient of a scalar function, $G = \nabla \phi$, where $\phi$ is a convex potential. Brenier's theorem guarantees that, under mild regularity conditions, the optimal transport map between two distributions can be represented as the gradient of a convex function. Further developments in~\cite{paty2020regularity} provide insights into the Hölder regularity of such potentials, and it is well established that the Brenier potential is strictly convex~\cite{santambrogio2015optimal}.
Building on these results, we introduce \emph{Brenier GAN}, a framework that learns the potential function $\phi$ directly from data by parameterizing it with a deep neural network.


To build a statistical learning theory, the Brenier GAN framework offers significant advantages over the classical GAN formulation, in which the generator $G$ is directly modeled as a neural network. Indeed, the convexity constraint on the potential naturally addresses several challenges in GAN training. Notably, the gradient of a strictly convex function is injective. This enables one to provide estimates on the distance of the learned measures with respect to the target measure in terms of densities using the change of variables formula, provided the potential is twice differentiable. Importantly, this approach does not contradict the implicit nature of GANs, as density estimates are not explicitly computed during training or generation, they serve only for theoretical analysis. This set of assumptions also is in line with other works on the statistical learning theory of GANs~\cite{asatryan2020convenient,biau2020some,puchkin2024rates}.

Using  recent results on the universal approximation property (UAP) of deep neural networks~\cite{belomestny2023simultaneous} that give control both on function values and derivatives, we can not only approximate the Brenier potential, but also the generator and the target density. This however requires second order differentiable neural networks, which is not given for standard $\mathtt{ReLU}$-activated neural networks. Therefore we consider second order continuously differentiable neural networks with $\mathtt{ReCU}(x)=\max\{0,x\}^3$-activation. From a practical standpoint $\mathtt{ReQU}(x)=\max\{0,x\}^2$-activation would also be sufficient as it is still possible to back-propagate the first derivative of such nets. However, in this article we avoid technical problems in the context of only piecewise densities of the generated measures. We are however capable to reproduce the favorable universal approximation properties known for $\mathtt{ReQU}$ networks~\cite{belomestny2023simultaneous} for $\mathtt{ReCU}$ as well. 

Convexity is the second major challenge in the statistical learning theory of Brenier GAN. Because standard neural networks are not inherently convex, it is unclear whether a randomly chosen network preserves injectivity and satisfies the change of variables formula. Here we propose to resolve this problem by adding a penalty to the training loss which enforces convexity of the neural network. As a result, we obtain statistical estimates for the generalization error, providing uniform bounds on the difference between the empirical loss function and its expected value.

In this way we obtain an error decomposition for the Brenier GAN learning a convex potential. 
This potential does not necessarily have to be the Brenier potential, but Brenier's potential function and its properties play the central role in the control of the model error for the generator. As the Brenier GAN is trained using a discriminator network, also the error between the optimal discriminator based on densities and those discriminator functions expressible by the neural networks has to be taken into account. Here we chose the discriminator networks as standard $\mathtt{ReLU}$ networks. As such networks also have well studied UAP, the control of the discriminator model error seems to be easy. 
However, there is a fine point to consider. As one usually would like to avoid to control the behavior of neural networks at infinity, it is common to choose source and target measures as compactly supported in, say, $[0,1]^d$. This point is particularly important for $\mathtt{ReQU}$ or $\mathtt{ReCU}$ networks that are not globally Lipschitz. This, however, creates a new technical problem as the density of the source measure is not continuous on the boundary of $[0,1]^d$
and this discontinuity is mapped to some manifold in the vicinity of $[0,1]^d$. 
This leads to discontinuities in the optimal discriminator and to problems with the reduction of the model error by continuous $\mathtt{ReLU}$ networks. Nevertheless, as this discontinuity only occurs at the said manifold, we can we can adopt the approach of \cite{yarotsky2017error} for the universal approximation property (UAP) to obtain good approximation properties everywhere except for an $\varepsilon$-neighborhood of the manifold. Using H. Weyl's tube theorem, we are able to control the volume of the problematic region where the UAP for the discriminator networks with respect to the $\left\|\cdot\right\|_\infty$ norm fails.

By combination of the aforementioned technical elements, we are able to prove our main theorem (Theorem~\ref{theo:main}): 
The approximation error in learning the target distribution with a Brenier GAN, measured by the Jensen–Shannon divergence between the estimated and true distributions, can be made arbitrarily small.
Moreover, we prove that as the sample size grows, the probability of successful learning (i.e., achieving this small error) converges to one, guaranteeing asymptotic consistency of our approach.

We also implement our Brenier GAN architecture and present numerical studies for low dimensional distributions and images of small image size. Our numerical findings show that the convexity loss is mostly becoming inactive after a few epochs of training and only rarely resurfaces. We take this as an numerical indication that our proposed model is in fact able to learn convexity.

Some of the ideas presented here have been explored in various forms in the literature. The statistical learning theory of GAN has been considered by a number of authors~\cite{asatryan2020convenient,biau2020some}. However these works do not learn potential functions. Input convex neural networks (ICNN) have been considered in~\cite{amos2017input}. However~\cite{amos2017input} focuses on reinforcement learning and energy based learning and does not consider UAP of ICNN with respect to convex functions. 
Input convex neural networks with \texttt{ReLU} activation have been shown to possess the universal approximation property for convex functions \cite[Proposition 3]{huangconvex}. However the generator of these can not be trained by backpropagation as these networks are only once (weakly) differentiable. 
More scalable convolutional input convex neural networks are proposed in ~\cite{korotin2019wasserstein}, where the authors study Wasserstein-2-GAN in the context of learning the Kantorovitch potentials and its conjugate. Concerning universal approximation this work relies on~\cite{chen2018optimal}. Despite providing results on the model error under the assumption that the UAP holds, a detailed analysis within the framework of statistical learning theory is still lacking.   
Further recent interesting work concerns the direct parametrization of gradient functions by neural networks, see e.g.~\cite{richter2021input} which is also motivated by modeling gradients of the Brenier potential. The recent paper~\cite{chaudhari2024gradient} proves universal approximation for the gradients of finite convex functions for shallow neural networks with softmax activation in the hidden layer and symmetric weights for input and output layers. While this opens up interesting new perspectives for the future, it is not evident how to generalize these results to deeper networks. 

Our paper is organized as follows. In Section 2, we introduce some notation and the fundamental concept of GANs. In Section 3, we develop the statistical learning theory for Brenier GAN and prove that, with arbitrarily high probability (less than one), the error—measured in terms of the Jensen-Shannon divergence between the distribution of the generative model and the target distribution—can be made arbitrarily small in the large-sample limit.
In Section 4, we introduce a novel penalty function that enforces convexity of the neural network parameterizing the potential. This penalty is based on checking (strong) convexity at randomly drawn samples. We further show that if the penalty weight is chosen sufficiently large, the convexity constraint is satisfied by the minimizer of the loss function.
In Section 5, we present numerical experiments, demonstrating generation results for Gaussian mixtures and small images. Finally, Section 6 concludes the paper and discusses open problems for future research.




\section{Notation and background on GANs}

\subsection{General notation}

Throughout this article we use the following notation. 
We denote by $\Id_d=\Id$ the identity matrix on $\R^d$. For any vectors $v,w\in \R^d$ we denote by $\|v\|\in [0,\infty)$ the Euclidean norm of $v$ and by $\langle v,w\rangle$ the Euclidean scalar product between $v$ and $w$. We refer to the spectral norm of a matrix $A\in \R^{d\times d}$ by $\|A\|_2$ and to its Frobenius norm by $\|A\|_F$.
For two matrices $A,B\in \R^{d\times d}$ we write $A\ge B$ if $\langle v,(A-B)v\rangle \ge 0$ for all $v\in \R^d$.
For a set $A\subseteq \R^d$ we denote by $A^c$ its complement, by $\inte(A)$ its interior, by $\cl(A)$ its closure and by $\partial A$ its boundary.

Moreover, for $f\in C^1([0,1]^d,\R^m)$ we use $(\nabla f)_{i,j}=\frac{\partial f_i}{\partial x_j}$, $i\in \{1,\ldots, m\}$, $j\in \{1,\ldots,d\}$, for the matrix of first derivatives. For $f\in C^2([0,1]^d,\R)$ we write $(\Hess f)_{i,j}=\frac{\partial^2 f}{\partial x_i \partial x_j}$, $i,j\in \{1,\ldots,d\}$, for its Hessian.
For all $d\in \N$ and $\gamma=(\gamma_1,\ldots,\gamma_d) \in \N_0^d$ let $|\gamma|=\sum_{i=1}^d\gamma_i$ and for every function $f=(f_1,\ldots, f_m)\colon \R^d\to \R^m$ that has partial derivatives up to order $|\gamma|$  we denote the partial derivatives by
$$\nabla^\gamma f_i=\frac{\partial^{|\gamma|}f_i}{\partial x_1^{\gamma_1}\cdots \partial x_d^{\gamma_d}},\quad i\in \{1,\ldots,m\}.$$
For all $d,m\in \N$, $k\in \N_0$ and open sets $A\subseteq \R^d$ we denote by $C^k(A,\R^m)$ the set of all functions $f\colon A\to \R^m$ for which $\nabla^\gamma f_i$, $i\in \{1,\ldots,m\}$, are continuous for all $\gamma \in \N_0^d$ with $|\gamma|\le k$. By $C^k(\cl(A),\R^m)$ we denote the set of functions $f\colon \cl(A)\to \R^m$ for which $f|_{A}\in C^k(A,\R^m)$ and whose partial derivatives are uniformly continuous on $A$ (and therefore can be uniquely extended to $\cl(A)$). For $f\in C^k(\cl(A),\R^m)$ we introduce the norm
$$
\|f\|_{C^k(\cl(A),\R^m)}=\max_{\substack{\gamma \in \N_0^d,\\ |\gamma|\le k}}\max_{i\in \{1,\ldots,m\}}\sup_{x\in \cl(A)}|\nabla^\gamma f_i(x)|,
$$
which makes $(C^k(\cl(A),\R^m), \|\cdot\|_{C^k(\cl(A),\R^m)})$ a Banach space.

For $k\in \N_0$, $\alpha \in (0,1]$ and an open set $A\subseteq \R^d$ we denote by $C^{k,\alpha}(\cl(A),\R^m)$ the Hölder space that consists of all functions $f\in C^k(\cl(A),\R^m)$ for which
$$
\|f\|_{C^{k,\alpha}(\cl(A),\R^m)}=\|f\|_{C^k(\cl(A),\R^m)}+
\max_{\substack{\gamma \in \N_0^d,\\ |\gamma|= k}}\max_{i\in \{1,\ldots,m\}}\sup_{\substack{x,y\in A,\\ x\neq y}} \frac{|\nabla^\gamma f_i(x)-\nabla^\gamma f_i(y)|}{\|x-y\|^\alpha}<\infty.
$$

\subsection{Generative framework}

We denote the Kullback-Leibler divergence between a probability measure $\mu$ on $(\R^d, \mathcal B(\R^d))$ (with $\mathcal B(\R^d)$ being the Borel-$\sigma$-algebra on $\R^d$) and a probability measure $\nu$ on $(\R^d, \mathcal B(\R^d))$ that is absolutely continuous w.r.t.\ $\mu$
by
$$
d_{KL}(\nu\Vert\mu)=\int_{\R^d} \log\left(\frac{d\nu}{d\mu}\right)d\nu.
$$
Moreover, for every two probability measures $\mu$, $\nu$ on $(\R^d, \mathcal B(\R^d))$ the Jensen-Shannon divergence between $\mu$ and $\nu$ is given by
$$
d_{JS}(\nu,\mu)=\frac{1}{2}\left( d_{KL}\left(\nu\Big\Vert\frac{\mu+\nu}{2}\right)+d_{KL}\left(\mu\Big\Vert\frac{\mu+\nu}{2}\right)\right).
$$

Throughout this paper, we denote by $\mu^*$ the reference probability measure on $([0,1]^d, \mathcal{B}([0,1]^d))$, for which we aim to learn a mechanism to generate samples. We assume that $\mu^*$ is absolutely continuous with respect to the $d$-dimensional Lebesgue measure and denote its density by $p^*\colon \mathbb{R}^d \to [0,\infty)$. Since $\mu^*$ is supported on $[0,1]^d$, it follows that $p^*(x) = 0$ for all $x \notin [0,1]^d$. Additionally, we assume that $p^*(x) > 0$ for all $x \in [0,1]^d$.
To establish our main result on the approximation error of the Brenier GAN, we impose the following assumption which plays a crucial role in deriving regularity properties of the Brenier potential, essential in our proof.

\begin{assumption}[Regularity of the target measure]\label{assumption:regularity}
There exists $\alpha \in (0,1)$ such that the density $p^*$ of the probability measure $\mu^*$ is in $C^{1, \alpha}([0,1]^d, \mathbb{R})$.
\end{assumption}

Our goal is to learn a transport map $G \colon [0,1]^d \to \mathbb{R}^d$ (also called \emph{generator} in the context of GANs) from a source probability measure $\lambda$ on $([0,1]^d, \mathcal{B}([0,1]^d))$ to $\mu^*$. In this work, we fix $\lambda$ as the restriction of the $d$-dimensional Lebesgue measure to $([0,1]^d, \mathcal{B}([0,1]^d))$.
For learning, we assume access to samples from $\mu^*$. To formalize this, we introduce a probability space $(\Omega, \mathcal{F}, \mathbb{P})$ and define i.i.d.\ random variables $Y, Y_1, Y_2, \dots \colon \Omega \to [0,1]^d$, where $Y$ follows the distribution $\mu^*$. Similarly, we define i.i.d.\ random variables $Z, Z_1, Z_2, \dots \colon \Omega \to [0,1]^d$, where $Z$ follows the distribution $\lambda$.

A \emph{discriminator} $D\colon \R^d\to [0,1]$ is a function which is trained to distinguish between samples from $\mu^*$ and those of $G_{\#}\lambda$. It is supposed to return the value $1$ for samples of $\mu^*$ and $0$ for those of $G_{\#}\lambda$. 

\begin{remark}
    The reference measure $\mu^*$ is only supported on $[0,1]^d$ but samples $G(Z)$ from the generator might take values outside of $[0,1]^d$ (c.f.\ Remark \ref{rem:image_gen}). For this reason the domain of the discriminators is the whole $\R^d$. Since samples outside of $[0,1]^d$ can only come from $G(Z)$, in Section \ref{sec:discriminator_error} we restrict the hypothesis space of discriminators $\Hdis$ to functions $D$ that are equal to $0$ outside of $[0,1]^d$. 
\end{remark}
We next introduce the loss functions for the GAN and Brenier GAN frameworks.

\subsection{GAN and Brenier GAN losses}

We first introduce the standard GAN loss, followed by the Brenier GAN framework, which incorporates optimal transport theory by modeling the generator as the gradient of a convex potential function.

\paragraph{GAN Loss}
For measurable functions $G\colon [0,1]^d\to \mathbb{R}^d$ and $D\colon \R^d\to [0,1]$, the GAN loss is defined as
\begin{equation}
L(G,D) = \frac{1}{2} \mathbb{E} \left[\log D(Y) + \log (1 - D(G(Z))) \right].
\end{equation}
The goal is to solve the  min-max problem
\begin{equation}
    \inf_{G \in \Hgen} \,\sup_{D \in \Hdis}   L(G,D).
\end{equation}
Given a finite sample of size $n$, the empirical counterpart of the GAN loss\footnote{Since the summands in $\hat{L}_n(G,D)$ and the integrands in $L(G,D)$ are non-positive, the sum and expectation are well-defined but may take the value $-\infty$.} is given by
\begin{equation}
\hat{L}_n(G,D) = \frac{1}{2n} \sum_{i=1}^n \log D(Y_i) + \frac{1}{2n} \sum_{i=1}^n \log (1 - D(G(Z_i))).
\end{equation}

\paragraph{Brenier GAN Loss}
The Brenier GAN builds upon the optimal transport framework by modeling the generator as the gradient of a scalar potential function. Specifically, the generator is defined as $G = \nabla \phi$, where $\phi \colon [0,1]^d \to \mathbb{R}$ is a convex potential function parameterized by a neural network. To ensure the learned function adheres to convexity constraints while maintaining the adversarial training dynamics, we introduce a modified loss function that consists of two main components:
(i) the standard GAN loss applied to $\nabla \phi$, ensuring meaningful data generation, and 
(ii) a convexity penalty term enforcing strong convexity of $\phi$. 

For any differentiable function $\phi \colon [0,1]^d \to \mathbb{R}$ and any measurable function $D\colon [0,1]^d \to [0,1]$, the Brenier GAN loss is defined as
\begin{equation}\label{eq:brenier_GAN_loss}
    \mathcal{L}_{\kappa, \gamma}(\phi, D) = L(\nabla \phi, D) + \gamma \hat{P}^{(\kappa)}(\phi),
\end{equation}
where $\hat{P}^{(\kappa)}(\phi)$ is a penalty function designed to promote convexity, $\kappa\in (0,+\infty)$ is the strong convexity parameter and $\gamma \in (0,+\infty)$ is the penalty parameter. We will elaborate further on the nature of the penalty term in Section \ref{sec:convexity}.

\section{Theory of Brenier GAN}\label{sec:error_theory}

In this section, we develop a statistical error theory for the Brenier GAN under the assumption that the hypothesis space of possible generators $G$ is restricted to functions that can be expressed as the gradient of a strongly convex potential. Consequently, we temporarily disregard the convexity penalty term in \eqref{eq:brenier_GAN_loss}. This term will be reintroduced in Section 4, where we will demonstrate that the main result derived in this section remains valid, as the strong convexity of the potential can be effectively enforced through our proposed penalty function.

\subsection{A general framework for generative adversarial learning}

In this subsection we formulate a general framework for generative adversarial learning on which our later results are based. 
 We consider functions $G$ from a hypothesis space $\Hgen$ and aim to minimize the Jensen-Shannon divergence between $\mu^*$ and the pushforward measures $G_{\#}\lambda$, $G\in\Hgen$. Recall that $G_{\#}\lambda$ is the image measure of $G(Z)$, i.e., it satisfies for all $A \in \mathcal B(\R^d)$ that 
$(G_{\#}\lambda)(A)=\lambda(G^{-1}(A))=\mathbb P[(G(Z))^{-1}(A)]$. 
In this subsection we first introduce a generic hypothesis space $\Hgen$ that will be further specified later on.  
We assume that $\Hgen$ is a subset of the set of all measurable functions $G\colon [0,1]^d\to \R^d$
such that 
$G_{\#} \lambda$ is absolutely continuous with respect to the $d$-dimensional Lebesgue measure. For every $G\in\Hgen$ we denote by $D_G\colon \R^d\to [0,1]$ the function that satisfies
$$
D_G(x)=\frac{p^*(x)}{p^*(x)+p(x)}
$$
for all $x\in [0,1]^d$ and $D_G(x)=0$ for all $x\notin [0,1]^d$. Here $p$ is the density of $G_{\#} \lambda$ with respect to the $d$-dimensional Lebesgue measure. We assume that the hypothesis space $\Hdis$ of discriminator functions is a subset of all measurable functions $D\colon \R^d\to [0,1]$.

\begin{remark}\label{rem:image_gen}
    The assumption that $\mu^*$ is supported on the unit cube $[0,1]^d$ is common in the literature and is imposed here mostly for the sake of notational simplicity and could be relaxed to more general, sufficiently regular subsets of $\R^d$. Note, however, that we do not assume that each probability measure $G_{\#}\lambda$ generated by a $G\in\Hgen$ is supported on $[0,1]^d$. We need to allow for more general supports because in Section \ref{sec:generator_error} we take $\Hgen$ as a set of derivatives $\nabla \phi$ of certain neural networks $\phi$ and in this context one cannot guarantee that $\nabla \phi$ only takes values in $[0,1]^d$.
\end{remark}


The next well-known result (see, e.g.,~\cite[Proposition 2.2]{asatryan2020convenient}) relates the Jensen-Shannon divergence to the loss function $L$ and shows the optimality of the discriminator $D_G$ for a given $G\in\Hgen$. Note that we do not require that $D_G\in\Hdis$
\begin{lemma}\label{lem:opt_discriminator}
Let $G\in\Hgen$. Then it holds for all measurable $D\colon \R^d\to [0,1]$ that 
$$
L(G,D)\le L(G,D_G) = d_{JS}(\mu^*,G_{\#} \lambda)-\log(2).
$$
\end{lemma}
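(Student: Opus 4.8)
The plan is to compute $L(G,D)$ directly from its definition as an expectation and to rewrite it as an integral against Lebesgue measure, so that the problem of maximising over $D$ becomes pointwise maximisation of a scalar function. Concretely, since $Y\sim\mu^*$ has density $p^*$ supported on $[0,1]^d$ and $G(Z)\sim G_{\#}\lambda$ has density $p$, I would write
\begin{equation*}
2L(G,D)=\int_{\R^d}\log D(x)\,p^*(x)\,dx+\int_{\R^d}\log\bigl(1-D(x)\bigr)\,p(x)\,dx,
\end{equation*}
using the change-of-variables/pushforward identity $\E[\log(1-D(G(Z)))]=\int \log(1-D(x))\,p(x)\,dx$, which is legitimate because $G_{\#}\lambda$ is assumed absolutely continuous with density $p$. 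Both integrands are non-positive, so the integrals are well defined in $[-\infty,0]$, consistent with the footnote in the paper.

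Next I would argue pointwise. For fixed $x$, consider $t\mapsto a\log t+b\log(1-t)$ on $(0,1)$ with $a=p^*(x)\ge 0$, $b=p(x)\ge 0$. If $a+b>0$ this is strictly concave with unique maximiser $t^*=a/(a+b)$, which is exactly $D_G(x)$ on $[0,1]^d$; if $a=b=0$ (which on $[0,1]^d$ cannot happen since $p^*>0$ there, and off $[0,1]^d$ forces $p=0$ too, so the integrand contributes nothing) the choice is irrelevant. Hence for every measurable $D\colon\R^d\to[0,1]$ and a.e.\ $x$ we have $p^*(x)\log D(x)+p(x)\log(1-D(x))\le p^*(x)\log D_G(x)+p(x)\log(1-D_G(x))$, where for the region outside $[0,1]^d$ I use that $p^*=0$ there and $D_G=0$ there so the right-hand side is $p(x)\log 1=0$, which dominates the left-hand side $p(x)\log(1-D(x))\le 0$. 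Integrating this pointwise inequality gives $L(G,D)\le L(G,D_G)$.

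Finally I would identify $L(G,D_G)$ with the Jensen--Shannon divergence. Substituting $D_G(x)=p^*(x)/(p^*(x)+p(x))$ and $1-D_G(x)=p(x)/(p^*(x)+p(x))$ into the integral and writing $m=(p^*+p)/2$ for the density of $(\mu^*+G_{\#}\lambda)/2$,
\begin{equation*}
2L(G,D_G)=\int p^*\log\frac{p^*}{p^*+p}\,dx+\int p\log\frac{p}{p^*+p}\,dx
=\int p^*\log\frac{p^*}{2m}\,dx+\int p\log\frac{p}{2m}\,dx,
\end{equation*}
which equals $d_{KL}(\mu^*\Vert\tfrac{\mu^*+G_{\#}\lambda}{2})+d_{KL}(G_{\#}\lambda\Vert\tfrac{\mu^*+G_{\#}\lambda}{2})-2\log 2=2\,d_{JS}(\mu^*,G_{\#}\lambda)-2\log 2$, giving the claimed equality after dividing by $2$. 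The only genuinely delicate point is the careful bookkeeping on $\R^d\setminus[0,1]^d$, where $p$ may be positive but $p^*$ vanishes and $D_G$ is set to $0$: one must check that the pointwise inequality still holds there and that no $0\cdot\log 0$ ambiguity arises (it does not, since the $\log D(x)$ term is multiplied by $p^*(x)=0$ and is simply dropped by convention). Everything else is a routine concavity argument plus the definition of $d_{JS}$.
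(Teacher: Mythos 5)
Your proof is correct and follows the standard Goodfellow-style argument: rewrite $L(G,D)$ as a Lebesgue integral against $p^*$ and $p$, maximise $t\mapsto a\log t+b\log(1-t)$ pointwise to obtain $D_G$, then substitute back and match the definition of $d_{JS}$. The paper itself does not supply a proof but defers to the cited reference (Proposition~2.2 of~\cite{asatryan2020convenient}), which proceeds along essentially the same lines; the only non-routine point is the bookkeeping on $\R^d\setminus[0,1]^d$ where $p^*=0$ and $D_G=0$, and you handle that correctly using the convention $0\cdot\log 0=0$ and the non-positivity of $p(x)\log(1-D(x))$.
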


We next decompose the divergence $d_{JS}(\mu^*, (\hat G_n)_{\#}\lambda)$ between the reference measure $\mu^*$ and the (random) measure $(\hat G_n)_{\#}\lambda$ learned from the observation of $n\in \N$ samples $Y_i$, $i\in \{1,\ldots,n\}$, into four components: The generator error $\Delta_G=\inf_{G\in\Hgen}d_{JS}(\mu,G_{\#} \lambda)$ accounts for the expressiveness of the generator space $\Hgen$, the discriminator error $\Delta_D=\sup_{G\in\Hgen}\left\{ L(G,D_G)-\sup_{D\in \Hdis}L(G,D)\right\}$ measures how well the optimal discriminator $D_G$ can be approximated within $\Hdis$ uniformly in $\Hgen$, $\Delta_S(n)=\sup_{G\in\Hgen, D\in \Hdis}|L(G,D)-\hat L_n(G,D)|$ is the uniform statistical error for a sample size of $n$ and the training error $\Delta_T(n)=\sup_{D\in \Hdis} \hat L_n(\hat G_n,D)- \inf_{G\in\Hgen}\sup_{D\in \Hdis} \hat L_n(G,D)$ measures how well $\hat G_n$ minimizes the costs $\sup_{D\in \Hdis} \hat L_n(G,D)$ after training.

\begin{propo}\label{prop:error_decomp}
Let $\hat G_n (=\hat G_n(\omega))\in \Hgen$, $n\in \N$, $\omega\in \Omega$, be a random sequence of generators. Let
\begin{align*}
    \Delta_G&=\inf_{G\in\Hgen}d_{JS}(\mu^*,G_{\#} \lambda), & 
\Delta_D&=\sup_{G\in\Hgen}\left\{ L(G,D_G)-\sup_{D\in \Hdis}L(G,D)\right\},\\
\Delta_S(n)&=\sup_{G\in\Hgen, D\in \Hdis}|L(G,D)-\hat L_n(G,D)|, &
\Delta_T(n)&=\sup_{D\in \Hdis} \hat L_n(\hat G_n,D)- \inf_{G\in\Hgen}\sup_{D\in \Hdis} \hat L_n(G,D).
\end{align*}
Then it holds for all $n\in \mathbb N$ a.s.
$$
d_{JS}(\mu^*, (\hat G_n)_{\#}\lambda)\le \Delta_T(n)+
    \Delta_D+2\Delta_S(n)+\Delta_G.
$$

\end{propo}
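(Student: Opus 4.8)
The plan is to derive the bound by chaining together four elementary inequalities, one for each of the error terms $\Delta_T(n)$, $\Delta_D$, $\Delta_S(n)$, $\Delta_G$, using Lemma~\ref{lem:opt_discriminator} as the bridge between the Jensen--Shannon divergence and the loss functional $L$. Fix $\omega\in\Omega$ and $n\in\N$; all statements below are for this fixed realization, which accounts for the ``a.s.'' since $\hat G_n$ depends on $\omega$. First I would apply Lemma~\ref{lem:opt_discriminator} to $G=\hat G_n$ to get
$$
d_{JS}(\mu^*,(\hat G_n)_{\#}\lambda)=L(\hat G_n,D_{\hat G_n})+\log(2).
$$
Then I would insert $\sup_{D\in\Hdis}L(\hat G_n,D)$ as an intermediate quantity: by definition of $\Delta_D$ (taking the supremum over $G\in\Hgen$ and using $\hat G_n\in\Hgen$),
$$
L(\hat G_n,D_{\hat G_n})\le \sup_{D\in\Hdis}L(\hat G_n,D)+\Delta_D.
$$

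Next I would pass from the population loss $L$ to the empirical loss $\hat L_n$ using $\Delta_S(n)$. Since $\Delta_S(n)=\sup_{G\in\Hgen,D\in\Hdis}|L(G,D)-\hat L_n(G,D)|$, for every $D\in\Hdis$ we have $L(\hat G_n,D)\le \hat L_n(\hat G_n,D)+\Delta_S(n)$, and taking the supremum over $D\in\Hdis$ gives $\sup_{D\in\Hdis}L(\hat G_n,D)\le \sup_{D\in\Hdis}\hat L_n(\hat G_n,D)+\Delta_S(n)$. Now I would use the definition of the training error: $\sup_{D\in\Hdis}\hat L_n(\hat G_n,D)=\Delta_T(n)+\inf_{G\in\Hgen}\sup_{D\in\Hdis}\hat L_n(G,D)$. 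The remaining task is to bound $\inf_{G\in\Hgen}\sup_{D\in\Hdis}\hat L_n(G,D)$ by $\Delta_G$ (plus the remaining error terms and $-\log 2$). For this, for an arbitrary $G\in\Hgen$ I would go back from $\hat L_n$ to $L$ (another application of $\Delta_S(n)$, yielding the factor $2$ in front of $\Delta_S(n)$), then bound $\sup_{D\in\Hdis}L(G,D)\le L(G,D_G)$ by Lemma~\ref{lem:opt_discriminator} (which holds for all measurable $D$, hence in particular uniformly over $\Hdis$, and does \emph{not} require $D_G\in\Hdis$), and then use Lemma~\ref{lem:opt_discriminator} again to rewrite $L(G,D_G)=d_{JS}(\mu^*,G_{\#}\lambda)-\log(2)$. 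Taking the infimum over $G\in\Hgen$ turns $d_{JS}(\mu^*,G_{\#}\lambda)$ into $\Delta_G$.

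Collecting the chain: $d_{JS}(\mu^*,(\hat G_n)_{\#}\lambda) = L(\hat G_n,D_{\hat G_n})+\log 2 \le \Delta_D + \sup_{D\in\Hdis}L(\hat G_n,D) + \log 2 \le \Delta_D + \Delta_S(n) + \sup_{D\in\Hdis}\hat L_n(\hat G_n,D) + \log 2 = \Delta_D + \Delta_S(n) + \Delta_T(n) + \inf_{G\in\Hgen}\sup_{D\in\Hdis}\hat L_n(G,D) + \log 2 \le \Delta_D + 2\Delta_S(n) + \Delta_T(n) + \inf_{G\in\Hgen}(d_{JS}(\mu^*,G_{\#}\lambda)-\log 2) + \log 2 = \Delta_T(n) + \Delta_D + 2\Delta_S(n) + \Delta_G$, which is the claimed estimate.

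I do not expect a serious obstacle here; the argument is a standard oracle-type decomposition. The only points requiring a little care are: (i) making sure that each supremum/infimum is taken over the correct set so that the inequalities point the right way (in particular that Lemma~\ref{lem:opt_discriminator} can be applied with $D_G\notin\Hdis$, which the lemma explicitly allows), (ii) the measurability/integrability caveat that $L$ and $\hat L_n$ may take the value $-\infty$ (flagged in the footnote in the excerpt), so the inequalities should be read in $[-\infty,\infty)$ and no subtraction of two possibly-infinite quantities occurs in an ill-defined way --- in the step defining $\Delta_T(n)$ one should note $\inf_{G\in\Hgen}\sup_{D\in\Hdis}\hat L_n(G,D)$ is $\le \sup_{D\in\Hdis}\hat L_n(\hat G_n,D)$ so their difference $\Delta_T(n)$ is a well-defined element of $[0,\infty]$, and (iii) the ``a.s.'' qualifier, which is only needed because $\hat G_n$ is a random element of $\Hgen$; for each fixed $\omega$ the deterministic chain above applies verbatim.
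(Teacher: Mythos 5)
Your proof is correct and follows essentially the same chain of inequalities as the paper: applying Lemma~\ref{lem:opt_discriminator} at both ends, passing from $L$ to $\hat L_n$ and back via $\Delta_S(n)$ (hence the factor $2$), and inserting $\Delta_D$ and $\Delta_T(n)$ by their definitions. The additional remarks you make about measurability, the $-\infty$ values, and $D_G\notin\Hdis$ are sound and match what the paper implicitly relies on.
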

\begin{proof}
Lemma \ref{lem:opt_discriminator} ensures for all $n\in \mathbb N$ that
    \begin{equation}
    \begin{split}
        L(\hat G_n, D_{\hat G_n})
        &\le \Delta_D + \sup_{D\in \Hdis}L(\hat G_n,D)\\
        &\le \Delta_D+\Delta_S(n)+\sup_{D\in \Hdis}\hat L_n(\hat G_n,D)\\
        &= \Delta_D+\Delta_S(n)+\Delta_T(n) + \inf_{G\in\Hgen}\sup_{D \in \Hdis}\hat L_n(G,D) \\
        &\le \Delta_D+2\Delta_S(n)+\Delta_T(n)+\inf_{G\in\Hgen}\sup_{D \in \Hdis}L(G,D)\\
        &\le\Delta_D+2\Delta_S(n)+\Delta_T(n)+\inf_{G\in\Hgen}L(G,D_G).
    \end{split}
    \end{equation}
This and again Lemma \ref{lem:opt_discriminator} show that
\begin{align}
    d_{JS}(\mu, (\hat G_n)_{\#}\lambda)&=L(\hat G_n, D_{\hat G_n})+\log(2)\nonumber \\
    &\le 
\Delta_T(n)+\Delta_D+2\Delta_S(n)+\inf_{G\in\Hgen}L(G,D_G)+\log(2) \nonumber \\
    &=\Delta_T(n)+\Delta_D+2\Delta_S(n)+\Delta_G.
\end{align}

\end{proof}

In the following sections, we use Proposition \ref{prop:error_decomp} to show that $d_{JS}(\mu,(\hat G_n)_{\#}\lambda)$ can be reduced arbitrarily close to the limit training error $\limsup_{n\to \infty}\Delta_T(n)$ if the hypothesis spaces $\Hgen$ and $\Hdis$ are chosen as appropriate classes of neural networks. 

\subsection{Classes of neural networks}
In the next sections we restrict the hypothesis spaces $\Hgen$ and $\Hdis$ to classes of feedforward deep neural networks. We consider deep neural networks with either Rectified Linear Unit (ReLU) activation function $\sigma(x)=\max\{x,0\}$, $x\in \R$, or Rectified Cubic Unit (ReCU) activation function $\sigma(x)=(\max\{x,0\})^3$, $x\in \R$. In the sequel we denote the componentwise application of $\sigma$ also by $\sigma$, i.e., $\sigma(x)=(\sigma(x_1),\ldots,\sigma(x_d))$ for $x=(x_1,\ldots,x_d)\in \R^d$.

Given $L\in \N$ and $\mathcal A=(l_0,l_1,\ldots,l_L)\in \N^{L+1}$
a neural network $\Phi=((W_1,B_1),(W_2,B_2),\ldots,(W_L,B_L))$ is a $L+1$-tupel of pairs of weight matrices $W_k\in \R^{l_k\times l_{k-1}}$ and bias vectors $B_k\in \R^{l_k}$ which implements the function $f_{\Phi}\colon \R^{l_0}\to \R^{l_L}$ which is for all $x_0\in \R^{l_0}$ defined by $f_\Phi(x_0)=W_Lx_{L-1}+b_L$, where $x_k=\sigma(W_kx_{k-1}+B_k)$, $k\in \{1,\ldots,L-1\}$. 
Throughout this work, we often do not distinguish between a neural network and the function it represents. The tuple $\mathcal{A}$ is referred to as the \textit{network architecture}. The natural number $L$ denotes the number of layers in the neural network, commonly referred to as its \textit{depth}.

\subsection{The generator error $\Delta_G$}\label{sec:generator_error}

We first present some auxiliary results that we need to bound the generator error. 
\begin{lemma}\label{lem:JS_le_dens}
    Let $\mu$ and $\nu$ be two probability measures on $(\R^d, \mathcal B(\R^d))$ with densities $p$ and $q$ with respect to $\lambda$. Then
    $$
d_{JS}(\mu,\nu)\le \frac{\log(2)}{2}\int_{p+q>0} \frac{(p(x)-q(x))^2}{p(x)+q(x)}dx.
$$
\end{lemma}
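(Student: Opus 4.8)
The goal is to bound the Jensen–Shannon divergence $d_{JS}(\mu,\nu)$ by a weighted $L^2$-type distance between the densities. The plan is to work directly from the definition
$$
d_{JS}(\mu,\nu)=\tfrac12\Big(d_{KL}\big(\mu\,\big\Vert\,\tfrac{\mu+\nu}{2}\big)+d_{KL}\big(\nu\,\big\Vert\,\tfrac{\mu+\nu}{2}\big)\Big),
$$
write each Kullback–Leibler term as an integral against Lebesgue measure using the densities $p$, $q$ and $\tfrac{p+q}{2}$, and then estimate the integrand pointwise. Concretely, on the set $\{p+q>0\}$ we have
$$
d_{KL}\big(\mu\,\big\Vert\,\tfrac{\mu+\nu}{2}\big)=\int_{p+q>0} p(x)\log\!\Big(\tfrac{2p(x)}{p(x)+q(x)}\Big)\,dx,
$$
and symmetrically with $p$ and $q$ interchanged, so that
$$
d_{JS}(\mu,\nu)=\tfrac12\int_{p+q>0}\Big[p\log\tfrac{2p}{p+q}+q\log\tfrac{2q}{p+q}\Big]\,dx.
$$

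The key step is a pointwise inequality for the integrand. Writing $m=\tfrac{p+q}{2}$, I would use the elementary bound $\log t\le \tfrac{t-1}{\sqrt t}$ for $t>0$ (equivalently $\log t = 2\log\sqrt t \le 2(\sqrt t - 1) \le \tfrac{t-1}{\sqrt{t}}$ after the substitution; or one can use $\log t\le t-1$ combined with a Cauchy–Schwarz-type rearrangement). Applying $\log(p/m)\le \tfrac{p/m-1}{\sqrt{p/m}}$ gives $p\log(p/m)\le \sqrt{p}\,(p-m)/\sqrt m = \sqrt{p}\,\tfrac{p-q}{2\sqrt m}$, and similarly $q\log(q/m)\le \sqrt q\,\tfrac{q-p}{2\sqrt m}$. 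Summing these two, the integrand is bounded by $\tfrac{(p-q)(\sqrt p-\sqrt q)}{2\sqrt m}$. Then, since $|\sqrt p-\sqrt q|=\tfrac{|p-q|}{\sqrt p+\sqrt q}\le \tfrac{|p-q|}{\sqrt{p+q}}$ and $\sqrt m = \sqrt{(p+q)/2}$, one obtains the integrand $\le \tfrac{1}{\sqrt 2}\cdot\tfrac{(p-q)^2}{p+q}$. Inserting the factor $\tfrac12$ from the definition of $d_{JS}$ yields the bound $\tfrac{1}{2\sqrt2}\int_{p+q>0}\tfrac{(p-q)^2}{p+q}\,dx$, which is even slightly sharper than the claimed constant $\tfrac{\log 2}{2}$ since $\tfrac{1}{2\sqrt 2}<\tfrac{\log 2}{2}$; alternatively a cruder bound like $\log t\le t-1$ directly gives $p\log\tfrac{2p}{p+q}+q\log\tfrac{2q}{p+q}\le \tfrac{(p-q)^2}{p+q}$ (after noting cross terms telescope), and the residual room is absorbed into the constant.

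The main obstacle — and the only real subtlety — is handling the region where one density vanishes: on $\{p=0,\,q>0\}$ the term $p\log(2p/(p+q))$ should be read as $0$ (by the usual convention $0\log 0=0$, and indeed $t\log t\to 0$), while $q\log(2q/q)=q\log 2$, and one checks $q\log 2\le \tfrac{\log 2}{2}\cdot\tfrac{q^2}{q}=\tfrac{\log 2}{2}q$ is \emph{false} pointwise — so the clean pointwise inequality fails on this degenerate set and the constant $\tfrac{\log 2}{2}$ is genuinely needed (this is presumably why the authors chose it). I would therefore treat the degenerate set separately: there the integrand equals exactly $q\log 2$ (resp. $p\log 2$), and $\tfrac{(p-q)^2}{p+q}$ equals exactly $q$ (resp. $p$), so on that set the integrand is $\log 2$ times $\tfrac{(p-q)^2}{p+q}$, consistent with the claimed factor $\tfrac{\log 2}{2}$ after the outer $\tfrac12$. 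On the set $\{p>0,q>0\}$ the smooth estimate above applies with the better constant $\tfrac{1}{2\sqrt2}<\tfrac{\log 2}{2}$. Combining both regions gives the stated inequality. A minor technical point to state carefully is measurability and integrability of the integrand, which is immediate since $p,q$ are densities and the integrand is nonnegative, so the integral is well-defined in $[0,\infty]$ and the inequality holds even when the right-hand side is $+\infty$.
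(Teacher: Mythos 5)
The paper itself does not prove this lemma; it simply cites~\cite[Lemma 7]{belomestny2021rates}, so there is no in-paper argument to compare against. Your proposal, however, has a genuine gap in the key pointwise estimate, and the resulting constant comparison is also wrong.

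The inequality $\log t\le\frac{t-1}{\sqrt t}$ is false on $(0,1)$. The chain you propose, $\log t=2\log\sqrt t\le 2(\sqrt t-1)\le\frac{t-1}{\sqrt t}$, fails at the second step: writing $\frac{t-1}{\sqrt t}=(\sqrt t-1)\bigl(1+\tfrac{1}{\sqrt t}\bigr)$, for $t>1$ we have $1+\tfrac{1}{\sqrt t}<2$, so $\frac{t-1}{\sqrt t}<2(\sqrt t-1)$; for $t<1$ both factors reverse and the same strict inequality holds. Thus $2(\sqrt t-1)\ge\frac{t-1}{\sqrt t}$ for all $t>0$, with equality only at $t=1$ --- the opposite of what you need. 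One can still show $\log t\le\frac{t-1}{\sqrt t}$ for $t\ge 1$ directly (the difference has derivative $\frac{(\sqrt t-1)^2}{2t^{3/2}}\ge 0$), but it genuinely fails for $t<1$. Since $\tfrac{p}{m}+\tfrac{q}{m}=2$, one of the two ratios is always $<1$ (unless $p=q$), so one of the two applications of the inequality goes the wrong way, and the claimed pointwise bound
$$p\log\tfrac{p}{m}+q\log\tfrac{q}{m}\ \le\ \frac{(p-q)(\sqrt p-\sqrt q)}{2\sqrt m}$$
is simply false: at $p=\tfrac32$, $q=\tfrac12$ (so $m=1$) the left side is $\tfrac32\log\tfrac32+\tfrac12\log\tfrac12\approx 0.2616$ while the right side is $\tfrac12(\sqrt{3/2}-\sqrt{1/2})\approx 0.2588$.

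Independently of this, the concluding constant comparison is inverted: $\tfrac{1}{2\sqrt2}\approx 0.354$ is \emph{larger} than $\tfrac{\log 2}{2}\approx 0.347$, not smaller, so even if the interior estimate held it would not be ``slightly sharper than the claimed constant,'' and combining it with the boundary case would give overall constant $\tfrac{1}{2\sqrt2}$ rather than $\tfrac{\log 2}{2}$. Your fallback route via $\log t\le t-1$ is correct but only yields the weaker constant $\tfrac12$. To obtain the sharp $\tfrac{\log 2}{2}$ one needs a genuinely tighter pointwise bound, e.g.\ by normalizing $p+q=2$, setting $p=1+s$, $q=1-s$, and showing that $s\mapsto\bigl[(1+s)\log(1+s)+(1-s)\log(1-s)\bigr]/s^2$ is increasing on $(0,1]$ with supremum $2\log 2$ attained at $s=1$ (equivalently, that the two-point function underlying $d_{JS}$ is dominated by $\log 2$ times the symmetric chi-square integrand, with equality exactly on the set where one density vanishes). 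That is the substance missing from your argument.
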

\begin{proof}
    See, e.g.,~\cite[Lemma 7]{belomestny2021rates}.
\end{proof}

\begin{lemma}\label{lem:exis_bd_dens}
    Let $M\in (0,\infty)$ and $\phi\in C^2([0,1]^d,\R)$ satisfy $(\Hess \phi)(x)\ge \frac{1}{M} \Id$ for all $x\in [0,1]^d$. Then $\nabla \phi\colon [0,1]^d \to \nabla \phi([0,1]^d)$ is bijective and the probability measure $(\nabla \phi)_{\#}\lambda$ has the density
    $$
    p(x)=\frac{1}{|\det[(\Hess \phi)((\nabla \phi)^{-1}(x))]|}, \qquad x\in \nabla \phi([0,1]^d).
    $$
    Moreover, we have $p(x)\le M^d$ for all $x\in \nabla \phi([0,1]^d)$.
    If, additionally, it holds that $(\Hess \phi)(x)\le M \Id$, then we have $p(x)\ge M^{-d}$ for all $x\in \nabla \phi([0,1]^d)$.
\end{lemma}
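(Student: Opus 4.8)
The plan is to establish the four assertions in sequence: bijectivity of $\nabla\phi$ onto its image, the change-of-variables formula for the pushforward density, the upper bound $p\le M^d$, and the lower bound $p\ge M^{-d}$ under the additional hypothesis. The key tools are strong convexity, the inverse function theorem, and elementary eigenvalue estimates; nothing deep is involved, but the boundary of $[0,1]^d$ requires a little care.

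For bijectivity I would use strong convexity directly. Since $\phi\in C^2([0,1]^d,\R)$ and $[0,1]^d$ is convex, for any $x,y\in[0,1]^d$ the fundamental theorem of calculus applied to $t\mapsto\nabla\phi(y+t(x-y))$ along the segment $[x,y]\subseteq[0,1]^d$ yields
\[
\langle\nabla\phi(x)-\nabla\phi(y),x-y\rangle=\int_0^1\langle(\Hess\phi)(y+t(x-y))(x-y),x-y\rangle\,dt\ge\frac1M\|x-y\|^2 .
\]
Hence $\nabla\phi(x)=\nabla\phi(y)$ forces $x=y$, so $\nabla\phi$ is injective, and it is trivially surjective onto $\nabla\phi([0,1]^d)$; thus $\nabla\phi\colon[0,1]^d\to\nabla\phi([0,1]^d)$ is a bijection.

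For the density, note that $\nabla\phi$ is $C^1$ on $[0,1]^d$ with Jacobian $\Hess\phi$, which is everywhere symmetric and (by the lower bound) positive definite, hence invertible. By the inverse function theorem, $\nabla\phi$ restricted to the open cube $(0,1)^d$ is a $C^1$ diffeomorphism onto the open set $\nabla\phi((0,1)^d)$, while $\nabla\phi$ is Lipschitz on the compact set $[0,1]^d$, so $\nabla\phi(\partial[0,1]^d)$ is Lebesgue-null and $\nabla\phi([0,1]^d)$ agrees with $\nabla\phi((0,1)^d)$ up to a null set. Then for every Borel set $A$,
\[
\big((\nabla\phi)_{\#}\lambda\big)(A)=\lambda\big((\nabla\phi)^{-1}(A)\big)=\int_{(\nabla\phi)^{-1}(A)\cap(0,1)^d}dx=\int_A\mathbf{1}_{\nabla\phi((0,1)^d)}(y)\,\frac{1}{|\det(\Hess\phi)((\nabla\phi)^{-1}(y))|}\,dy ,
\]
the last equality being the classical change-of-variables formula with $y=\nabla\phi(x)$, which identifies the density as claimed ($\lambda$-a.e., hence on all of $\nabla\phi([0,1]^d)$ up to the null boundary). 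Finally, the two bounds are linear algebra: $(\Hess\phi)(x)$ is symmetric positive definite with $(\Hess\phi)(x)\ge\frac1M\Id$, so its eigenvalues lie in $[\frac1M,\infty)$ and $\det(\Hess\phi)(x)\ge M^{-d}>0$, giving $p(y)\le M^d$; if moreover $(\Hess\phi)(x)\le M\Id$, the eigenvalues lie in $[\frac1M,M]$, so $\det(\Hess\phi)(x)\le M^d$ and $p(y)\ge M^{-d}$.

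The only genuinely delicate point is the bookkeeping at $\partial[0,1]^d$: one must replace the possibly non-open set $\nabla\phi([0,1]^d)$ by the open diffeomorphic image $\nabla\phi((0,1)^d)$ up to a Lebesgue-null set, so that applying the $C^1$ change-of-variables formula is legitimate and the resulting density is well defined $\lambda$-a.e.; once this is granted, the remaining steps are immediate.
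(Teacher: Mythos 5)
Your argument is correct and follows essentially the same route as the paper's: injectivity via the integral identity for $\langle\nabla\phi(x)-\nabla\phi(y),x-y\rangle$ and strong convexity, the density via the $C^1$ change-of-variables theorem, and the bounds via the eigenvalue bounds on the determinant. The only difference is that the paper simply invokes "the change of variables theorem" without comment, whereas you spell out the reduction to the open cube and the fact that $\nabla\phi(\partial[0,1]^d)$ is Lebesgue-null (since $\nabla\phi$ is Lipschitz on a compact set), which is a legitimate and slightly more careful way to justify the same step.
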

\begin{proof}
    The claim that $\nabla \phi$ is injective follows e.g.\ from
    $$
\langle x-y,\nabla \phi(x)-\nabla \phi(y)\rangle=\int_0^1\langle x-y, [(\Hess \phi)(y+t(x-y))](x-y)\rangle dt\ge \frac{1}{M}\|x-y\|^2.
$$
The existence and the form of the density $p$ follows from the change of variables theorem.

The upper bound for $p$ follows from
$$
|\det[(\Hess \phi)((\nabla \phi)^{-1}(x))]|=\det[(\Hess \phi)((\nabla \phi)^{-1}(x))]\ge M^{-d}
$$
and the lower bound similarly.
\end{proof}

\begin{lemma}\label{lem:density_le_gen}
    Let $M\in (1,\infty)$ and $\phi_i\in C^2([0,1]^d,\R)$, $i\in \{1,2\}$, satisfy $\frac{1}{M} \Id \le (\Hess \phi_i)(x)\le M \Id $ for all $i\in \{1,2\}$, $x\in [0,1]^d$. Moreover, assume that $\phi_2\in C^{2,1}([0,1]^d,\R)$. Let 
    $$
    p_i(x)=\frac{1}{|\det[(\Hess \phi_i)((\nabla \phi_i)^{-1}(x))]|}, \qquad x\in \nabla \phi_i([0,1]^d), i\in \{1,2\}
    $$
    be the densities of $(\nabla \phi_1)_{\#}\lambda$ and $(\nabla \phi_2)_{\#}\lambda$, respectively (cf.\ Lemma \ref{lem:exis_bd_dens}). 
Then it holds that
$$
\|p_1-p_2\|_{L^\infty(\nabla \phi_1([0,1]^d) \cap \nabla \phi_2([0,1]^d))}\le C \|\phi_1-\phi_2\|_{C^2([0,1]^d)},
$$
where $C\in (0,\infty)$ only depends on $d$, $M$ and $\|\phi_2\|_{C^{2,1}([0,1]^d}$.
\end{lemma}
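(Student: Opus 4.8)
The plan is to fix a point $x$ in the overlap $\nabla\phi_1([0,1]^d)\cap\nabla\phi_2([0,1]^d)$, set $y_i:=(\nabla\phi_i)^{-1}(x)\in[0,1]^d$ (well defined by Lemma~\ref{lem:exis_bd_dens}), and estimate
$$
|p_1(x)-p_2(x)|=\Bigl|\,|\det \Hess\phi_1(y_1)|^{-1}-|\det\Hess\phi_2(y_2)|^{-1}\Bigr|.
$$
From $\frac1M\Id\le\Hess\phi_i\le M\Id$ the eigenvalues of each Hessian lie in $[1/M,M]$, so $\|\Hess\phi_i\|_2\le M$, every entry of $\Hess\phi_i$ is bounded in absolute value by $M$, and $M^{-d}\le\det\Hess\phi_i\le M^d$. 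Since $t\mapsto 1/t$ is Lipschitz with constant $M^{2d}$ on $[M^{-d},M^d]$, it suffices to bound $|\det\Hess\phi_1(y_1)-\det\Hess\phi_2(y_2)|$.

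I would then split this by the triangle inequality into $|\det\Hess\phi_1(y_1)-\det\Hess\phi_2(y_1)|+|\det\Hess\phi_2(y_1)-\det\Hess\phi_2(y_2)|$. The determinant, as a polynomial in the $d^2$ matrix entries, is Lipschitz (say in the Frobenius norm) with a constant $C(d,M)$ on the set of matrices with entries bounded by $M$. Hence the first term is at most $C(d,M)\,\|\Hess\phi_1(y_1)-\Hess\phi_2(y_1)\|_F\le C(d,M)\,d\,\|\phi_1-\phi_2\|_{C^2([0,1]^d)}$, since each entry of $\Hess(\phi_1-\phi_2)$ is a second-order partial derivative. For the second term the same Lipschitz property of the determinant together with $\phi_2\in C^{2,1}([0,1]^d,\R)$ (Lipschitz Hessian) gives the bound $C(d,M)\,\|\phi_2\|_{C^{2,1}([0,1]^d)}\,\|y_1-y_2\|$.

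It remains to control $\|y_1-y_2\|$, which I expect to be the main point. Here I would use the strong convexity of $\phi_2$: from $\Hess\phi_2\ge\frac1M\Id$ and the identity $\nabla\phi_2(y_2)=x=\nabla\phi_1(y_1)$,
$$
\tfrac1M\|y_1-y_2\|^2\le\langle y_1-y_2,\nabla\phi_2(y_1)-\nabla\phi_2(y_2)\rangle=\langle y_1-y_2,\nabla\phi_2(y_1)-\nabla\phi_1(y_1)\rangle\le\|y_1-y_2\|\,\|\nabla(\phi_2-\phi_1)(y_1)\|,
$$
so $\|y_1-y_2\|\le M\sqrt d\,\|\phi_1-\phi_2\|_{C^2([0,1]^d)}$. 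Substituting back and collecting constants — all depending only on $d$, $M$ and $\|\phi_2\|_{C^{2,1}([0,1]^d)}$ — yields $|p_1(x)-p_2(x)|\le C\,\|\phi_1-\phi_2\|_{C^2([0,1]^d)}$, and taking the supremum over $x$ in the overlap finishes the proof. The only mild subtleties are checking that the Hessian entries are genuinely bounded by $M$ (immediate from $\|\Hess\phi_i\|_2\le M$) and that $[M^{-d},M^d]$ is indeed the range of $\det\Hess\phi_i$ on which $1/t$ must be controlled.
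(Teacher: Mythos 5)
Your proof is correct, and it is self-contained, whereas the paper disposes of this lemma by citing \cite[Lemma 3]{belomestny2021rates} without giving an argument. Your decomposition — inverting $t\mapsto 1/t$ with Lipschitz constant $M^{2d}$ on $[M^{-d},M^d]$, splitting $\det\Hess\phi_1(y_1)-\det\Hess\phi_2(y_2)$ into a $\phi_1$-vs-$\phi_2$ term at the fixed point $y_1$ and a $y_1$-vs-$y_2$ term, and then controlling $\|y_1-y_2\|$ via the strong convexity of $\phi_2$ together with $\nabla\phi_2(y_2)=\nabla\phi_1(y_1)$ — is the natural route, and each step (Lipschitzness of $\det$ on the ball $\|A\|_2\le M$, entrywise bound $|A_{ij}|\le\|A\|_2$, the $C^{2,1}$ bound on $\Hess\phi_2$) is sound and yields constants depending only on $d$, $M$ and $\|\phi_2\|_{C^{2,1}([0,1]^d,\R)}$. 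The key observation that only $\phi_2$ needs the extra $C^{2,1}$ regularity (because $y_1$ is the anchor point and one must transport $\Hess\phi_2$, not $\Hess\phi_1$, from $y_1$ to $y_2$) is correctly exploited and explains the asymmetry in the lemma's hypotheses. This is presumably the same scheme as in the cited reference; either way, your argument stands on its own.
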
 
\begin{proof}
    See~\cite[Lemma 3]{belomestny2021rates}.
\end{proof}

\begin{propo}\label{propo:existence_nn}
Let $\alpha \in (0,1]$, $d,p\in \N$, $f\in C^{3,\alpha}([0,1]^d, \R^p)$. Then there exists a sequence of deep neural networks $h_n\in C^{2,1}([0,1]^d, \R^p)$, $n\in \N$, with ReCU activation such that 
$$
\|f-h_n\|_{C^{2,1}([0,1]^d,\R^p)}\to 0,
$$
as $n\to \infty$.
\end{propo}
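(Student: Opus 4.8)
I would factor the statement into two independent pieces. \emph{Piece (i):} every polynomial $q\colon\R^d\to\R^p$ is represented \emph{exactly} on $[0,1]^d$ by some $\mathtt{ReCU}$ network $h$ (whose restriction to $[0,1]^d$ is therefore $C^\infty$, in particular in $C^{2,1}([0,1]^d,\R^p)$). \emph{Piece (ii):} polynomials are dense in $C^3([0,1]^d,\R^p)$ for $\|\cdot\|_{C^3}$. Piece (ii) already settles the proposition: on the convex cube the mean value theorem applied on segments gives $\|g\|_{C^{2,1}([0,1]^d,\R^p)}\le(1+\sqrt d)\,\|g\|_{C^3([0,1]^d,\R^p)}$ for every $g\in C^3$, because each order‑$2$ partial $\nabla^\gamma g_i$ is Lipschitz with constant bounded by $\sqrt d\,\|g\|_{C^3}$. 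Hence, choosing polynomials $q_n\to f$ in $C^3$ (note that only $f\in C^3\supseteq C^{3,\alpha}$ is used here, so the Hölder exponent $\alpha$ does not enter this lemma) and representing each $q_n$ by a $\mathtt{ReCU}$ network $h_n$ with $h_n|_{[0,1]^d}=q_n|_{[0,1]^d}$, one obtains
$\|f-h_n\|_{C^{2,1}([0,1]^d,\R^p)}=\|f-q_n\|_{C^{2,1}([0,1]^d,\R^p)}\le(1+\sqrt d)\|f-q_n\|_{C^3}\to 0.$
It also clearly suffices to treat $p=1$ and stack the coordinate networks.

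For piece (ii) I would take $q_n=B_nf$, the tensor‑product Bernstein polynomial of $f$ on $[0,1]^d$. By the classical simultaneous approximation property of Bernstein operators, $\nabla^\gamma B_nf\to\nabla^\gamma f$ uniformly on $[0,1]^d$ for every multi‑index with $|\gamma|\le 3$ (all these partials being continuous since $f\in C^3$), so $\|f-B_nf\|_{C^3}\to0$. An equivalent route: extend $f$ to a compactly supported $C^3$ function on $\R^d$, mollify it to $f_\varepsilon\in C^\infty$ with $f_\varepsilon\to f$ in $C^3$, and invoke a multivariate Jackson theorem for simultaneous polynomial approximation of $f_\varepsilon$.

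For piece (i) I would first note that $\mathtt{ReCU}(x)=\max\{0,x\}^3\in C^{2,1}(\R)$: its second derivative is $6\max\{0,x\}$, which is $6$‑Lipschitz — this is precisely why a \emph{cubic} rather than a quadratic rectified unit is needed, since $\mathtt{ReQU}$ is only $C^{1,1}$. On a bounded interval $[-R,R]$ the map $t\mapsto t^2$ is represented \emph{exactly} by a one‑hidden‑layer $\mathtt{ReCU}$ network: choosing four distinct knots $a_1,\dots,a_4>R$ one has $\mathtt{ReCU}(t+a_i)=(t+a_i)^3$ for all $t\in[-R,R]$, and since the cubics $(t+a_i)^3$ are linearly independent (a Vandermonde determinant) they span all polynomials of degree $\le 3$, so $t^2=\sum_{i=1}^4 c_i\,\mathtt{ReCU}(t+a_i)$ on $[-R,R]$ for suitable coefficients $c_i$; this is the $\mathtt{ReCU}$ analogue of the identity $x^2=\mathtt{ReQU}(x)+\mathtt{ReQU}(-x)$ used for $\mathtt{ReQU}$ networks in~\cite{belomestny2023simultaneous}. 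Exact squaring yields exact multiplication of two bounded inputs via $uv=\tfrac14\big((u+v)^2-(u-v)^2\big)$ (the affine maps $u,v\mapsto u\pm v$ being absorbed into the input weights, and the knots of each gadget chosen beyond the a priori bound on the magnitude of its inputs — available since every intermediate quantity is a polynomial of coordinates in $[0,1]^d$). Composing multiplication gadgets evaluates any monomial, and the affine output layer then yields any polynomial, exactly on $[0,1]^d$; the network has width and depth growing with $\deg q_n$ but finite for each $n$. Setting $h_n$ to be such a network for $q_n=B_nf$ completes the argument.

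The step I expect to be the real content is the requirement that convergence be \emph{simultaneous up to the $C^{2,1}$ level}, i.e.\ that one also control the Lipschitz constant of the second derivatives, not merely values and derivatives up to order two. In the route above this is dispatched cleanly by the reduction $\|\cdot\|_{C^{2,1}}\le(1+\sqrt d)\|\cdot\|_{C^3}$ together with classical $C^3$ simultaneous polynomial approximation, so the only genuinely $\mathtt{ReCU}$‑specific work is the exact representation of $x^2$ on bounded domains — products of truncated cubics are cubic splines rather than quadratics, so the $\mathtt{ReQU}$ identity is unavailable and one must pass through finite differences. If explicit approximation rates were desired instead, I would mimic the construction of~\cite{belomestny2023simultaneous} in the spirit of~\cite{yarotsky2017error}: glue local degree‑$3$ Taylor polynomials of $f$ by a $\mathtt{ReCU}$ partition of unity. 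Then the obstacle becomes the familiar competition between the blow‑up $|\nabla^k\psi|\sim h^{-k}$ of the partition functions and the high‑order agreement $|\nabla^m(P_i-P_j)|\sim h^{3+\alpha-m}$ of neighbouring local polynomials on the overlaps, which balances to give $O(h^\alpha)$ control of the (weak) third derivative, hence of the $C^{2,1}$‑seminorm, and this is where the Hölder exponent $\alpha$ would finally be used.
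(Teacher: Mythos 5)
Your proof is correct, but it follows a genuinely different route from the paper's. The paper adapts the ReQU construction of Belomestny et al.\ to ReCU in three steps: it exhibits a \emph{global} exact multiplication identity built from the odd part $\sigma(x)-\sigma(-x)=x^3$ of the ReCU activation (a polarization formula yielding $24xy$ as a sum of eight shifted $\sigma$'s, valid on all of $\R^2$), plus an exact representation of the identity map; it then represents B\nobreakdash-splines of order $3$ exactly and recurses, via the B\nobreakdash-spline recursion formula, to all higher orders; and finally it invokes a simultaneous approximation theorem for multivariate splines in H\"older norms. You instead factor through dense polynomial approximation: tensor-product Bernstein operators give simultaneous $C^3$ convergence, the mean-value estimate $\|\cdot\|_{C^{2,1}}\le(1+\sqrt d)\|\cdot\|_{C^3}$ on the convex cube transfers this to $C^{2,1}$, and exact polynomial representation by ReCU is obtained from a bounded-domain squaring gadget ($t^2$ as a linear combination of four non-truncated cubics $(t+a_i)^3$ with knots $a_i$ pushed beyond the domain, plus the polarization identity $uv=\tfrac14((u+v)^2-(u-v)^2)$). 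Both arguments are valid; yours is more elementary (no spline machinery) and makes transparent the true hypothesis needed — as you correctly observe, $f\in C^3$ already suffices and the H\"older exponent $\alpha$ plays no role for this qualitative statement. What your route gives up is quantitative rates in network size: the paper's spline construction, though overkill for the present existence claim, is the one that would be needed for the non-asymptotic bounds in~\cite{belomestny2023simultaneous}, and your closing remark correctly identifies that $\alpha$ would only re-enter in that quantitative version, via the overlap estimates for local Taylor polynomials. One small stylistic mismatch: the paper's multiplication gadget works globally, so it never needs your a priori bounds on intermediate quantities; your bounded-domain variant works but requires the (harmless) bookkeeping of propagating range bounds through the circuit.
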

\begin{proof}
The result can be established using the approach outlined in~\cite{belomestny2023simultaneous}, where, additionally, non-asymptotic convergence rates in Hölder norms for deep neural networks with Rectified Quadrat Unit (ReQU) activation, $\sigma(x)=(\max(0,x))^2$, $x\in \R$, are derived. We only sketch the modifications necessary to cover the situation of a ReCU instead of the ReQU activation here. First, note that also $\sigma(x)=(\max(0,x))^3$, $x\in \R$, can represent multiplication exactly. Indeed, the fact that for all $x\in \R$ we have $\sigma(x)-\sigma(-x)=x^3$ implies for all $x,y\in \R$ that 
\begin{multline*}
        24xy=\sigma(x+y+1)-\sigma(-(x+y+1))+\sigma(-(x+y)+1)-\sigma(x+y-1)\\
        -\sigma(x-y+1)+\sigma(-x+y-1)-\sigma(-x+y+1)+\sigma(x-y-1).
    \end{multline*} 
Moreover, since for all $x\in \R$ it holds that
$$
x=\sigma(x+1/\sqrt{6})-\sigma(-x-1/\sqrt{6})+\sigma(x-1/\sqrt{6})-\sigma(1/\sqrt{6}-x)-2\sigma(x)+2\sigma(-x)
$$
it follows that
the real identity can be represented by the ReCU activation.
Furthermore, note that by~\cite[Theorem 4.32 and Eq. (4.46)]{schumaker2007spline} any B-spline of order $3$ can be exactly represented by a deep neural network with ReCU activation. This together with the recursive definition of B-Splines, see e.g.~\cite[Theorem 4.15]{schumaker2007spline}, allows to proceed as in~\cite[Lemma 3]{belomestny2023simultaneous} to show that B-splines of arbitrary order (higher than $3$) can be exactly represented by deep neural networks with ReCU activation. This together with an approximation result for multivariate splines in Hölder norms as e.g.~\cite[Theorem 3]{belomestny2023simultaneous} yields the claim. 
\end{proof}

We next present the main result of this section. For this result, we rely on the regularity Assumption \ref{assumption:regularity} on the density $p^*$ of $\mu^*$ and we will regularly build upon the following remark. 

\begin{remark}\label{rmk:strong_convexity}
    According to Caffarelli's regularity theory~\cite[Theorem 12.50]{villani2008optimal}, if the source and target densities are in $C^{1, \alpha}([0,1]^d, \R)$, then the Brenier potential $\phi^*$ satisfies $\phi^* \in C^{3, \alpha}([0,1]^d, \R)$. In addition, under these same assumptions, the Brenier potential is strongly convex~\cite{caffarelli1996boundary}, which, together with $\phi^* \in C^{3, \alpha}([0,1]^d, \mathbb{R})$, implies that there exists a constant $M\in (1, \infty)$ such that $\frac{1}{M} \mathrm{Id} \leq (\Hess \phi^*)(x) \leq M \, \mathrm{Id}$ for all $x \in [0,1]^d$.
\end{remark}

\begin{propo}\label{prop:main_gen}
Suppose Assumption \ref{assumption:regularity} is satisfied. Let
$M\in (1,\infty)$, $\phi^* \in C^{3,\alpha}([0,1]^d,\R)$ and let $\mu^*=(\nabla \phi^*)_{\#}\lambda$. Let $M>0$ such that $\frac{1}{M}\Id \le (\Hess  \phi^*)(x)\le M \Id$ for all $x\in [0,1]^d$.
 Then for every $\epsilon>0$ there exists a deep neural network $\phi\colon [0,1]^d \to \R$ with ReCU activation such that $\phi\in C^{2,1}([0,1]^d,\R)$, $\frac{1}{2M}\Id \le(\Hess  \phi)(x)\le 2M \Id$ for all $x\in [0,1]^d$
 and $d_{JS}(\mu^*,(\nabla \phi)_{\#}\lambda)\le \epsilon$.
\end{propo}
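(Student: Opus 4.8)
The plan is to approximate the Brenier potential $\phi^*$ in $C^{2,1}$-norm by ReCU networks, to convert this (on the overlap of the two gradient images) into a uniform bound on the generated density via Lemmas~\ref{lem:exis_bd_dens} and \ref{lem:density_le_gen}, to control what happens off the overlap by a volume estimate, and finally to plug everything into Lemma~\ref{lem:JS_le_dens}. First I would invoke Remark~\ref{rmk:strong_convexity} to get $\phi^*\in C^{3,\alpha}([0,1]^d,\R)$ and then Proposition~\ref{propo:existence_nn} (with $p=1$) to obtain ReCU networks $h_n\in C^{2,1}([0,1]^d,\R)$ with $\|\phi^*-h_n\|_{C^{2,1}([0,1]^d,\R)}\to 0$; in particular $\delta_n:=\|\nabla\phi^*-\nabla h_n\|_{C^0([0,1]^d)}\to 0$ and $\Hess h_n\to\Hess\phi^*$ uniformly. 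Since $M>1$ and the supremum of the entries controls the spectral norm of a symmetric matrix up to a dimensional factor, for all large $n$ the bound $\tfrac{1}{2M}\Id\le(\Hess h_n)(x)\le 2M\,\Id$ holds for all $x\in[0,1]^d$. I fix such an $n$ (to be constrained further below) and set $\phi:=h_n$, $\Omega^*:=\nabla\phi^*([0,1]^d)$, $\Omega_n:=\nabla\phi([0,1]^d)$. By Lemma~\ref{lem:exis_bd_dens}, $\nabla\phi^*$ and $\nabla\phi$ are bijections onto $\Omega^*$ and $\Omega_n$, and $\mu^*=(\nabla\phi^*)_{\#}\lambda$, $(\nabla\phi)_{\#}\lambda$ have densities $p^*$, $p_n$ with $M^{-d}\le p^*\le M^d$ on $\Omega^*$ and $(2M)^{-d}\le p_n\le(2M)^d$ on $\Omega_n$.

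Next I would apply Lemma~\ref{lem:JS_le_dens} to $\mu^*$ and $(\nabla\phi)_{\#}\lambda$ and split the resulting integral of $\frac{(p^*-p_n)^2}{p^*+p_n}$ over the three sets $\Omega^*\cap\Omega_n$, $\Omega^*\setminus\Omega_n$ and $\Omega_n\setminus\Omega^*$ (off $\Omega^*\cup\Omega_n$ the integrand vanishes). On the overlap, $p^*+p_n\ge(2M)^{-d}$, and Lemma~\ref{lem:density_le_gen}, applied with the constant $2M$ and using $\phi^*\in C^{2,1}([0,1]^d,\R)$, gives $\|p^*-p_n\|_{L^\infty(\Omega^*\cap\Omega_n)}\le C\|\phi^*-\phi\|_{C^2}$ with $C$ independent of $n$; hence this contribution is at most $(2M)^d C^2\,\mathrm{vol}(\Omega^*)\,\|\phi^*-\phi\|_{C^2}^2$. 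On $\Omega^*\setminus\Omega_n$ we have $p_n=0$, so the integrand equals $p^*\le M^d$; on $\Omega_n\setminus\Omega^*$ it equals $p_n\le(2M)^d$. Thus the last two contributions are bounded by a constant (depending only on $d,M$) times $\mathrm{vol}(\Omega^*\triangle\Omega_n)$, and the problem reduces to showing $\mathrm{vol}(\Omega^*\triangle\Omega_n)\to 0$ as $n\to\infty$.

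This volume estimate is the step I expect to be the main obstacle, since Lemma~\ref{lem:density_le_gen} says nothing away from the overlap. Here I would use that the two-sided Hessian bounds make $\nabla\phi^*$ and $\nabla\phi$ bi-Lipschitz on $[0,1]^d$ with Lipschitz constant $\le 2M$ (uniformly in $n$), so $\Omega^*,\Omega_n$ are compact and $\partial\Omega^*\subseteq\nabla\phi^*(\partial[0,1]^d)$, $\partial\Omega_n\subseteq\nabla\phi(\partial[0,1]^d)$, i.e.\ their boundaries lie inside Lipschitz images of $\partial[0,1]^d$ (a finite union of $(d-1)$-dimensional faces). Writing $A_\delta$ for the $\delta$-neighbourhood of $A$ in $\R^d$, from $\|\nabla\phi^*-\nabla\phi\|_{C^0}=\delta_n$ one gets $\Omega^*\subseteq(\Omega_n)_{\delta_n}$ and $\Omega_n\subseteq(\Omega^*)_{\delta_n}$; moreover, if $x\in\Omega^*\setminus\Omega_n$, then the segment joining $x$ to a nearest point of the closed set $\Omega_n$ (at distance $\le\delta_n$) must meet $\partial\Omega_n$ within distance $\delta_n$ of $x$, so $\Omega^*\setminus\Omega_n\subseteq(\partial\Omega_n)_{\delta_n}\subseteq(\nabla\phi(\partial[0,1]^d))_{\delta_n}$, and symmetrically $\Omega_n\setminus\Omega^*\subseteq(\nabla\phi^*(\partial[0,1]^d))_{\delta_n}$. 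Finally, a $\delta$-neighbourhood of a $2M$-Lipschitz image of $[0,1]^{d-1}$ is covered by $O_d(\delta^{-(d-1)})$ balls of radius $(2M+1)\delta$, hence has volume $\le C'\delta$; summing over the finitely many faces gives $\mathrm{vol}(\Omega^*\triangle\Omega_n)\le C'\delta_n$ with $C'=C'(d,M)$ independent of $n$. Collecting the three contributions yields $d_{JS}(\mu^*,(\nabla\phi)_{\#}\lambda)\le C''\bigl(\|\phi^*-\phi\|_{C^2}^2+\delta_n\bigr)$ with $C''$ independent of $n$, and since both terms tend to $0$ I would pick $n$ large enough that this is $\le\epsilon$ (and that the Hessian bound holds), which produces the required network $\phi$.

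As a side remark, the tube-volume estimate could be traded for a pointwise convergence argument: for a.e.\ $x\in\Omega^*$ one has $x=\nabla\phi^*(z)$ with $z\in\inte[0,1]^d$, and a Brouwer-degree argument (using $\deg(\nabla h_n,B(z,r),x)=\deg(\nabla\phi^*,B(z,r),x)=1$ for large $n$) produces $z_n\to z$ with $\nabla h_n(z_n)=x$, whence $p_n(x)\to p^*(x)$; outside $\Omega^*$ one trivially has $p_n\to 0=p^*$, so $p_n\to p^*$ a.e. Scheffé's lemma then gives $\|p_n-p^*\|_{L^1}\to 0$, and the elementary inequality $\frac{(a-b)^2}{a+b}\le|a-b|$ for $a,b\ge 0$ turns Lemma~\ref{lem:JS_le_dens} into $d_{JS}(\mu^*,(\nabla\phi)_{\#}\lambda)\le\tfrac12\|p_n-p^*\|_{L^1}\to 0$.
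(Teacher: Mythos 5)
Your proposal is correct and reaches the same conclusion, but it handles the contribution from where the two supports disagree by a genuinely different argument than the paper. Both proofs begin identically (Proposition~\ref{propo:existence_nn} to get ReCU approximants in $C^{2,1}$, Hessian sandwich bounds via the uniform convergence of $\Hess\phi_n$, and Lemmas~\ref{lem:exis_bd_dens}, \ref{lem:density_le_gen}, \ref{lem:JS_le_dens} for the overlap). The divergence is in controlling $\int_{A\setminus[0,1]^d}p$ and $\int_{[0,1]^d\setminus A}p^*$. The paper exploits the algebraic identity
$\int_{[0,1]^d\setminus A}p^* + \int_{A\setminus[0,1]^d}p = \int_{[0,1]^d\cap A}(p-p^*) + 2\int_{A\setminus[0,1]^d}p$
(using $\int p = \int p^* = 1$), so that only the single term $\int_{A\setminus[0,1]^d}p = 1-\int_{[0,1]^d}\mathbbm{1}_{[0,1]^d}(\nabla\phi_n(x))\,dx$ remains to be controlled, which it does by a dominated-convergence argument on the characteristic functions. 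Your route instead estimates the Lebesgue volume of the symmetric difference $\Omega^*\triangle\Omega_n$ directly, by showing it is trapped in a $\delta_n$-tube around $\nabla\phi(\partial[0,1]^d)\cup\nabla\phi^*(\partial[0,1]^d)$, and bounding that tube's volume via the uniform $2M$-Lipschitz bound on the gradient maps (a covering argument on the faces of $\partial[0,1]^d$). This is geometrically explicit and gives a quantitative rate $\mathrm{vol}(\Omega^*\triangle\Omega_n)\lesssim\delta_n$, at the cost of some additional topological bookkeeping (invariance of domain to identify $\partial\Omega_n$ with $\nabla\phi(\partial[0,1]^d)$). The paper's manipulation is slicker precisely because it never needs a volume estimate on the symmetric difference. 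Your side remark (Brouwer degree to get $p_n\to p^*$ a.e., then Scheff\'e's lemma and $\frac{(a-b)^2}{a+b}\le|a-b|$) is also a valid, and arguably the cleanest, third route; it does require checking that $\nabla\phi^*(\partial[0,1]^d)$ and $\partial\Omega^*$ are Lebesgue-null, which your bi-Lipschitz observation already gives.
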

\begin{proof}
Let $\epsilon >0$.
    By Proposition \ref{propo:existence_nn} there exists a sequence $\phi_n \in C^{2,1}([0,1]^d,\R)$, $n\in \N$, of deep neural networks with ReCU activation such that 
\begin{equation}\label{eq:conv_dnn_G}
    \|\phi^*-\phi_n\|_{C^{2,1}([0,1]^d,\R)}\to 0.
\end{equation}
as $n\to \infty$.
In particular, it holds that $\phi_n\in C^{2,1}([0,1]^d,\R)\subseteq C^2([0,1]^d,\R)$ for all $n\in \N$. 

By \eqref{eq:conv_dnn_G} the sequence $\nabla \phi_n$, $n\in \N$, converges pointwise to $\nabla \phi^*$. 
This implies that for all $x\in [0,1]^d$ with $\nabla \phi^*(x)\in (0,1)^d$ we have for all sufficiently large $n\in \N$ that $\nabla \phi_n(x)\in [0,1]^d$. Dominated convergence thus ensures that
$$
\lim_{n\to \infty}\int_{[0,1]^d} \mathbbm{1}_{(0,1)^d}(\nabla \phi^*(x)) \mathbbm{1}_{[0,1]^d}(\nabla \phi_n(x)) dx
=\int_{[0,1]^d} \mathbbm{1}_{(0,1)^d}(\nabla \phi^*(x))  dx = \mu^*((0,1)^d)=1.
$$
Hence we can choose $n\in \N$ large enough so that 
\begin{equation}\label{eq:n_large_1}
\int_{[0,1]^d} \mathbbm{1}_{[0,1]^d}(\nabla \phi_n(x))dx \ge 1-\frac{\epsilon}{2\log(2)}.
\end{equation}
Moreover, \eqref{eq:conv_dnn_G} ensures that by potentially increasing $n$ we have 
\begin{equation}\label{eq:n_large_2}
\sup_{x\in [0,1]^d}\|(\Hess \phi^*)(x)-(\Hess \phi_n)(x)\|_2\le \frac{1}{2M}.
\end{equation}
Finally, again by \eqref{eq:conv_dnn_G} and by potentially increasing $n$ once more we have 
\begin{equation}\label{eq:n_large_3}
\|\phi^*-\phi_n\|_{C^2([0,1]^d)}\le \frac{\epsilon}{2\log(2)C},    
\end{equation}
where $C$ is the constant from Lemma \ref{lem:density_le_gen} (with $M$ replaced by $1/(2M)$ and $\|\phi_2\|_{C^{2,1}([0,1]^d)}$ replaced by $\|\phi^*\|_{C^{2,1}([0,1]^d)}$). In the remainder of the proof we write $\phi=\phi_{n}$.

Note that \eqref{eq:n_large_2} and the assumption that $\frac{1}{M}\Id \le(\Hess  \phi^*)(x)\le M \Id$ ensure for all $x\in [0,1]^d$, $v\in \R^d$
\begin{align*}
        v^T[(\Hess \phi)(x)]v&=v^T[(\Hess \phi^*)(x)]v+v^T[(\Hess \phi)(x)-(\Hess \phi^*)(x)]v\\ &\le (M+\|(\Hess \phi^*)(x)-(\Hess \phi)(x)\|_2)\|v\|^2 \\
&\le \left(M+\frac{1}{2M}\right)\|v\|^2 \\&\le 2M\|v\|^2 ,
\end{align*}
and
\begin{align*}
        v^T[(\Hess \phi)(x)]v&=v^T[(\Hess \phi^*)(x)]v+v^T[(\Hess \phi)(x)-(\Hess \phi^*)(x)]v\\ &\ge \left(\frac{1}{M}-\|(\Hess \phi^*)(x)-(\Hess \phi)(x)\|_2\right)\|v\|^2 \\
&\ge \left(\frac{1}{M}-\frac{1}{2M}\right)\|v\|^2 \\&= \frac{1}{2M}\|v\|^2.
\end{align*}
Hence, for all $x\in [0,1]^d$ we have $\frac{1}{2M}\Id \le(\Hess  \phi)(x)\le 2M \Id$. 

Denote by $p^*$ and $p$ the densities of $(\nabla \phi^*)_{\#}\lambda$ and $(\nabla \phi)_{\#}\lambda$, respectively (cf. Lemma \ref{lem:exis_bd_dens}) and let $A=(\nabla \phi)([0,1]^d)\subseteq \R^d$.
Note that Lemma \ref{lem:JS_le_dens} ensures that 
\begin{equation}\label{eq:aux_gen_error_1}
\begin{split}
    d_{JS}(\mu^*,(\nabla \phi)_{\#}\lambda)&\le \frac{\log(2)}{2}\int_{p^*+p>0} \frac{(p^*(x)-p(x))^2}{p^*(x)+p(x)}dx
\\&
=\frac{\log(2)}{2}\left(\int_{[0,1]^d \cap A} \frac{(p^*(x)-p(x))^2}{p^*(x)+p(x)}dx+\int_{[0,1]^d\setminus A} p^*(x)dx+\int_{A\setminus [0,1]^d} p(x)dx\right)
\\&=
\frac{\log(2)}{2}\left(\int_{[0,1]^d \cap A} \frac{(p^*(x)-p(x))^2}{p^*(x)+p(x)}dx+
\int_{[0,1]^d\cap A} (p(x)-p^*(x))dx
+2\int_{A\setminus [0,1]^d} p(x)dx\right)\\
&\le \log(2)\left(
\int_{[0,1]^d\cap A} |p(x)-p^*(x)|dx
+\int_{A\setminus [0,1]^d} p(x)dx\right).
\end{split}
\end{equation}
Note that \eqref{eq:n_large_1} ensures that
\begin{equation}\label{eq:aux_gen_error_2}
    \int_{A\setminus [0,1]^d} p(x)dx=1-\int_{[0,1]^d\cap A} p(x)dx
=1-\int_{[0,1]^d} \mathbbm{1}_{[0,1]^d}(\nabla \phi(x)) dx \le \frac{\epsilon}{2\log(2)}.
\end{equation}
Moreover, Lemma \ref{lem:density_le_gen} and \eqref{eq:n_large_3} ensure that
\begin{equation}\label{eq:aux_gen_error_3}
 \int_{[0,1]^d\cap A} |p(x)-p^*(x)|dx\le 
\|p^*-p\|_{L^\infty([0,1]^d \cap A) }\le C \|\phi^*-\phi\|_{C^2([0,1]^d)}\le \frac{\epsilon}{2\log(2)}.   
\end{equation}
Combining \eqref{eq:aux_gen_error_1}, \eqref{eq:aux_gen_error_2} and \eqref{eq:aux_gen_error_3} proves that $d_{JS}(\mu^*,(\nabla \phi)_{\#}\lambda)\le \epsilon$ and completes the proof.
\end{proof}

\subsection{The discriminator error $\Delta_D$}\label{sec:discriminator_error}

\begin{assumption}\label{assump:generator}
Suppose Assumption \ref{assumption:regularity} is satisfied. Let $\phi^* \in C^{3,\alpha}([0,1]^d,\R)$ be the Brenier potential, such that $\mu^*=(\nabla \phi^*)_{\#}\lambda$. Let $M\in (1,\infty)$ such that $\frac{1}{M}\Id \le (\Hess  \phi^*)(x)\le M \Id$ for all $x\in [0,1]^d$ (see Remark \ref{rmk:strong_convexity}).
  
  We denote for all $\epsilon>0$ by $\mathcal A^\phi(\epsilon)$ the architecture of the neural network $\phi$ with ReCU activation function constructed in Proposition \ref{prop:main_gen} and by $H=\|\phi\|_{C^{2,1}([0,1]^d,\R)}$ its $C^{2,1}$-norm. 
  
  We assume that for all $\epsilon>0$ there exist $\tilde M\ge 2M$ and $\tilde H\ge H$ such that
  the hypothesis space $\Hgen(\epsilon)$
  of generators consists of all functions $G\colon [0,1]^d\to \R^d$ such that there exists a neural network $\phi\colon [0,1]^d \to \R$ with ReCU activation function\footnote{Note that every neural network with ReCu activation is automatically of regularity $C^{2,1}$.} and neural network architecture $\mathcal A^\phi(\epsilon)$ such that $G=\nabla \phi$, $\|\phi\|_{C^{2,1}([0,1]^d,\R)}\le \tilde H$ and $\frac{1}{\tilde M}\Id \le(\Hess  \phi)(x)\le \tilde M \Id$ for all $x\in [0,1]^d$.

\end{assumption}

In particular, we have under Assumption \ref{assump:generator} for all $\epsilon>0$ that $\Hgen(\epsilon)$ contains the function $G=\nabla \phi$ from Proposition~\ref{prop:main_gen} and consequently
$$\inf_{G\in\Hgen(\epsilon)}d_{JS}(\mu^*,G_{\#}\lambda)\le \epsilon.$$

We next turn to the discriminator error. 

\begin{propo}\label{prop:main_disc}
Suppose that Assumptions \ref{assumption:regularity} and \ref{assump:generator} are satisfied. 
Then there exist for every $\epsilon>0$ a network architecture $\mathcal A^D(\epsilon)$ and $D_{\min}(\epsilon)\in (0,1/2)$ such that for all $G\in\Hgen(\epsilon)$ there exists $D=D(G)\colon \R^d\to [0,1]$ 
such that $D|_{\R^d\setminus[0,1]^d}=0$, $D|_{[0,1]^d}$ is a neural network with ReLU activation function and network architecture $\mathcal A^D(\epsilon)$ and we have $D|_{[0,1]^d}\colon [0,1]^d\to [D_{\min}(\epsilon),1-D_{\min}(\epsilon)]$ and
    $$
    L(G,D_G)-L(G,D)\le \epsilon.
    $$
Moreover, $D|_{[0,1]^d}$ is Lipschitz continuous with Lipschitz constant which depends on $\epsilon>0$ but is independent of $G\in\Hgen(\epsilon)$.
\end{propo}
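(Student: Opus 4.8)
\emph{Overview.} The plan is to approximate the optimal discriminator $D_G|_{[0,1]^d}$ by a clamped ReLU network, uniformly over $G\in\Hgen(\epsilon)$, after smoothing out its only discontinuity. Fix $\epsilon>0$ and $G=\nabla\phi\in\Hgen(\epsilon)$ and set $A=A_G:=\nabla\phi([0,1]^d)$. By Lemma~\ref{lem:exis_bd_dens} (with $\tilde M$ in place of $M$), $\nabla\phi$ is a bijection of $[0,1]^d$ onto $A$ with $\tilde M$-Lipschitz inverse (strong convexity), the density $p$ of $G_\#\lambda$ equals $1/|\det(\Hess\phi)((\nabla\phi)^{-1}(\cdot))|\in[\tilde M^{-d},\tilde M^{d}]$ on $A$ and vanishes off $A$, and hence on $[0,1]^d$ one has $D_G=1-g_G\,\mathbbm{1}_A$ with $g_G=p/(p^*+p)$ on $A$. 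Extending $(\nabla\phi)^{-1}$ to a Lipschitz map of $\R^d$ into $[0,1]^d$ (McShane extension composed with the projection onto the convex set $[0,1]^d$) and composing it with the Lipschitz map $\Hess\phi$ on $[0,1]^d$ and with $S\mapsto 1/|\det S|$ — which is Lipschitz and $\ge\tilde M^{-d}$ on $\{\tilde M^{-1}\Id\le S\le\tilde M\Id\}$ — one extends $g_G$ to a function on $[0,1]^d$ that is Lipschitz with constant $L_g(\epsilon)$ and takes values in some fixed $[c_{\min},c_{\max}]\subset(0,1)$, with $L_g,c_{\min},c_{\max}$ depending only on $d,\tilde M,\tilde H,\|p^*\|_{C^{1,\alpha}}$ and $\min_{[0,1]^d}p^*>0$, hence uniform over $\Hgen(\epsilon)$. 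Also $A\subseteq \cl(B(0,\tilde H))$ since $\|\nabla\phi\|_{C^0}\le\|\phi\|_{C^{2,1}}\le\tilde H$, and $D_G\ge 1-c_{\max}>0$ on all of $[0,1]^d$.

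\emph{Step 1: the jump manifold and its tube.} The only discontinuity of $D_G|_{[0,1]^d}$ is the jump of $\mathbbm{1}_A$ across $\partial A$. By invariance of domain $\nabla\phi((0,1)^d)$ is open and contained in $\inte(A)$, so $\partial A\subseteq\nabla\phi(\partial([0,1]^d))$; since $\nabla\phi$ is $\tilde M$-Lipschitz and $\partial([0,1]^d)$ consists of $2d$ unit $(d-1)$-faces, covering $\nabla\phi(\partial([0,1]^d))$ by the $\nabla\phi$-images of a $\delta/\tilde M$-net yields $|N_\delta(\partial A)|\le C_1(\epsilon)\,\delta$ with $C_1$ independent of $G$ (this is the volume bound of the introduction, which can also be phrased via Weyl's tube theorem for the $C^{1,1}$ hypersurface $\partial A$). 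I would then replace $\mathbbm{1}_A$ by the $\delta^{-1}$-Lipschitz cutoff $\psi_G=\max\{0,1-\delta^{-1}\mathrm{dist}(\cdot,A)\}$ (equal to $1$ on $A$, to $0$ off $N_\delta(A)$) and set $\bar h_G:=1-g_G\psi_G$ on $[0,1]^d$: it is Lipschitz with constant $L_{\bar h}(\epsilon,\delta)=L_g+c_{\max}\delta^{-1}$, takes values in $[1-c_{\max},1]$, and, since $N_\delta(A)\setminus A\subseteq N_\delta(\partial A)$, agrees with $D_G$ on $[0,1]^d\setminus N_\delta(\partial A)$.

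\emph{Step 2: the network and the loss estimate.} Apply the quantitative ReLU universal approximation property (Yarotsky~\cite{yarotsky2017error}) to $\bar h_G$: there is an architecture $\mathcal A^D(\epsilon)$ depending only on $d$, $L_{\bar h}$ and a target accuracy $\delta'$ — hence only on $\epsilon$ — admitting, for each $G$, bounded weights with $\|\tilde D_G-\bar h_G\|_{L^\infty([0,1]^d)}\le\delta'$ and $\mathrm{Lip}(\tilde D_G)\le F(\epsilon)$. I would put $D|_{\R^d\setminus[0,1]^d}=0$ and $D|_{[0,1]^d}=\min\{\max\{\tilde D_G,D_{\min}\},1-D_{\min}\}$ (again a ReLU network, after a fixed enlargement of $\mathcal A^D(\epsilon)$), so that $D$ has range $[D_{\min},1-D_{\min}]$ and the claimed $G$-uniform Lipschitz bound. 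To estimate $L(G,D_G)-L(G,D)\ge 0$ (nonnegativity from Lemma~\ref{lem:opt_discriminator}) I would use $d\mu^*=p^*\,dx$ on $[0,1]^d$, $d(G_\#\lambda)=p\,dx$ on $A$, and that the relevant $\log$-ratios vanish off $[0,1]^d$, to write it as $\tfrac12\int_{[0,1]^d}\log(D_G/D)\,p^*\,dx+\tfrac12\int_{A\cap[0,1]^d}\log\frac{1-D_G}{1-D}\,p\,dx$, and split $[0,1]^d$ into the good set $[0,1]^d\setminus N_\delta(\partial A)$ and the tube $N_\delta(\partial A)\cap[0,1]^d$. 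On the good set inside $A$, $D_G=\bar h_G$ and all of $D,D_G,1-D,1-D_G$ lie in $[\tfrac12\min\{c_{\min},1-c_{\max}\},1]$ once $D_{\min},\delta'$ are small, so both integrands are $O(\delta'+D_{\min})$; on the good set outside $A$, $D_G=1$, $D=1-D_{\min}$, $p=0$, so the first integrand is $\log\frac1{1-D_{\min}}=O(D_{\min})$; on the tube I would use only $D\in[D_{\min},1-D_{\min}]$, $D_G\in[1-c_{\max},1]$ and $1-D_G\in[c_{\min},c_{\max}]$ on $A$ to bound both integrands by $O(|\log D_{\min}|)$, so that region contributes at most $O(|\log D_{\min}|)\,C_1\delta$. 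Choosing first $D_{\min}=D_{\min}(\epsilon)$, then $\delta$, then $\delta'$ small enough makes the total $\le\epsilon$; this also fixes $D_{\min}(\epsilon)$ and $\mathcal A^D(\epsilon)$ independently of $G$.

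\emph{Main obstacle.} The crux — and the reason the introduction restricts $\Hdis$ to functions vanishing off $[0,1]^d$ and invokes a tube theorem — is that $D_G$ genuinely jumps across the $(d-1)$-dimensional set $\partial A=\nabla\phi(\partial([0,1]^d))$, so no continuous (hence no ReLU) network approximates it in $\|\cdot\|_\infty$. One must therefore (i) control the Lebesgue volume of a $\delta$-tube around $\partial A$ uniformly over $\Hgen(\epsilon)$, and (ii) run the Yarotsky construction on the smoothed surrogate $\bar h_G$, whose Lipschitz constant $\sim 1/\delta$ has to be $G$-uniform so that one architecture works for all $G$; the remaining work is the bookkeeping of the nested scales $D_{\min}\gg\delta\gg\delta'$ against the $|\log D_{\min}|$ blow-up of the loss integrand near $\partial A$, together with the verification that the density $p$ of $G_\#\lambda$ away from $\partial A$ is given by a formula Lipschitz with $G$-uniform constant — which is exactly where strong convexity of $\phi$ and the Lipschitz extension of $(\nabla\phi)^{-1}$ enter.
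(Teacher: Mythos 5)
Your proof is correct and follows essentially the same strategy as the paper: isolate the jump of $D_G$ across $(\partial A)\cap[0,1]^d$, bound the volume of a $\delta$-tube around it uniformly over $\Hgen(\epsilon)$ (the paper via Weyl's tube formula, you via either that or a Lipschitz covering argument), run the Yarotsky ReLU construction away from the tube, clip to $[D_{\min},1-D_{\min}]$ and set $D=0$ off $[0,1]^d$, and then split the loss integral into tube and complement with the same nested choice of $D_{\min}$, tube width and approximation accuracy. The only cosmetic difference is that you pre-regularize $D_G$ with the explicit $1/\delta$-Lipschitz cutoff $\psi_G$ and then invoke the UAP as a black box on the surrogate $\bar h_G$, whereas the paper applies the Yarotsky grid/partition-of-unity construction directly to the discontinuous $D_G$ by assigning grid values $P_m$ according to whether $m$ lies in $\inte(A)$, on $\partial A$, or in $A^c$ — both are just two ways of encoding the same tube exclusion, and both yield the $G$-uniform Lipschitz bound on $D$ from the fixed partition-of-unity networks and bounded grid/sample values.
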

\begin{proof}
    Throughout the proof we fix $\epsilon>0$ and a generator $G=\nabla \phi\in \Hgen(\epsilon)$. We denote by $A=G([0,1]^d)$ the range of $G$ which is a compact subset of $\R^d$. Recall that $\mu^*=G^*_{\#}\lambda$ with $G^*=\nabla \phi^*\colon [0,1]^d\to \R^d$ and $A^*=G^*([0,1]^d)=[0,1]^d$. The optimal discriminator $D_G\colon \R^d\to [0,1]$ which maximizes $L(G,D)$ satisfies $D_G(x)=\frac{p^*(x)}{p^*(x)+p(x)}$, $x\in A^*$, where $p^*$ and $p$ are the densities of $G^*_{\#}\lambda$ and $G_{\#}\lambda$, respectively (see Lemma \ref{lem:opt_discriminator}). 
    In particular, for all $x\in A\cap A^*$ we have 
    $$
    D_G(x)=\frac{\det[(\Hess \phi)((\nabla \phi)^{-1}(x))]}{\det[(\Hess \phi)((\nabla \phi)^{-1}(x))]+\det[(\Hess \phi^*)((\nabla \phi^*)^{-1}(x))]}
    $$
    and for all $x\in A^*\setminus A$ we have $D_G(x)=1$. 
The assumptions that $\phi, \phi^*\in C^{2,1}([0,1]^d,\R)$, that $\frac{1}{\tilde M}\Id\le (\Hess \phi)(x),(\Hess \phi^*)(x)\le \tilde M \Id$ and that $A$ and $A^*$ are compact ensure that $D_G\in C^{0,1}(\inte(A)\cap A^*,\R)$. 
To summarize, we have that $D_G$ is Lipschitz continuous on $\inte(A)\cap A^*$ (and also $A^*\setminus A$) but is discontinuous at the boundary points $(\partial A)\cap A^*$. We follow the proof of~\cite[Theorem 1]{yarotsky2017error} to construct a discriminator $D$ that uniformly approximates $D_G$ except for a sufficiently small tubular neighborhood of $(\partial A)\cap A^*$.

To this end we fix $N\in \N$ (which is specified later) and decompose the unit cube $A^*=[0,1]^d$ into a regular grid with mesh width $1/N$ and grid points $\mathcal M(N)=\{0,1/N,2/N,\ldots,1\}^d$.
We consider the partition of unity $\rho_m$, $m\in \mathcal M(N)$, as given in ~\cite[Equation (5)]{yarotsky2017error}. 
Moreover, for each $m\in \mathcal M(N)\cap \inte(A)$ we set $P_m=D_G(m)$,
for each $m\in \mathcal M(N)\cap A^c$ we set
$P_m=1$ and for each $m\in \mathcal M(N)\cap (\partial A)$ we set $P_m=0$. Next we define $\tilde D=\sum_{m\in \mathcal M(N)}\rho_m P_m$. Let $\mathcal C(N)$ be the $\sqrt{d}/N$-tubular neighborhood of $(\partial A)\cap A^*$, i.e., $\mathcal C(N)=\cup_{a\in (\partial A)\cap A^*} \{x\in A^* \, | \, \|x-a\|\le \sqrt{d}/N\}$. For $m=(m_1,\ldots,m_d)\in \mathcal M(N)$ recall from ~\cite[Equation (7)]{yarotsky2017error}) that $x=(x_1,\ldots,x_d)\in \operatorname{supp}(\rho_m)$ implies that $|x_k-m_k|<1/N$ for all $k\in \{1,\ldots,d\}$. Hence, for all $x\in \operatorname{supp}(\rho_m)$ we have $\|x-m\|< \sqrt{d}/N$. Consequently, we obtain for all $x\in (\inte{A})\cap A^*$ with $x\notin \mathcal C(N)$ and $x\in \operatorname{supp}(\rho_m)$ that $m\in (\inte{A})\cap A^*$. Likewise, we obtain for all $x\in A^*\setminus A$ with $x\notin \mathcal C(N)$ and $x\in \operatorname{supp}(\rho_m)$ that $m\in A^*\setminus A$. By following the arguments of the proof of~\cite[Theorem 1]{yarotsky2017error} we thus obtain for all $x\in A^*\setminus \mathcal C(N)$ that
\begin{equation}\label{eq:ub_Dg_Taylor}
    |D_G(x)-\tilde D(x)|
\le \frac{C}{N},
\end{equation}
where the constant $C>0$ only depends on $d$, $k$, $\tilde H$ and $\tilde M$. 

Next, we further follow the proof of~\cite[Theorem 1]{yarotsky2017error} to obtain a ReLU-neural network architecture that is capable of uniformly approximating $\tilde D$. More precisely, the proof of~\cite[Theorem 1]{yarotsky2017error} shows that there exists a ReLU-neural network architecture (which depends on $d$, $k$, $\tilde H$, $\tilde M$ and $N$ but not on $G$ or $D_G$)
which contains a neural network $\overline D\colon \R^d\to \R$ such that $\sup_{x\in [0,1]^d}|\tilde D(x)-\overline D(x)|\le \frac{C}{N}$.
Combining this with \eqref{eq:ub_Dg_Taylor} we obtain
\begin{equation}\label{eq:ub_sup_wo_tube}
    \sup_{x\in A^*\setminus \mathcal C(N)}|D_G(x)-\overline D(x)|\le \frac{2C}{N}.
\end{equation}

To complete the construction of the network architecture we introduce a ReLU clipping layer that restricts outputs to $[D_{\min},1-D_{\min}]$, where $0<D_{\min}<\frac{1}{1+M^{2d}}$ is specified later. We denote the resulting neural network with ReLU activation function by $D=(\overline D \wedge (1-D_{\min}))\vee D_{\min}$ on $[0,1]^d$ and set $D=0$ on $\R^d\setminus [0,1]^d$.

Note that by the construction in the proof of~\cite[Theorem 1]{yarotsky2017error} we have that $\overline D$ is of the form $\overline D(x)=\sum_{m\in \mathcal M(N)}\overline \phi_m(x) P_m$ for all $[0,1]^d$. Here,  for every $m\in \mathcal M(N)$ the function $\overline \phi_m$ is a neural network with ReLU activation function and is independent of $G$. Indeed, as the proof of~\cite[Theorem 1]{yarotsky2017error} reveals, the functions $\overline \phi_m$, $m\in \mathcal M(N)$, are neural network approximations of the partition of unity $ \phi_m$, $m\in \mathcal M(N)$, which is chosen independently of $G$. This implies that for every $m\in \mathcal M(N)$ the function $\overline \phi_m$ is Lipschitz continuous with Lipschitz constant independent of $G$. This together with the fact that $P_m\in [0,1]$ for all $m\in \mathcal M(N)$ shows that $\overline D$ is Lipschitz continuous on $[0,1]^d$ with Lipschitz constant independent of $G$. Consequently, the same statement applies to $D$.

We next analyze the error $L(G,D_G)-L(G,D)$. To this end note that 
\begin{equation*}
    \begin{split}
        L(G,D_G)-L(G,D)&=
\frac{1}{2}\mathbb E\left[\log(D_G(Y))+\log(1-D_G(G(Z)))\right]
-\frac{1}{2}\mathbb E\left[\log(D(Y))+\log(1-D(G(Z)))\right]\\
&=\frac{1}{2}\mathbb E\left[\log\left(\frac{D_G(Y)}{D(Y)}\right)+\log\left(\frac{1-D_G(G(Z))}{1-D(G(Z))}\right)\right]\\
&=
\frac{1}{2}\int_{A^*}\log\left(\frac{D_G(y)}{D(y)}\right)p^*(y)dy
+
\frac{1}{2}\int_{A}\log\left(\frac{1-D_G(y)}{1-D(y)}\right)p(y)dy.
    \end{split}
\end{equation*}
Since $D(x)=D_G(x)=0$ for all $x\in A\setminus A^*$ and $D_G(x)=1$ for all $x\in A^*\setminus A$ we obtain
\begin{equation}\label{eq:disc_error_3_int}
    \begin{split}
        L(G,D_G)-L(G,D)&=
\frac{1}{2}\int_{A^*\setminus (A\cup \mathcal C(N))}\log\left(\frac{1}{D(y)}\right)p^*(y)dy
+
\frac{1}{2}\int_{\mathcal C(N)}\log\left(\frac{D_G(y)}{D(y)}\right)p^*(y)dy\\ & \quad +\frac{1}{2}\int_{\mathcal C(N)}\log\left(\frac{1-D_G(y)}{1-D(y)}\right)p(y)dy\\
     &\quad +   
\frac{1}{2}\int_{(A\cap A^*)\setminus \mathcal C(N)}\log\left(\frac{D_G(y)}{D(y)}\right)p^*(y)+\log\left(\frac{1-D_G(y)}{1-D(y)}\right)p(y)dy.
    \end{split}
\end{equation}
We consider the three integrals separately. First note that by \eqref{eq:ub_sup_wo_tube} we have $D(x)\ge 1-\max\{D_{\min},\frac{2C}{N}\}$ for all $x\in A^*\setminus (A\cup \mathcal C(N))$ and hence
\begin{equation}\label{eq:boudn_int_1}
   \frac{1}{2}\int_{A^*\setminus (A\cup \mathcal C(N))}\log\left(\frac{1}{D(y)}\right)p^*(y)dy\le 
   -\frac{1}{2}\log\left(1-\max\left\{D_{\min},\frac{2C}{N}\right\} \right).
\end{equation}
By Lemma \ref{lem:exis_bd_dens} we have $p,p^*\le \tilde M^d$ and hence
\begin{equation*}
\begin{split}
   \frac{1}{2}\int_{\mathcal C(N)}\log\left(\frac{D_G(y)}{D(y)}\right)p^*(y)+\log\left(\frac{1-D_G(y)}{1-D(y)}\right)p(y)dy&\le -\log(D_{\min})\int_{\mathcal C(N)}\frac{p^*(y)+p(y)}{2}dy\\&
   \le -\log(D_{\min}) \tilde M^d \lambda(\mathcal C(N)).
   \end{split}
\end{equation*}
Weyl's tube formula~\cite{weyl1939volume} (see also~\cite{gray2003tubes}) shows that $\lambda(\mathcal C(N))$ is a polynomial in $1/N$ with vanishing zero order term. The fact that $\|\phi\|_{C^{2,1}([0,1]^d,\R)}\le \tilde H$ further entails bounds on the volume element and curvature tensors of the manifolds forming $(\partial A)\cap A^*$. Consequently the coefficients of this polynomial are bounded by some constant that only depends on $d$ and $\tilde H$. In particular, it follows that $\lambda(\mathcal C(N))\le \overline C /N$ for some constant $\overline C$ only depending on $d$ and $\tilde H$ (and not depending on $G$). 
This implies that 
\begin{equation}\label{eq:boudn_int_2}
\begin{split}
   \frac{1}{2}\int_{\mathcal C(N)}\log\left(\frac{D_G(y)}{D(y)}\right)p^*(y)+\log\left(\frac{1-D_G(y)}{1-D(y)}\right)p(y)dy&\le  - \frac{\log(D_{\min}) M^d\overline C}{N}.
   \end{split}
\end{equation}
Finally, note that Lemma \ref{lem:exis_bd_dens} ensures that $D_G(x)\in [\frac{1}{1+M^{2d}},1-\frac{1}{1+M^{2d}}] \subseteq [D_{\min},1-D_{\min}]$ for all $x\in A\cap A^*$. This together with \eqref{eq:ub_sup_wo_tube} entails 
that 
\begin{equation*}
    \sup_{x\in (A\cap A^*)\setminus \mathcal C(N)}|D_G(x)- D(x)|\le \frac{2C}{N}
\end{equation*}
Lipschitz continuity of $[D_{\min},1]\ni x\mapsto\log(x)\in \R$ with constant $1/D_{\min}$ implies that
\begin{equation}\label{eq:boudn_int_3}
\begin{split}
   &\frac{1}{2}\int_{(A\cap A^*)\setminus \mathcal C(N)}\log\left(\frac{D_G(y)}{D(y)}\right)p^*(y)+\log\left(\frac{1-D_G(y)}{1-D(y)}\right)p(y)dy\\
   &\le \frac{1}{D_{\min}}\int_{(A\cap A^*)\setminus \mathcal C(N)}|D_G(y)-D(y)|\frac{p^*(y)+p(y)}{2}dy
   \le 
   \frac{2C}{D_{\min} N}.
   \end{split}
\end{equation}
Combining \eqref{eq:boudn_int_1}, \eqref{eq:boudn_int_2} and \eqref{eq:boudn_int_3} with \eqref{eq:disc_error_3_int} yields
$$
L(G,D_G)-L(G,D)\le 
-\frac{1}{2}\log\left(1-\max\left\{D_{\min},\frac{2C}{N}\right\} \right)
- \frac{\log(D_{\min}) M^d\overline C}{N}
+\frac{2C}{D_{\min} N}.
$$
We thus conclude that by choosing first $D_{\min}$ small enough (e.g., $D_{\min}\le 1-e^{-\epsilon}$) and then $N$ large enough there exists a neural network architecture (independent of $G$) that contains a neural network $D\colon \R^d \to [D_{\min},1-D_{\min}]$ such that
$$
L(G,D_G)-L(G,D)\le \epsilon.
$$
This completes the proof.

\end{proof}

\begin{assumption}\label{assump:disc}
Suppose that Assumptions \ref{assumption:regularity} and \ref{assump:generator} are satisfied. 
For every $\epsilon>0$ let $\mathcal A^D(\epsilon)$ and $D_{\min}(\epsilon)$ denote the network architecture and the cut-off from Proposition \ref{prop:main_disc}, respectively.
We assume that for all $\epsilon>0$ the hypothesis space of discriminators $\Hdis(\epsilon)$ is a relatively compact (with respect to the supremum norm on $[0,1]^d$) subset of the functions $D\colon \R^d \to [0,1]$ such that $D|_{\R^d\setminus[0,1]^d}=0$, $D|_{[0,1]^d}$ is a neural network with ReLU activation function and network architecture $\mathcal A^D(\epsilon)$ and it holds that $D|_{[0,1]^d}\colon [0,1]^d\to [D_{\min}(\epsilon),1-D_{\min}(\epsilon)]$. Moreover, we assume that for every $G\in\Hgen(\epsilon)$ the function $D(G)$ constructed in Proposition \ref{prop:main_disc} is contained in 
$\Hdis(\epsilon)$ (i.e.\ $\{D(G),\,G\in\Hgen(\epsilon)\}\subseteq \Hdis(\epsilon)$).
\end{assumption}

By Proposition \ref{prop:main_disc} we have under Assumption \ref{assump:disc} that for every $\epsilon>0$ 
$$
\sup_{G\in\Hgen(\epsilon,M,H)}\inf_{D\in \Hdis(\epsilon)}L(G,D_G)-L(G,D)\le \epsilon.
$$
\begin{remark}
    In Assumption \ref{assump:disc} we need to impose relative compactness on $\Hdis(\epsilon)$ (as opposed to the situation in Assumption \ref{assump:generator} for $\Hgen(\epsilon)$, where this comes from higher order regularity). Note, however, that the requirement that for every $G\in\Hgen(\epsilon)$ the function $D(G)$ constructed in Proposition \ref{prop:main_disc} is contained in 
$\Hdis(\epsilon)$ does not conflict with the requirement of relative compactness of $\Hdis(\epsilon)$. Indeed, as shown in Proposition \ref{prop:main_disc} for each fixed $\epsilon>0$ the class of functions $\{D(G)|G\in\Hgen(\epsilon)\}\subseteq \Hdis(\epsilon)$ is uniformly Lipschitz continuous and therefore relatively compact. 
\end{remark}

\subsection{The sample error}

\begin{propo}\label{propo:uslln}
    Suppose that Assumption \ref{assump:disc} is satisfied and let $\epsilon>0$. Then it holds a.s.\ that 
    \begin{equation}\label{eq:uslln}
        \lim_{n\to \infty}\sup_{(G,D)\in \Hgen(\epsilon) \times \Hgen(\epsilon)} \left|
\hat L_n(G,D)-L(G,D)
\right|=0.
    \end{equation}
\end{propo}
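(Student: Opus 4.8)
The plan is to obtain \eqref{eq:uslln} as a uniform strong law of large numbers by showing that the two function classes hidden in the loss are Glivenko--Cantelli. Write $f_D(y)=\log D(y)$ and $g_{G,D}(z)=\log(1-D(G(z)))$, so that $L(G,D)-\hat L_n(G,D)=\tfrac{1}{2}(\E[f_D(Y)]-\tfrac{1}{n}\sum_{i=1}^n f_D(Y_i))+\tfrac{1}{2}(\E[g_{G,D}(Z)]-\tfrac{1}{n}\sum_{i=1}^n g_{G,D}(Z_i))$ and the left-hand side of \eqref{eq:uslln} is at most $\tfrac{1}{2}\sup_D|\tfrac{1}{n}\sum_i f_D(Y_i)-\E[f_D(Y)]|+\tfrac{1}{2}\sup_{G,D}|\tfrac{1}{n}\sum_i g_{G,D}(Z_i)-\E[g_{G,D}(Z)]|$; it suffices to send each term to $0$ almost surely. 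A first observation -- which also removes the $-\infty$ ambiguity noted after the definition of $\hat L_n$ -- is that both families of summands are uniformly bounded under Assumption~\ref{assump:disc}: since $Y_i\in[0,1]^d$ almost surely and $D|_{[0,1]^d}$ takes values in $[D_{\min}(\epsilon),1-D_{\min}(\epsilon)]$ we get $|f_D(Y_i)|\le B:=|\log D_{\min}(\epsilon)|$, and since $1-D(G(Z_i))$ lies in $[D_{\min}(\epsilon),1]$ whether or not $G(Z_i)\in[0,1]^d$ (it equals $1$ off the cube and lies in $[D_{\min}(\epsilon),1-D_{\min}(\epsilon)]$ on it), also $|g_{G,D}(Z_i)|\le B$.

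For the $f_D$-term, the map $D\mapsto D|_{[0,1]^d}$ ranges over a relatively compact subset of $(C([0,1]^d),\|\cdot\|_\infty)$ by Assumption~\ref{assump:disc}, and post-composition with $\log$, which is Lipschitz on $[D_{\min}(\epsilon),1-D_{\min}(\epsilon)]$, keeps the image relatively compact and uniformly bounded in $C([0,1]^d)$. A totally bounded, uniformly bounded family of continuous functions on a compact set is $\mu^*$-Glivenko--Cantelli: covering it by finitely many sup-norm balls of radius $\eta$ centered at $f_1,\dots,f_N$ and applying the ordinary strong law to these finitely many centers gives $\limsup_n\sup_D|\tfrac{1}{n}\sum_i f_D(Y_i)-\E[f_D(Y)]|\le2\eta$ a.s.\ for every $\eta>0$, hence convergence to $0$ a.s.

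For the $g_{G,D}$-term I would prove that $\mathcal G=\{g_{G,D}:G\in\Hgen(\epsilon),\,D\in\Hdis(\epsilon)\}$ has finite $L^1(\lambda)$-bracketing numbers at every resolution; together with the uniform bound $B$ this yields the Glivenko--Cantelli property, since a bracket $[l,u]$ with $\|u-l\|_{L^1(\lambda)}<\eta$ forces $\tfrac{1}{n}\sum_i g(Z_i)\in[\tfrac{1}{n}\sum_i l(Z_i),\tfrac{1}{n}\sum_i u(Z_i)]$ and $\E[g(Z)]\in[\E[l(Z)],\E[u(Z)]]$, so the average of any $g\in[l,u]$ deviates by less than $2\eta$ eventually a.s., and one takes the maximum over finitely many brackets. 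The brackets are built from a finite $\delta$-net $(G_j,D_j)_{j=1}^{N(\delta)}$ of the parameter space in the sup norms on $[0,1]^d$, which exists because $\Hdis(\epsilon)$ is relatively compact by Assumption~\ref{assump:disc} and $\Hgen(\epsilon)$ is relatively compact in $(C([0,1]^d,\R^d),\|\cdot\|_\infty)$ by Arzel\`a--Ascoli -- the bound $\|\phi\|_{C^{2,1}([0,1]^d,\R)}\le\tilde H$ making $\{\nabla\phi\}$ uniformly bounded and uniformly Lipschitz. For $(G,D)$ in the $\delta$-ball around $(G_j,D_j)$ and every $z\in[0,1]^d$ I would establish the pointwise bound $|g_{G,D}(z)-g_{G_j,D_j}(z)|\le r_j(z):=\tfrac{1}{D_{\min}(\epsilon)}(\delta+\omega(\delta))+2B\,\mathbbm{1}_{E_j}(z)$, where $\omega$ is the common modulus of continuity of the uniformly equicontinuous (by relative compactness) family $\{D|_{[0,1]^d}:D\in\Hdis(\epsilon)\}$ and $E_j=\{z\in[0,1]^d:\operatorname{dist}(G_j(z),\partial[0,1]^d)\le2\delta\}$: off $E_j$ the indicators $\mathbbm{1}_{[0,1]^d}(G(z))$ and $\mathbbm{1}_{[0,1]^d}(G_j(z))$ coincide by convexity of the cube, and the bound then follows from the $1/D_{\min}(\epsilon)$-Lipschitzness of $\log$ on $[D_{\min}(\epsilon),1]$; on $E_j$ one only has the trivial bound $2B$. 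Thus $[\,g_{G_j,D_j}-r_j,\ g_{G_j,D_j}+r_j\,]$ is a bracket of $L^1(\lambda)$-width $2(\tfrac{1}{D_{\min}(\epsilon)}(\delta+\omega(\delta))+2B\,\lambda(E_j))$, and here Assumption~\ref{assump:generator} is essential: $\tfrac{1}{\tilde M}\Id\le\Hess\phi_j\le\tilde M\Id$ makes $G_j=\nabla\phi_j$ injective with $\|\nabla\phi_j(x)-\nabla\phi_j(y)\|\ge\tfrac{1}{\tilde M}\|x-y\|$ (cf.\ Lemma~\ref{lem:exis_bd_dens}), so $(\nabla\phi_j)^{-1}$ is $\tilde M$-Lipschitz on $G_j([0,1]^d)$ and $\lambda(E_j)\le\tilde M^d\,\lambda(\{x:\operatorname{dist}(x,\partial[0,1]^d)\le2\delta\})\le C(d,\tilde M)\,\delta$. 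Given $\eta>0$, choosing $\delta$ small enough that every such bracket has $L^1(\lambda)$-width below $\eta$ bounds $N_{[\,]}(\eta,\mathcal G,L^1(\lambda))$ by $N(\delta)<\infty$; hence $\sup_{G,D}|\tfrac{1}{n}\sum_i g_{G,D}(Z_i)-\E[g_{G,D}(Z)]|\to0$ a.s., and adding the two almost sure limits proves \eqref{eq:uslln}.

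The step I expect to be the main obstacle is the bracketing of $\mathcal G$. Because $D$ is extended by $0$ outside $[0,1]^d$, the summand $z\mapsto\log(1-D(G(z)))$ is genuinely discontinuous across $G^{-1}(\partial[0,1]^d)$, so $\mathcal G$ is \emph{not} totally bounded in the supremum norm and the simple covering argument that works for the $f_D$-term breaks down; the jump has to be absorbed into the bracket width through the volume estimate $\lambda(E_j)=O(\delta)$, which is exactly the point at which the two-sided Hessian bound of Assumption~\ref{assump:generator} -- rather than mere relative compactness of $\Hgen(\epsilon)$ -- is used.
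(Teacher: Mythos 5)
Your proof is correct but proceeds by a genuinely different route than the paper's. The paper applies a compactness-plus-almost-sure-continuity uniform law of large numbers (Ferguson's Theorem 16): it shows $\Hgen(\epsilon)\times\Hdis(\epsilon)$ has compact closure $\mathcal C$, sets $U(y,z,G,D)=\tfrac12[\log D(y)+\log(1-D(G(z)))]$, and verifies that $U(Y,Z,\cdot,\cdot)$ is a.s.\ continuous at each $(G,D)\in\mathcal C$ together with a uniform bound. The discontinuity of $D$ across $\partial[0,1]^d$ is disposed of \emph{qualitatively}: since $G_\#\lambda$ is absolutely continuous, $\partial[0,1]^d$ is a $G_\#\lambda$-null set, so $G(Z)$ a.s.\ avoids it and $D_n(G_n(Z))\to D(G(Z))$ a.s.\ along any convergent sequence $(G_n,D_n)\to(G,D)$. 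You instead split $\hat L_n-L$ into the $Y$- and $Z$-terms, treat the $Y$-term by a sup-norm covering of the totally bounded class $\{\log D|_{[0,1]^d}\}$, and prove the $Z$-term is Glivenko--Cantelli via finite $L^1(\lambda)$-bracketing numbers, absorbing the jump of $z\mapsto\log(1-D(G(z)))$ across $G^{-1}(\partial[0,1]^d)$ into brackets whose $L^1$-width is controlled by the \emph{quantitative} tube estimate $\lambda(E_j)\le\tilde M^d\lambda(\{\operatorname{dist}(\cdot,\partial[0,1]^d)\le2\delta\})=O(\delta)$, itself obtained from the $\tilde M$-Lipschitz inverse of $\nabla\phi_j$ (Lemma~\ref{lem:exis_bd_dens}). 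Both approaches lean on Assumption~\ref{assump:generator} at exactly the point where the boundary discontinuity must be neutralised — the paper via absolute continuity of $G_\#\lambda$, you via the inverse Lipschitz bound — so the underlying structural fact is the same, but your bracketing argument is self-contained and does not invoke an off-the-shelf ULLN, at the cost of being somewhat longer; the paper's proof is shorter but delegates the heavy lifting to Ferguson's compactness theorem and a more delicate continuity check on the closure $\mathcal C$.
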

\begin{proof}
We aim at applying a version of the uniform law of large numbers as stated in, e.g., \cite[Theorem 16]{ferguson2017course}. To this end we first note by Assumption \ref{assump:disc} that $\Hdis(\epsilon)$ is relatively compact in the space of functions that vanish on $\R^d\setminus [0,1]^d$ and are continuous on $[0,1]^d$ equipped with the uniform norm on $[0,1]^d$.
Moreover, note that Assumption \ref{assump:generator} and the Arzelà-Ascoli theorem ensure that $\Hgen(\epsilon)$ is relatively compact in $C^1([0,1]^d,\R^d)$. We conclude that the closure $\mathcal C$ of  $\Hgen(\epsilon)\times \Hdis(\epsilon)$ is compact. 
Next, for $y,z\in [0,1]^d$ and $(G,D)\in \mathcal C$ consider
$$
U(y,z,G,D)=\frac{1}{2}\left[\log(D(y))+\log(1-D(g(z)))\right].
$$
We show that $U(Y,Z,\cdot,\cdot)$ is continuous at every $(G,D)\in \mathcal C$ almost surely. 
To this end let $(G_n,D_n)\in \mathcal C$, $n\in \N$, be a sequence with $(G_n,D_n)\to (G,D)$ as $n\to \infty$. Note that by Assumption \ref{assump:generator} the distribution $G_{\#}\lambda$ of $G(Z)$ is absolutely continuous with respect to the Lebesgue measure. This entails that 
$\partial [0,1]^d$ is a null set of $G_{\#}\lambda$. Combining this 
and the facts that $D_n$ converges uniformly to $D$ on $[0,1]^d$ and that $G_n(Z)\to G(Z)$ a.s.\ we obtain
$D_n(G_n(Z))\to D(G(Z))$ a.s. Using that $D_n$, $n\in\N$, and $D$ take only values in $[D_{\min}(\epsilon),1-D_{\min}(\epsilon)]$ on $[0,1]^d$ ensures that a.s.
$$
\lim_{n\to \infty}U(Y,Z,G_n,D_n)=U(Y,Z,G,D).
$$
Moreover, we obtain that $U$ is uniformly bounded by using once more that $D_n$, $n\in\N$, and $D$ take only values in $[D_{\min}(\epsilon),1-D_{\min}(\epsilon)]$ on $[0,1]^d$. Hence the assumptions of the uniform law of large numbers are satisfied and we conclude that a.s.
$$
\sup_{(G,D)\in \mathcal C}\left|\frac{1}{n}\sum_{j=1}^n U(Y_j,Z_j,G,D)-\mathbb E[U(Y,Z,G,D)]\right|\to 0, \qquad n\to \infty,
$$
which entails \eqref{eq:uslln}.
\end{proof}

\subsection{Consistency}

We combine Proposition \ref{prop:main_gen}, Proposition \ref{prop:main_disc} and Proposition \ref{propo:uslln} with Proposition \ref{prop:error_decomp} to obtain the following result, which states that by choosing the hypothesis spaces according to Assumption \ref{assump:generator} and Assumption \ref{assump:disc} the Jensen-Shannon divergence between the reference measure $\mu^*$ and a measure $(\hat G_n)_{\#}\lambda$ obtained from training on $n\in \N$ samples can be reduced arbitrary close to the training error in the limit $n\to \infty$.

\begin{theorem}\label{theo:main}
    Suppose that Assumption \ref{assump:disc} are satisfied. Let $\epsilon>0$ and $\hat G_n(=\hat G_n(\omega))\in \Hgen(\epsilon)$, $n\in \N$, $\omega \in \Omega$, be a random sequence of generators. Denote by 
    $$
    \Delta_T(n)=\sup_{D\in \Hdis(\epsilon)}\hat L_n(\hat G_n,D)-\inf_{G\in\Hgen(\epsilon)}\sup_{D\in \Hdis(\epsilon)}\hat L_n(G,D)
    $$
    the training error of $\hat G_n$.
    Then it holds a.s.\ that
    $$
    \limsup_{n\to \infty}d_{JS}(\mu^*,(\hat G_n)_{\#}\lambda)\le \limsup_{n\to \infty}\Delta_T(n) +2\epsilon.
    $$
\end{theorem}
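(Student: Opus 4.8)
The plan is to specialize the error decomposition of Proposition~\ref{prop:error_decomp} to the hypothesis spaces $\Hgen(\epsilon)$ and $\Hdis(\epsilon)$ fixed in Assumptions~\ref{assump:generator} and \ref{assump:disc}, and then bound each of the four resulting error terms. Since Assumption~\ref{assump:disc} presupposes Assumptions~\ref{assumption:regularity} and \ref{assump:generator}, every auxiliary result of this section is available. Concretely, Proposition~\ref{prop:error_decomp} applied with $\Hgen=\Hgen(\epsilon)$, $\Hdis=\Hdis(\epsilon)$ and the given random sequence $\hat G_n$ yields, almost surely and for all $n\in\N$,
\[
d_{JS}(\mu^*,(\hat G_n)_{\#}\lambda)\le \Delta_T(n)+\Delta_D+2\Delta_S(n)+\Delta_G,
\]
where $\Delta_T(n)$ is exactly the training error defined in the statement, $\Delta_G=\inf_{G\in\Hgen(\epsilon)}d_{JS}(\mu^*,G_{\#}\lambda)$, $\Delta_D=\sup_{G\in\Hgen(\epsilon)}\{L(G,D_G)-\sup_{D\in\Hdis(\epsilon)}L(G,D)\}$, and $\Delta_S(n)=\sup_{(G,D)\in\Hgen(\epsilon)\times\Hdis(\epsilon)}|L(G,D)-\hat L_n(G,D)|$.

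Next I would bound the (deterministic) generator and discriminator errors by $\epsilon$. For $\Delta_G$: Proposition~\ref{prop:main_gen} provides a ReCU network $\phi$ with $\frac{1}{2M}\Id\le(\Hess\phi)(x)\le 2M\Id$ and $d_{JS}(\mu^*,(\nabla\phi)_{\#}\lambda)\le\epsilon$, and by the construction of $\mathcal A^\phi(\epsilon)$, $\tilde M$ and $\tilde H$ in Assumption~\ref{assump:generator} this $\nabla\phi$ lies in $\Hgen(\epsilon)$; hence $\Delta_G\le\epsilon$. For $\Delta_D$: by Proposition~\ref{prop:main_disc} together with the requirement in Assumption~\ref{assump:disc} that $\{D(G):G\in\Hgen(\epsilon)\}\subseteq\Hdis(\epsilon)$, for every $G\in\Hgen(\epsilon)$ the discriminator $D(G)\in\Hdis(\epsilon)$ satisfies $L(G,D_G)-L(G,D(G))\le\epsilon$, so $\sup_{D\in\Hdis(\epsilon)}L(G,D)\ge L(G,D(G))\ge L(G,D_G)-\epsilon$ uniformly in $G$, i.e.\ $\Delta_D\le\epsilon$.

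Finally I would pass to the limit. Proposition~\ref{propo:uslln}, whose hypothesis Assumption~\ref{assump:disc} holds, gives $\Delta_S(n)\to 0$ almost surely. On the intersection of the full-measure event from Proposition~\ref{prop:error_decomp} and the one from Proposition~\ref{propo:uslln} (still a full-measure event) I can take $\limsup_{n\to\infty}$ in the displayed inequality: since $\Delta_D+2\Delta_S(n)+\Delta_G\to\Delta_D+\Delta_G\le 2\epsilon$ as $n\to\infty$, the elementary identity $\limsup_n(x_n+y_n)=\limsup_n x_n+y$, valid whenever $y_n\to y$ (here $x_n=\Delta_T(n)$ and $y_n=\Delta_D+2\Delta_S(n)+\Delta_G$), gives
\[
\limsup_{n\to\infty}d_{JS}(\mu^*,(\hat G_n)_{\#}\lambda)\le \limsup_{n\to\infty}\Delta_T(n)+\Delta_D+\Delta_G\le \limsup_{n\to\infty}\Delta_T(n)+2\epsilon,
\]
which is the claim. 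The argument contains no genuine obstacle, as all the analytic work sits in the preceding propositions; the only points requiring care are the bookkeeping of the almost-sure events and the observation that $\Delta_D$ and $\Delta_G$ do not depend on $n$, so that they survive the $\limsup$ as the additive constant $2\epsilon$. One should also note that the supremum in Proposition~\ref{propo:uslln} is intended to range over $\Hgen(\epsilon)\times\Hdis(\epsilon)$, matching the definition of $\Delta_S(n)$.
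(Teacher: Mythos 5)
Your proof is correct and matches the approach the paper itself indicates (the paper gives no separate proof of Theorem~\ref{theo:main} but states that it follows by combining Propositions~\ref{prop:main_gen}, \ref{prop:main_disc}, \ref{propo:uslln} with Proposition~\ref{prop:error_decomp}, together with the observations $\Delta_G\le\epsilon$ and $\Delta_D\le\epsilon$ recorded immediately after Assumptions~\ref{assump:generator} and \ref{assump:disc}). You have simply written out the bookkeeping the paper leaves implicit, including the correct handling of the almost-sure events and the $\limsup$ calculus, and you rightly flag the typographical slip $\Hgen(\epsilon)\times\Hgen(\epsilon)$ in the statement of Proposition~\ref{propo:uslln}.
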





\begin{remark}
    In \cite[Theorem 4]{huangconvex} it is proven that for a sequence of functions $\hat\phi_n$ such that $(\nabla \hat\phi_{n})_\#\lambda\rightharpoonup \mu^*=(\nabla \phi^*)_\#\lambda$ (weak convergence), $\nabla\phi_n\to \nabla\phi^*$ holds almost everywhere on $[0,1]^d$, where $\phi^*$ is the Brenier potential. In our analysis we have proven  $d_{JS}((\nabla\hat\phi_{n})_\# \lambda,\mu^*)\to 0$ a.s.\ which implies a.s.\ convergence in law. We thus conclude, as $[0,1]^d$ is star shaped, that we actually a.s.\ learn the Brenier potential (up to an irrelevant constant).  
\end{remark}


\section{Enforcing convexity of the Brenier potential}\label{sec:convexity}

In the previous section, we provided a statistical error analysis for the Brenier GAN (Theorem~\ref{theo:main}), where we showed that minimizing the GAN loss over the hypothesis classes $\Hdis$ and $\Hgen$—the latter composed of generators expressible as gradients of strongly convex potentials—leads to an arbitrarily small error on the estimated distribution $(\hat{G}_n)_{\#}\lambda$ as the number of training samples goes to infinity. In this section, we relax the hypothesis space to the set of generators $G$ expressible as $G = \nabla \phi$, with $\phi \colon [0,1]^d \to \mathbb{R}$, without assuming convexity of $\phi$. As a trade-off, we introduce a penalty term in the loss function to enforce strong convexity of the learned potential. We show that for a sufficiently large penalization parameter, the potential minimizing the penalized loss is indeed strongly convex, allowing us to ensure that the theory developed in the previous section remains valid. 

Next, we introduce the penalty term at the core of our convexity-promoting strategy.

\subsection{Notation}
For any differentiable function $\phi \colon [0,1]^d \to \mathbb{R}$ and any measurable function $D\colon [0,1]^d \to [0,1]$, the Brenier GAN loss with a convexity-promoting penalty is defined by
\begin{equation}
    \mathcal{L}_{\kappa, \gamma}(\phi, D) = L(\nabla \phi, D) + \gamma P^{(\kappa)}(\phi),
\end{equation}
where the penalty term $P^{(\kappa)}(\phi)$ encourages $\kappa$-strong convexity of $\phi$, and is defined as
\begin{align}
    P^{(\kappa)}(\phi) 
    &= \mathbb{E}\left[ \mathrm{ReLU}\left(
    \phi\left(\frac{U+U'}{2}\right) - \frac{\phi(U) + \phi(U')}{2} + \frac{\kappa}{8}\|U - U'\|^2
    \right)\right]\nonumber\\[6pt]
    &= \int_{([0,1]^d)^2} \mathrm{ReLU}\left(
    \phi\left(\frac{u+u'}{2}\right) - \frac{\phi(u) + \phi(u')}{2} + \frac{\kappa}{8}\|u - u'\|^2
    \right)\,du\,du'.
\end{align}
Here $U$ and $U'$ are independent and uniformly distributed on $[0,1]^d$.

This penalty promotes $\kappa$-strong convexity, which is equivalent to requiring that the function $\phi + \frac{\kappa}{2}\|\cdot\|^2$ is convex. Equivalently, $\kappa$-strong convexity can be characterized by the inequality  
$$
\phi\left(\frac{u + u'}{2}\right) \leq \frac{\phi(u) + \phi(u')}{2} - \frac{\kappa}{8} \|u - u'\|^2
$$
for all $u, u' \in [0,1]^d$.


In practice, we estimate this penalty empirically using a finite number of samples. Let $m : \mathbb{N} \to \mathbb{N}$ be an increasing function such that $\lim_{n\rightarrow +\infty} m(n) = +\infty$. Consider $U_1,\dots,U_{m(n)}$ and $U'_1,\dots,U'_{m(n)}$ as independent samples drawn according to $\lambda$. The empirical Brenier GAN loss is defined as
\begin{equation}
    \hat{\mathcal{L}}_{n, \kappa, \gamma}(\phi, D) = \hat{L}_n(\nabla \phi, D) + \gamma\hat{P}^{(\kappa)}_n(\phi),
\end{equation}
where the empirical convexity penalty is given by
\begin{equation}
    \hat{P}^{(\kappa)}_n(\phi) = \frac{1}{m(n)} \sum_{j=1}^{m(n)} \mathrm{ReLU}\left(
    \phi\left(\frac{U_j+U_j'}{2}\right) - \frac{\phi(U_j) + \phi(U_j')}{2} + \frac{\kappa}{8}\|U_j - U_j'\|^2
    \right).
\end{equation}
Given a hypothesis space of potential functions $\Hpot$ (to be defined later), the training procedure for the Brenier GAN consists of solving the optimization problem:
\begin{equation}
    \inf_{\phi \in \Hpot} \, \sup_{D \in \Hdis} \hat{\mathcal{L}}_{n, \kappa, \gamma}(\phi, D).
\end{equation}

\subsection{Hypothesis spaces}

We construct a hypothesis space of potential functions that includes both the Brenier potential and the potential described in Proposition~\ref{prop:main_gen}, but without enforcing the convexity constraint on all functions within the space. 

Throughout the section, when Assumption \ref{assumption:regularity} is satisfied, we denote $\phi^* \in C^{3,\alpha}([0,1]^d,\R)$ the Brenier potential such that $\mu^*=(\nabla \phi^*)_{\#}\lambda$. Following Remark \ref{rmk:strong_convexity}, let $M\in (1,\infty)$ such that $\frac{1}{M}\Id \le (\Hess  \phi^*)(x)\le M \Id$ for all $x\in [0,1]^d$. For simplicity, we adopt the notation $\beta:= \frac{1}{M}$.

\begin{assumption}\label{assump:potential}
Suppose Assumption \ref{assumption:regularity} is satisfied. 

  We denote for all $\epsilon>0$ by $\mathcal A^\phi(\epsilon)$ the architecture of the neural network $\phi_\epsilon$ with ReCU activation function constructed in Proposition \ref{prop:main_gen} and by $H=\|\phi_\epsilon\|_{C^{2,1}([0,1]^d,\R)}$ its $C^{2,1}([0,1]^d, \R)$-norm. We know that $\frac{1}{2M}\Id \le(\Hess  \phi_\epsilon)(x)\le 2M \Id$ for all $x\in [0,1]^d$.
  
  We assume that for all $\epsilon>0$ there exist
  $\tilde H\ge H$ such that
  the hypothesis space $\Hpot(\epsilon)$
  of generators consists of all networks $\phi\colon [0,1]^d\to \R$ with ReCU activation functions and neural network architecture $\mathcal A^\phi(\epsilon)$ such that $\|\phi\|_{C^{2,1}([0,1]^d,\R)}\le \tilde H$.

  Additionally, for any $\kappa \in (0, +\infty)$, we denote $\Hpot_{\kappa}(\varepsilon)$ the restriction of $\Hpot(\varepsilon)$ to functions $\phi$ that are $\kappa$-strongly convex.
\end{assumption}

\begin{remark}\label{rem:lip}~
\begin{itemize}
    \item $\Hpot(\varepsilon) $ is a bounded subset of $ C^{2,1}([0,1]^d, \R)$. In particular, for all $\phi \in \Hpot(\varepsilon)$, $\Hess \,\phi $ is $\tilde{H}$-Lipschitz.
   \item The hypothesis space $\Hpot_{\beta/2}(\varepsilon)$ of potentials exactly corresponds (up to gradient) to the hypothesis space of generators defined in Section \ref{sec:error_theory}, Assumption \ref{assump:generator} (recall that $\beta=1/M$).
   \end{itemize}
\end{remark}
The definition of the hypothesis space for the discriminator remains unchanged:

\begin{assumption}\label{assump:disc_2}
Suppose that Assumption \ref{assump:potential} is satisfied. 
For every $\epsilon>0$ let $\mathcal A^D(\epsilon)$ and $D_{\min}(\epsilon)$ denote the network architecture and the cut-off from Proposition \ref{prop:main_disc}, respectively.
We assume that for all $\epsilon>0$ the hypothesis space of discriminators $\Hdis(\epsilon)$ is a relatively compact (with respect to the supremum norm on $[0,1]^d$) subset of the functions $D\colon \R^d \to [0,1]$ such that $D|_{\R^d\setminus[0,1]^d}=0$, $D|_{[0,1]^d}$ is a neural network with ReLU activation function and network architecture $\mathcal A^D(\epsilon)$ and it holds that $D|_{[0,1]^d}\colon [0,1]^d\to [D_{\min}(\epsilon),1-D_{\min}(\epsilon)]$. Moreover, we assume that for every $\phi\in\Hpot_{\beta/2}(\epsilon)$ the function $D(\nabla \phi)$ constructed in Proposition \ref{prop:main_disc} is contained in 
$\Hdis(\epsilon)$ (i.e.\ $\{D(\nabla \phi),\,\phi\in\Hpot_{\beta/2}(\epsilon)\}\subseteq \Hdis(\epsilon)$).
\end{assumption}

\subsection{Strong convexity guarantee}

\begin{propo}\label{prop:penalty_hard_constraint} Suppose Assumption \ref{assump:potential} is satisfied and let $\epsilon >0$.
For every $\eta \in (0,\frac{\beta}{2})$ there exists $\zeta>0$ depending on $\varepsilon$, $\beta$ and $\tilde{H}$, 
independent of $n$, such that for every $\phi \in \Hpot(\varepsilon)$ that is not $\left(\frac{\beta}{2} - \eta\right)$-strongly convex, it holds
\begin{equation}
P^{\left( \frac{\beta}{2}\right)}(\phi) \geq \zeta.
\end{equation}
\end{propo}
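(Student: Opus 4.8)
The claim is a quantitative, compactness-free statement: any network $\phi \in \Hpot(\varepsilon)$ that fails $(\tfrac{\beta}{2}-\eta)$-strong convexity forces the penalty $P^{(\beta/2)}(\phi)$ to be bounded below by a constant depending only on $\varepsilon$, $\beta$, $\tilde H$ (not $n$). The plan is to work contrapositively in the following sense: quantify how badly strong convexity is violated at a single point, propagate that violation to a small ball using the uniform Lipschitz bound on the Hessian from Remark~\ref{rem:lip}, and then observe that on that ball the integrand of $P^{(\beta/2)}$ is strictly positive with a controlled lower bound, so the integral over $([0,1]^d)^2$ cannot be too small.

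\textbf{Step 1: localize the convexity violation.} Since $\phi \in C^{2,1}([0,1]^d,\R)$, $\phi$ is $(\tfrac{\beta}{2}-\eta)$-strongly convex if and only if $\Hess\phi(x) \ge (\tfrac{\beta}{2}-\eta)\Id$ for all $x$. If this fails, there exist $x_0 \in [0,1]^d$ and a unit vector $v \in \R^d$ with $\langle v, (\Hess\phi(x_0))v\rangle < \tfrac{\beta}{2}-\eta$, i.e.\ $\langle v, (\Hess\phi(x_0))v\rangle \le \tfrac{\beta}{2} - \eta$. Using that $\Hess\phi$ is $\tilde H$-Lipschitz, for all $x$ with $\|x-x_0\| \le r$ we get $\langle v, (\Hess\phi(x))v\rangle \le \tfrac{\beta}{2} - \eta + \tilde H r$. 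Choosing $r = r(\eta,\tilde H) := \eta/(2\tilde H)$ gives $\langle v, (\Hess\phi(x))v\rangle \le \tfrac{\beta}{2} - \tfrac{\eta}{2}$ on the ball $B_r(x_0)$.

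\textbf{Step 2: convert the Hessian bound into a defect in the penalty integrand.} For $u, u' \in [0,1]^d$ write $q(u,u') := \phi(\tfrac{u+u'}{2}) - \tfrac{\phi(u)+\phi(u')}{2} + \tfrac{\beta}{16}\|u-u'\|^2$ (the argument of $\mathrm{ReLU}$ with $\kappa = \beta/2$). By the exact Taylor-with-integral-remainder identity,
\begin{equation*}
\phi\!\left(\tfrac{u+u'}{2}\right) - \tfrac{\phi(u)+\phi(u')}{2} = -\tfrac{1}{8}\int_0^1 (1-|2t-1|)\,\langle u-u', (\Hess\phi(\xi_t))(u-u')\rangle\,dt,
\end{equation*}
for suitable points $\xi_t$ on the segment $[u,u']$ (this is the standard second-order midpoint formula). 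Now take $u = x_0 + s v$, $u' = x_0 - s v$ with $s$ small enough that both points lie in $B_r(x_0) \cap [0,1]^d$ (possible on a set of $u,u'$ of positive measure, with measure bounded below in terms of $r$ and $d$, after a mild adjustment if $x_0$ is near $\partial[0,1]^d$ — one can always center the pair inside the cube by shrinking constants). Then $u - u' = 2sv$, the segment stays in $B_r(x_0)$, and the Hessian bound of Step~1 gives $\phi(\tfrac{u+u'}{2}) - \tfrac{\phi(u)+\phi(u')}{2} \ge -\tfrac{1}{8}(2s)^2(\tfrac{\beta}{2}-\tfrac{\eta}{2})\cdot\tfrac14 \cdot (\text{const})$ — more cleanly, $\phi(\tfrac{u+u'}{2}) - \tfrac{\phi(u)+\phi(u')}{2} \ge -\tfrac{s^2}{2}(\tfrac{\beta}{2}-\tfrac{\eta}{2})$ by evaluating $\int_0^1(1-|2t-1|)\,dt = \tfrac12$. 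Hence $q(u,u') \ge -\tfrac{s^2}{2}(\tfrac{\beta}{2}-\tfrac{\eta}{2}) + \tfrac{\beta}{16}(2s)^2 = \tfrac{s^2}{4}\big(\tfrac{\beta}{2} + \tfrac{\eta}{2}\big) - \tfrac{s^2}{4}\cdot\tfrac{\beta}{2} + (\text{bookkeeping}) = \tfrac{\eta s^2}{8} > 0$ (the precise constant is immaterial; what matters is $q(u,u') \ge c_1 \eta s^2$ for an absolute $c_1 > 0$).

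\textbf{Step 3: integrate.} Fix $s_0 = s_0(\eta, \tilde H, d) := \tfrac{1}{4}\min\{r, 1\}$ and note that for every pair $(u,u')$ of the form $(w+sv, w-sv)$ with $w \in B_{s_0}(x_0) \cap [0,1]^d$ suitably interior and $s \in [s_0/2, s_0]$, the above argument (with $x_0$ replaced by the midpoint $w$, whose Hessian still satisfies the bound since $w \in B_r(x_0)$) yields $\mathrm{ReLU}(q(u,u')) = q(u,u') \ge c_1 \eta (s_0/2)^2$. The set of such $(u,u') \in ([0,1]^d)^2$ has Lebesgue measure bounded below by a constant $c_2(d)\, s_0^{\,2d}$ (it contains a product of balls of radius comparable to $s_0$, up to a translation to keep things inside the cube when $x_0 \in \partial[0,1]^d$; here we use that $[0,1]^d$ contains a ball of radius $\tfrac12$, so one can always slide $x_0$ inward by at most $\tfrac{\sqrt d}{2}$ while only shrinking $r$ — all absorbed into constants). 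Therefore
\begin{equation*}
P^{(\beta/2)}(\phi) = \int_{([0,1]^d)^2} \mathrm{ReLU}(q(u,u'))\,du\,du' \ge c_1 \eta \left(\tfrac{s_0}{2}\right)^2 \cdot c_2(d)\, s_0^{\,2d} =: \zeta,
\end{equation*}
and $\zeta$ depends only on $\eta$, $\tilde H$ (through $r$ and hence $s_0$), $\beta$, and $d$ — crucially not on $\phi$ or $n$. Since by Remark~\ref{rem:lip} the bound $\tilde H$ depends on $\varepsilon$, this is the claimed $\varepsilon$-dependence.

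\textbf{Main obstacle.} The genuinely delicate point is the boundary bookkeeping in Steps~2--3: when the violation point $x_0$ sits on or near $\partial[0,1]^d$, the symmetric pairs $(w \pm sv)$ and the ball $B_{s_0}(x_0)$ may poke outside the cube, so one cannot directly integrate over a full neighborhood of $x_0$. The fix is to translate: because $[0,1]^d \supseteq B_{1/2}(c)$ for its center $c$, one can always find a point $x_0' \in [0,1]^d$ with $B_\rho(x_0') \subseteq [0,1]^d$ for $\rho = \tfrac12\min\{r,1\}$ such that $x_0'$ still lies within $B_r(x_0)$ after possibly further shrinking $r$ by a $d$-dependent factor — ensuring the Hessian defect bound of Step~1 still holds on $B_\rho(x_0')$ — and then run Steps~2--3 centered at $x_0'$. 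Making this translation argument clean while tracking that every constant remains independent of $\phi$ and $n$ is the main thing to get right; the Taylor estimate and the final integration are routine.
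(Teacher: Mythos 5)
Your overall strategy—locate a violation point $(x_0,v)$, propagate the Hessian bound to a neighborhood via the Lipschitz constant $\tilde H$, lower-bound the penalty integrand by a Taylor expansion with integral remainder, and integrate over a set of positive measure—is exactly the strategy of the paper's proof. However, there is a genuine gap in Step~3 that the paper avoids.

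\textbf{The measure-zero problem.} You only propagate the Hessian bound in the $x$-variable (Step~1: $\langle v,\Hess\phi(x)v\rangle \le \frac\beta2-\frac\eta2$ for $x\in B_r(x_0)$, with $v$ held fixed). Consequently in Steps~2--3 you are forced to consider only pairs $(u,u')=(w+sv,\,w-sv)$ with the direction $\frac{u-u'}{\|u-u'\|}$ frozen at $v$. The set $\{(w+sv,w-sv): w\in B_{s_0}(x_0),\ s\in[s_0/2,s_0]\}$ is a $(d+1)$-dimensional subset of $([0,1]^d)^2$, which has Lebesgue measure zero for $d\ge 2$. Your subsequent claim that this set ``contains a product of balls of radius comparable to $s_0$'' is therefore false, and the conclusion $P^{(\beta/2)}(\phi)\ge c_1\eta(s_0/2)^2\cdot c_2(d)s_0^{2d}$ does not follow. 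The paper avoids this by additionally perturbing the direction: it establishes
\[
|\langle v,\Hess\phi(x)v\rangle-\langle v_0,\Hess\phi(x_0)v_0\rangle|\le 3\tilde H\max\{\|x-x_0\|,\|v-v_0\|\}
\]
(using that $\|\Hess\phi(y)\|_2\le\tilde H$ so the bilinear form is Lipschitz in $v$ as well as in $x$), and then integrates over $B=\{(x,y): x,y\in B_\delta(x_0),\ \frac{x-y}{\|x-y\|}\in B_\delta(v_0)\cap\mathbb S^1\}$, a genuinely $2d$-dimensional set. You could instead salvage your version by observing that $q(u,u')$ is uniformly Lipschitz in $(u,u')$ (with constant controlled by $\tilde H$, $\beta$, $d$), so positivity at a single pair forces positivity on a full-dimensional ball around it; but as written the argument stops at a measure-zero set.

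\textbf{Two smaller remarks.} Your midpoint Taylor identity has the wrong prefactor: the correct formula is $\phi(\frac{u+u'}2)-\frac12(\phi(u)+\phi(u'))=-\frac12\int_0^1(\frac12-|t-\frac12|)\langle u-u',\Hess\phi(\cdot)(u-u')\rangle\,dt$, which is twice the $-\frac18\int_0^1(1-|2t-1|)\cdots$ you wrote; your subsequent arithmetic then slips again so the factor-of-two errors partially cancel. This is immaterial since, as you note, only positivity of the constant matters. Finally, your translation argument for the boundary in the ``Main obstacle'' paragraph does not actually work: if $x_0$ is a corner of $[0,1]^d$ and $r$ is small, there is no $x_0'\in B_r(x_0)$ with $B_{\frac12\min\{r,1\}}(x_0')\subseteq[0,1]^d$. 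The paper sidesteps this by defining $B\subseteq([0,1]^d)^2$ directly and asserting (without detail) that $\zeta=\frac{\eta}{16}\int_B\|x-y\|^2\,d(x,y)$ is independent of $\phi$; a fully rigorous treatment would argue that $(x_0,v_0)\mapsto\int_{B(x_0,v_0)}\|x-y\|^2\,d(x,y)$ is lower semicontinuous and positive on the compact set $[0,1]^d\times\mathbb S^1$, hence attains a positive minimum. You were right to flag this as delicate, but the proposed fix needs replacing.
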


\begin{proof}

Let $\eta \in (0,\frac{\beta}{2})$ and let $\phi \in \Hpot(\varepsilon)$ be not $\left(\frac{\beta}{2} - \eta\right)$-strongly convex. Then there exists $x_0 \in [0, 1]^d$ and $v_0 \in \mathbb S^1=\{v\in \R^d:\|v\|=1\}$ such that 
\begin{equation}\label{eq:spectral_radius_control}
    \langle v_0, \Hess\, \phi (x_{0}) v_0 \rangle \leq \frac{\beta}{2} - \eta.
\end{equation}
Next, note that for all $x,y\in [0,1]^d$, $v,w\in \mathbb S^1$ we have
\begin{align}
   |\langle v, \Hess\, \phi (x) v \rangle-\langle w, \Hess\, \phi (y), w \rangle |
   &\le 
   |\langle v, (\Hess\, \phi (x)-\Hess\, \phi (y)) v \rangle | +
   |\langle v, \Hess\, \phi (y) v \rangle -\langle w, \Hess\, \phi (y) w \rangle | \nonumber\\
   &=
   |\langle v, (\Hess\, \phi (x)-\Hess\, \phi (y)) v \rangle | +
   |\langle v+w, \Hess\, \phi (y) (v-w) \rangle |\nonumber \\
   &\le \tilde{H}\|x-y\|+2\tilde{H}\|v-w\| \nonumber\\
   &\le 3\tilde{H}\max\{\|x-y\|,\|v-w\|\}.
\end{align}
This implies for all $x\in B_{\eta/(6\tilde{H})}(x_0)\cap [0,1]^d$, $v\in B_{\eta/(6\tilde{H})}(v_0)\cap \mathbb S^1$,
\begin{equation}
   \langle v, \Hess\, \phi (x) v \rangle =
   \langle v, \Hess\, \phi (x) v \rangle -\langle v_0, \Hess\, \phi (x_0) v_0 \rangle +\langle v_0, \Hess\, \phi (x_0) v_0 \rangle 
   \le 
   \frac{\eta}{2} + \frac{\beta}{2} - \eta = \frac{\beta-\eta}2.
\end{equation}
Next let $B=\{(x,y)\in ([0,1]^d)^2: x,y \in B_{\eta/(6\tilde{H})}(x_0), x\neq y, \frac{x-y}{\|x-y\|}\in B_{\eta/(6\tilde{H})}(v_0)\}$. Then 
we have by Taylor's formula, for all $(x,y)\in B$, that
\begin{equation}
\begin{split}
    &\phi\left(\frac{x+y}{2}\right) - \frac{1}{2} ( \phi(x) +\phi(y))
    =
    -\frac{1}{2}\int_0^1 \left(\frac{1}{2}-\left|t-\frac{1}{2}\right|\right)\langle x-y,(\Hess \phi) (x+t(y-x))
    (x-y)\rangle dt\\
    &\ge 
    -\frac{1}{2}\int_0^1 \left(\frac{1}{2}-\left|t-\frac{1}{2}\right|\right)
    \frac{\beta-\eta}{2}\|x-y\|^2 dt
    = -
    \frac{\beta-\eta}{16}\|x-y\|^2
\end{split}
\end{equation}
and consequently
\begin{equation}
    \phi\left(\frac{x+y}{2}\right) - \frac{1}{2} ( \phi(x) +\phi(y))
    + \frac{\beta}{16}\|x-y\|^2
    \ge 
    \frac{\eta}{16}\|x-y\|^2.
\end{equation}
From this we conclude that
\begin{equation}
    P^{(\frac{\beta}{2})}(\phi)
\ge 
 \int_{B} \mathrm{ReLU}\left( \phi\left(\frac{x+y}{2}\right) - \frac{1}{2} ( \phi(x) +\phi(y))+ \frac{\beta}{16} \Vert x - y \Vert^2  \right) d(x,y)\ge \zeta, 
\end{equation}
where $\zeta=\frac{\eta}{16}\int_B \|x-y\|^2d(x,y)$. Note that $\zeta>0$ because $B \neq \emptyset$ and that $\zeta$ does not depend on $\phi$ (but only on $\beta$, $\eta$ and $\tilde{H}$). This completes the proof.
\end{proof}

\begin{propo}\label{prop:ULLN_result_penalty}
Suppose Assumption \ref{assump:potential} is satisfied.
  It holds a.s. that
    \begin{equation}\label{eq:ULLN_result_penalty}
        \sup_{\phi \in \Hpot(\varepsilon)} \, | \hat{P}^{(\beta)}_n(\phi) - P^{(\beta)}(\phi) | \underset{n \rightarrow +\infty}{\longrightarrow} 0.
    \end{equation}
\end{propo}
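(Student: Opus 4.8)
The statement is a uniform law of large numbers for the empirical convexity penalty, and the plan is to apply the same classical uniform-LLN result already invoked in the proof of Proposition~\ref{propo:uslln} (e.g.~\cite[Theorem 16]{ferguson2017course}). First I would write the penalty as an expectation of the function
\[
Q(u,u',\phi)=\mathrm{ReLU}\!\left(\phi\!\left(\tfrac{u+u'}{2}\right)-\tfrac{\phi(u)+\phi(u')}{2}+\tfrac{\beta}{8}\|u-u'\|^2\right),
\]
so that $P^{(\beta)}(\phi)=\mathbb{E}[Q(U,U',\phi)]$ and $\hat P^{(\beta)}_n(\phi)=\frac{1}{m(n)}\sum_{j=1}^{m(n)}Q(U_j,U_j',\phi)$. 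Since $m(n)\to\infty$ and the $(U_j,U_j')$ are i.i.d.\ on $([0,1]^d)^2\sim \lambda\otimes\lambda$, the sum over $j\le m(n)$ is a subsequence of the partial averages over all samples, so a uniform LLN over the index $n$ follows once it holds along the full sample sequence.

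The hypotheses to check are: (i) the parameter set is (relatively) compact; (ii) $Q(U,U',\cdot)$ is a.s.\ continuous on it; (iii) $Q$ is dominated by an integrable (here: bounded) envelope. For (i), by Remark~\ref{rem:lip}, $\Hpot(\varepsilon)$ is a bounded subset of $C^{2,1}([0,1]^d,\R)$ with a uniform $C^{2,1}$-bound $\tilde H$; hence by Arzel\`a--Ascoli it is relatively compact in $C^{1}([0,1]^d,\R)$, and we may pass to its closure $\mathcal K$ (which consists of $C^{1,1}$ functions with the same bounds — in particular the map $Q(u,u',\cdot)$ extends continuously). For (ii), fix $(u,u')$; the map $\phi\mapsto \phi(\tfrac{u+u'}{2})-\tfrac12(\phi(u)+\phi(u'))+\tfrac{\beta}{8}\|u-u'\|^2$ is continuous for the uniform (indeed $C^0$) topology, and $\mathrm{ReLU}$ is continuous, so $Q(u,u',\cdot)$ is continuous on $\mathcal K$ for \emph{every} $(u,u')$, not just a.s. For (iii), on $\mathcal K$ we have $\|\phi\|_{C^0}\le \tilde H$ uniformly, and $\|u-u'\|^2\le d$, so $0\le Q(u,u',\phi)\le 2\tilde H+\tfrac{\beta d}{8}=:C<\infty$, a constant (hence integrable) envelope.

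With (i)--(iii) verified, the uniform LLN yields a.s.
\[
\sup_{\phi\in\mathcal K}\Big|\frac{1}{k}\sum_{j=1}^{k}Q(U_j,U_j',\phi)-\mathbb{E}[Q(U,U',\phi)]\Big|\xrightarrow[k\to\infty]{}0,
\]
and restricting the supremum to $\Hpot(\varepsilon)\subseteq\mathcal K$ and specializing $k=m(n)\to\infty$ gives \eqref{eq:ULLN_result_penalty}. I do not expect a genuine obstacle here; the only mild subtlety is the bookkeeping of extending $Q$ and the continuity statement to the closure $\mathcal K$ in $C^1$ (so that compactness is available), and noting that reindexing by $m(n)$ is harmless because it is an increasing, divergent subsequence of the natural sample index. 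This mirrors almost verbatim the structure of the proof of Proposition~\ref{propo:uslln}, with the bounded penalty integrand replacing the log-type integrand there.
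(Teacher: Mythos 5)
Your proposal is correct and follows essentially the same route as the paper: write the penalty as an expectation of the bounded, $\phi$-continuous integrand $Q$, note that $\Hpot(\varepsilon)$ is relatively compact (the paper uses the compact embedding into $C^0$, you use Arzel\`a--Ascoli into $C^1$, either suffices since $Q$ depends only on function values), and then apply the uniform law of large numbers from \cite[Theorem 16]{ferguson2017course} as in Proposition~\ref{propo:uslln}.
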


\begin{proof}
    The goal is to apply the uniform law of large numbers as stated in \cite[Theorem 16]{ferguson2017course}. To this end, we first note that $\Hpot(\varepsilon)$ is relatively compact in $C^0([0,1]^d, \R)$ because it is bounded in $C^{2,1}([0,1]^d, \R)$ and the compact inclusion $C^{2,1}([0,1]^d, \R) \subseteq C^0([0,1]^d, \R)$ holds. 
    We define for all $(u, u')\in [0,1]^d$, for all $\phi \in \Hpot(\varepsilon)$, 
    \begin{equation}
        h((u,u'), \phi) = \mathrm{ReLU}\left( \phi\left(\frac{u+u'}{2}\right) - \frac{1}{2} ( \phi(u) +\phi(u'))+ \frac{\beta}{8} \Vert u - u' \Vert^2  \right).
    \end{equation}
    For all $(u, u')\in [0,1]^d$, $\phi \mapsto h((u,u'), \phi)$ is continuous. In addition, the compactness of $[0,1]^d$ implies that there exists a constant $c>0$ such that for all $(u, u')\in [0,1]^d$, for all $\phi \in \Hpot(\varepsilon)$, $|h((u,u'), \phi)| \leq c$. The right-hand term has a finite expected value with respect to the probability measure $\mathbb{P}_U$.
    Thus, all the conditions to apply the uniform law of large numbers are met and imply \eqref{eq:ULLN_result_penalty}.
\end{proof}


We conclude this section with the following proposition. It states that for a penalty parameter sufficiently large and enough samples, the potential estimated when solving the min-max optimization problem is ensured to be strongly convex.
\begin{propo}\label{prop:penalty_hard_constraint_empirical}
Suppose Assumption \ref{assump:disc_2} is satisfied and let $\epsilon>0$. Let $\eta \in (0,\frac{\beta}{2})$.
Then there exists $\gamma >0$ and $\bar{n} \in \N$, such that for all $n \geq \bar{n}$, 
\begin{equation}
 \inf_{\phi \in \Hpot(\varepsilon)}\sup_{D \in \Hdis(\varepsilon)} \hat{L}_n(\nabla \phi, D) + \gamma \hat{P}_n^{\left( \beta/2\right)}(\phi) = \inf_{\phi \in \Hpot_{\frac{\beta}{2}-\eta}(\varepsilon)}\sup_{D \in \Hdis(\varepsilon)} \hat{L}_n(\nabla \phi, D) + \gamma \hat{P}_n^{\left( \beta/2\right)}(\phi).
\end{equation}
\end{propo}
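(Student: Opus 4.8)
The plan is to combine the strong-convexity lower bound on the true penalty from Proposition~\ref{prop:penalty_hard_constraint} with the uniform law of large numbers for the empirical penalty from Proposition~\ref{prop:ULLN_result_penalty}, together with a uniform bound on the GAN loss term $\hat L_n(\nabla\phi,D)$ over the compact hypothesis spaces, and then choose $\gamma$ large enough that any $\phi\in\Hpot(\varepsilon)$ which fails to be $(\beta/2-\eta)$-strongly convex is strictly beaten (in the penalized objective) by a fixed competitor that \emph{is} $(\beta/2-\eta)$-strongly convex.

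First I would fix a reference competitor: the network $\phi_\varepsilon$ from Proposition~\ref{prop:main_gen}, which lies in $\Hpot(\varepsilon)$ and satisfies $\tfrac{1}{2M}\Id\le\Hess\phi_\varepsilon\le 2M\Id$, hence is $\beta/2$-strongly convex (note $\beta/2=\tfrac1{2M}$), so in particular $(\beta/2-\eta)$-strongly convex and $\hat P_n^{(\beta/2)}(\phi_\varepsilon)=0$ for every $n$ and every sample (the ReLU argument is $\le 0$ pointwise by the strong-convexity characterization). Therefore
\begin{equation*}
\sup_{D\in\Hdis(\varepsilon)}\hat L_n(\nabla\phi_\varepsilon,D)+\gamma\hat P_n^{(\beta/2)}(\phi_\varepsilon)
=\sup_{D\in\Hdis(\varepsilon)}\hat L_n(\nabla\phi_\varepsilon,D)\le 0,
\end{equation*}
since the GAN loss is a sum of logarithms of numbers in $[D_{\min}(\varepsilon),1-D_{\min}(\varepsilon)]$, hence bounded; call a uniform lower bound $-K$, where $K$ depends only on $\varepsilon$ (through $D_{\min}(\varepsilon)$) and not on $n$ or $\phi$. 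Likewise $\sup_{D}\hat L_n(\nabla\phi,D)\ge -K$ for every $\phi\in\Hpot(\varepsilon)$. Next, by Proposition~\ref{prop:penalty_hard_constraint} there is $\zeta=\zeta(\varepsilon,\beta,\tilde H,\eta)>0$ such that $P^{(\beta/2)}(\phi)\ge\zeta$ for every $\phi\in\Hpot(\varepsilon)$ that is not $(\beta/2-\eta)$-strongly convex; and by Proposition~\ref{prop:ULLN_result_penalty} there is a.s.\ an $\bar n\in\N$ (depending on $\omega$) with $\sup_{\phi\in\Hpot(\varepsilon)}|\hat P_n^{(\beta/2)}(\phi)-P^{(\beta/2)}(\phi)|\le\zeta/2$ for all $n\ge\bar n$, so $\hat P_n^{(\beta/2)}(\phi)\ge\zeta/2$ for all such non-$(\beta/2-\eta)$-convex $\phi$ and all $n\ge\bar n$.

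Then I would choose $\gamma>0$ with $\gamma\zeta/2>2K$, i.e.\ $\gamma>4K/\zeta$; this depends only on $\varepsilon$, $\beta$, $\tilde H$, $\eta$ and not on $n$. For any $\phi\in\Hpot(\varepsilon)$ that is \emph{not} $(\beta/2-\eta)$-strongly convex and any $n\ge\bar n$,
\begin{equation*}
\sup_{D}\hat L_n(\nabla\phi,D)+\gamma\hat P_n^{(\beta/2)}(\phi)
\ge -K+\gamma\tfrac{\zeta}{2}> K\ge \sup_{D}\hat L_n(\nabla\phi_\varepsilon,D)+\gamma\hat P_n^{(\beta/2)}(\phi_\varepsilon),
\end{equation*}
so no such $\phi$ can attain the infimum over $\Hpot(\varepsilon)$; since $\phi_\varepsilon\in\Hpot_{\beta/2-\eta}(\varepsilon)\subseteq\Hpot(\varepsilon)$, the infimum over $\Hpot(\varepsilon)$ equals the infimum over $\Hpot_{\beta/2-\eta}(\varepsilon)$, which is the claim. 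I expect the main (minor) obstacle to be the bookkeeping around the ReLU penalty: one must verify carefully that $\kappa$-strong convexity of $\phi$ forces the ReLU argument in $\hat P_n^{(\kappa)}(\phi)$ to be nonpositive at every sample point (so $\hat P_n^{(\beta/2)}(\phi_\varepsilon)=0$ deterministically, not just in expectation), which is exactly the midpoint characterization of $\kappa$-strong convexity recalled after the definition of $P^{(\kappa)}$; and to note that $\bar n$ is random but the same choice of $\gamma$ works on the full probability-one event from Proposition~\ref{prop:ULLN_result_penalty}.
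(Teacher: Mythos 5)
Your proposal is correct and follows essentially the same route as the paper: fix the reference network $\phi_\varepsilon$ as a $(\beta/2)$-strongly convex competitor with zero empirical penalty, bound $\sup_D \hat L_n(\nabla\phi,D)$ uniformly over $\Hpot(\varepsilon)$ and $n$ using the clipping $D|_{[0,1]^d}\in[D_{\min},1-D_{\min}]$, invoke Proposition~\ref{prop:penalty_hard_constraint} for the lower bound $\zeta$ on the population penalty, invoke Proposition~\ref{prop:ULLN_result_penalty} to transfer this to $\hat P_n^{(\beta/2)}\ge\zeta/2$ for $n\ge\bar n$, and pick $\gamma$ large enough that the penalty swamps the loss gap. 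The paper writes the uniform bound as $c_-\le\sup_D\hat L_n\le c_+$ and takes $\gamma>2(c_+-c_-)/\zeta$, while you use $-K\le\sup_D\hat L_n\le 0$ and $\gamma>4K/\zeta$ — a cosmetic difference only. Your explicit remarks that $\hat P_n^{(\beta/2)}(\phi_\varepsilon)=0$ holds deterministically (pointwise nonpositivity of the ReLU argument) and that $\bar n$ is sample-dependent while $\gamma$ is not are both correct and faithful to what the paper leaves implicit.
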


\begin{proof}
 We start noting that, according to the definition of the hypothesis space of discriminators, there exist constants $c_-, c_+ \in \R$ with $c_+>c_-$ such that for every $\phi \in \Hpot(\varepsilon)$ and $n\in \N$
\begin{equation}
c_- \leq \sup_{D \in \Hdis(\varepsilon)} \hat{L}_n(\nabla \phi, D) \leq c_+.
\end{equation}
Additionally, we have $\phi_\varepsilon \in \Hpot_{\beta/2}(\varepsilon) \subseteq \Hpot(\varepsilon)$. Thus, $P^{\left( \frac{\beta}{2}\right)}(\phi_\varepsilon) =0$ and 
\begin{equation}
\inf_{\phi \in  \Hpot(\varepsilon)} \sup_{D \in \Hdis(\varepsilon)} \hat{L}_n(\nabla \phi, D) + \gamma \hat{P}_n^{\left( \beta/2\right)}(\phi)  \leq c_+.
\end{equation}
According to Proposition \ref{prop:penalty_hard_constraint}, there exists $\zeta>0$ so that for every $\phi \in \Hpot(\varepsilon)$ that is not $\left(\frac{\beta}{2} - \eta\right)$-strongly convex, $P^{\left( \frac{\beta}{2}\right)}(\phi) \ge  \zeta$. 

Additionally, according to Proposition \ref{prop:ULLN_result_penalty}, there exists $\bar{n} \in \N$ such that for all $n \geq \bar{n}$ and $\phi \in \Hpot(\varepsilon)$  we have $|\hat{P}_n^{\left( \frac{\beta}{2}\right)}(\phi) - P^{\left( \frac{\beta}{2}\right)}(\phi) | \leq \zeta/2$. We deduce that $\hat{P}_n^{\left( \frac{\beta}{2}\right)}(\phi) \geq \zeta/2$. 

Finally, let $\gamma>\frac{2(c_+-c_-)}{\zeta}$. For every
 $\phi \in \Hpot(\varepsilon)$ that is not $\left(\frac{\beta}{2} - \eta\right)$-strongly convex we have
\begin{align}
    \sup_{D \in \Hdis(\varepsilon)} \hat{L}_n(\nabla \phi, D) + \gamma \hat{P}_n^{\left( \beta/2\right)}(\phi) &\ge c_-+\gamma \hat{P}_n^{\left( \frac{\beta}{2}\right)}(\phi) \nonumber \\
    &\ge c_-+\gamma \zeta/2 \nonumber \\
    &>c_+\nonumber \\
    &\ge \inf_{\bar{\phi} \in  \Hpot(\varepsilon)} \sup_{D \in \Hdis(\varepsilon)} \hat{L}_n(\nabla \phi, D) + \gamma \hat{P}_n^{\left( \beta/2\right)}(\bar{\phi}).
\end{align}
    This concludes the proof.
\end{proof}

\section{Experiments}

We conduct numerical experiments to evaluate the Brenier GAN on several datasets: (i) 2D Gaussian mixtures, used as a sanity check, and (ii) three grayscale image datasets — MNIST, Fashion-MNIST, and (iii) NORB. The results demonstrate that the Brenier-GAN effectively approximates complex target distributions while simultaneously learning a convex potential, consistent with the theoretical formulation.

\subsection{Synthetic 2D example}
\begin{figure}
    \centering
    \begin{tabular}{cccc}
       \rotatebox[origin=l]{90}{True distribution} & \includegraphics[width=0.3\linewidth]{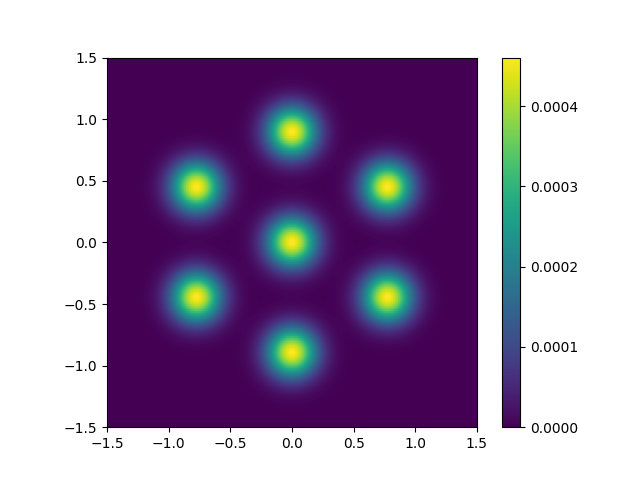} &  \\
       \rotatebox[origin=l]{90}{\qquad $\gamma=0.0$}  & \includegraphics[width=0.3\linewidth]{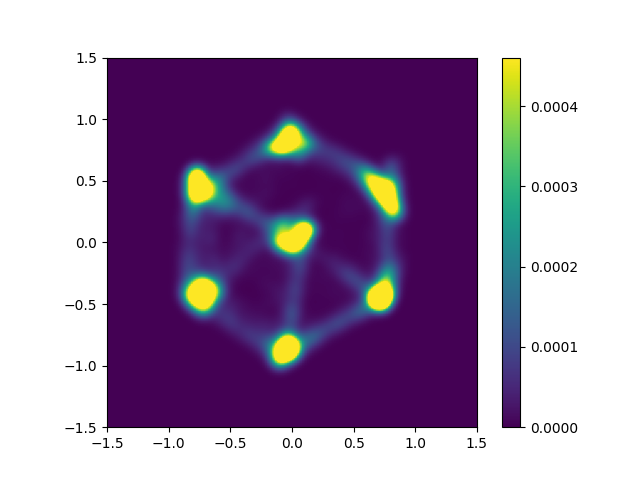} & \includegraphics[width=0.3\linewidth]{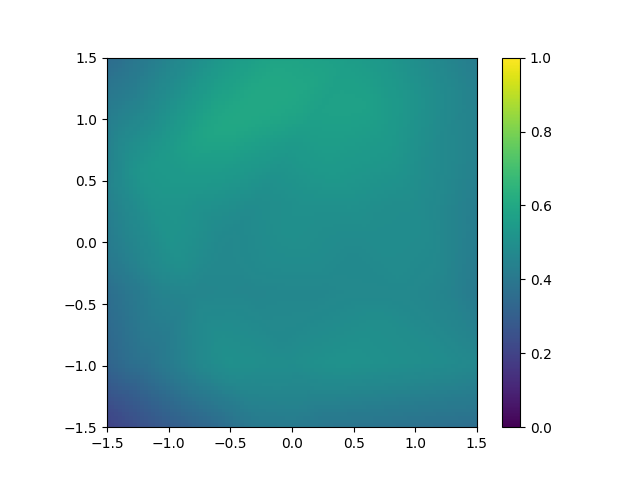} & \includegraphics[width=0.35\linewidth]{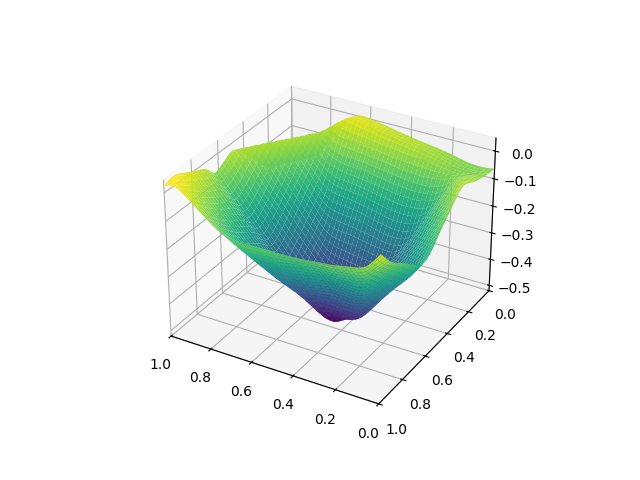} \\
        \rotatebox[origin=l]{90}{\qquad $\gamma=0.03$}  & \includegraphics[width=0.3\linewidth]{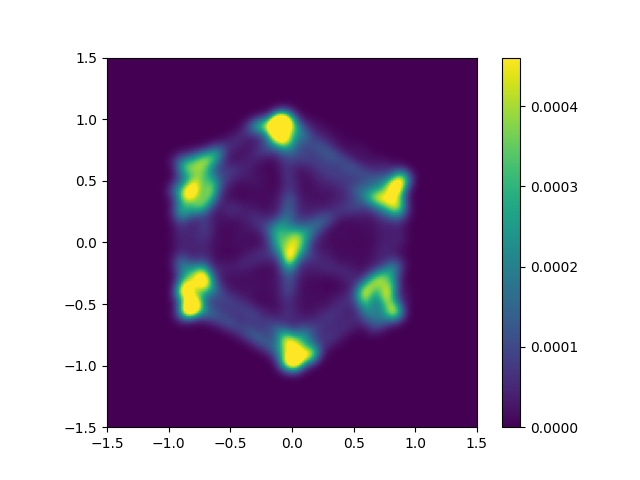} & \includegraphics[width=0.3\linewidth]{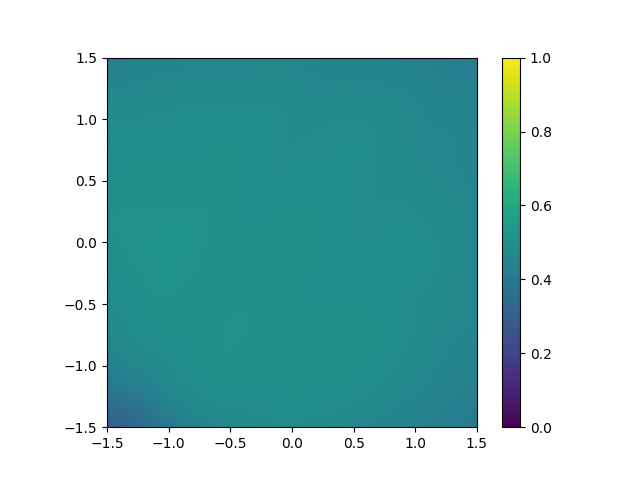} & \includegraphics[width=0.35\linewidth]{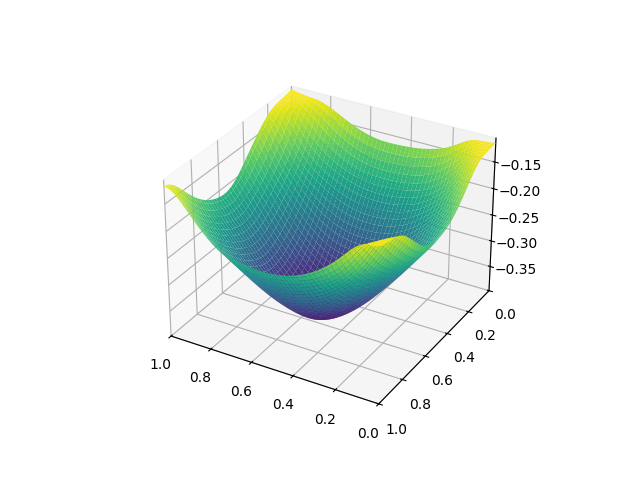} \\
        \rotatebox[origin=l]{90}{\qquad $\gamma=0.06$}  & \includegraphics[width=0.3\linewidth]{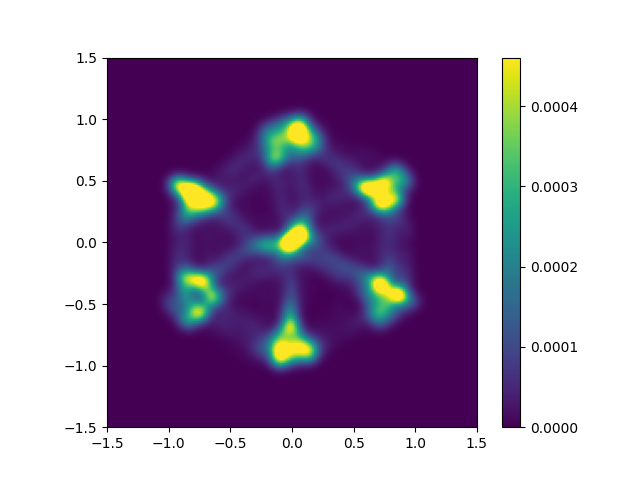} & \includegraphics[width=0.3\linewidth]{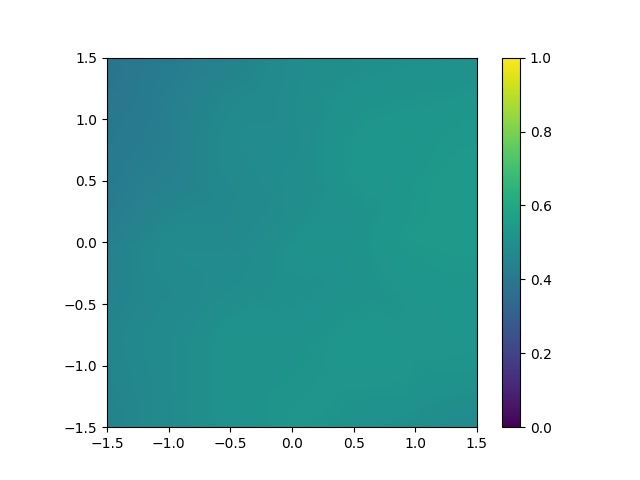} & \includegraphics[width=0.35\linewidth]{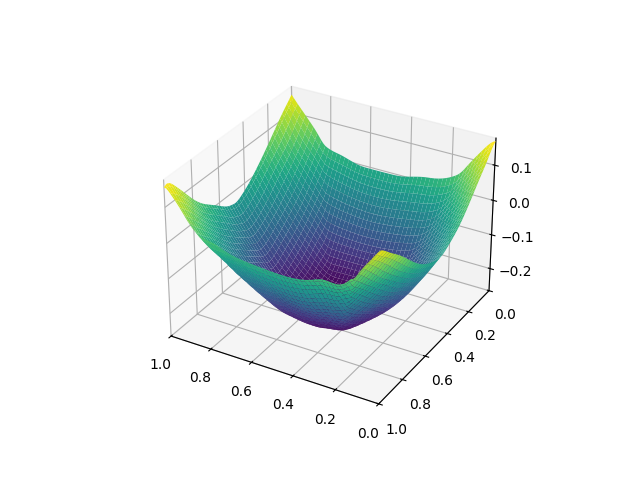} \\
        \rotatebox[origin=l]{90}{\qquad $\gamma=0.1$}  & \includegraphics[width=0.3\linewidth]{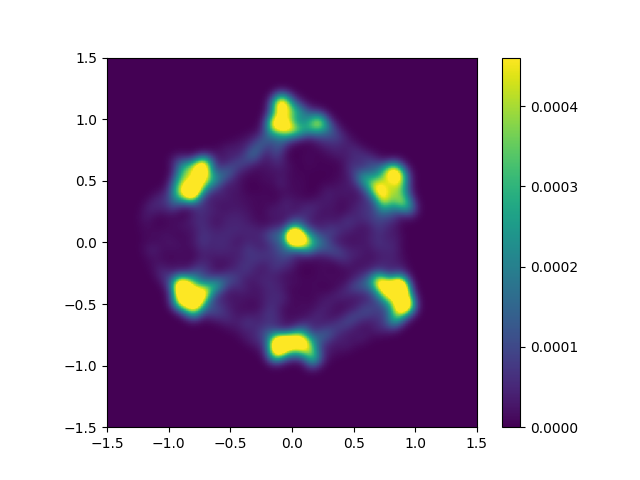} & \includegraphics[width=0.3\linewidth]{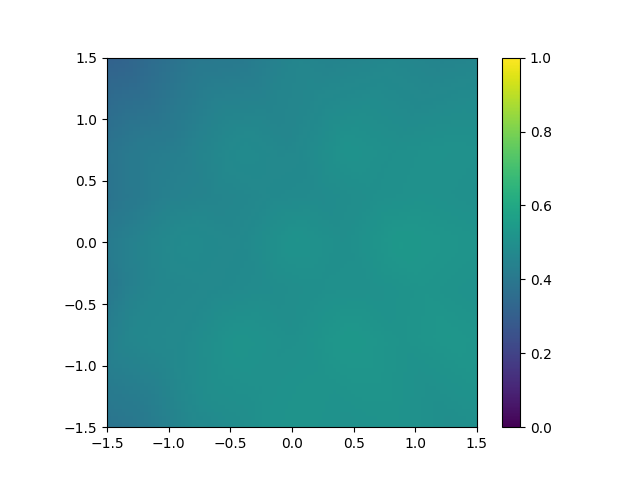} & \includegraphics[width=0.35\linewidth]{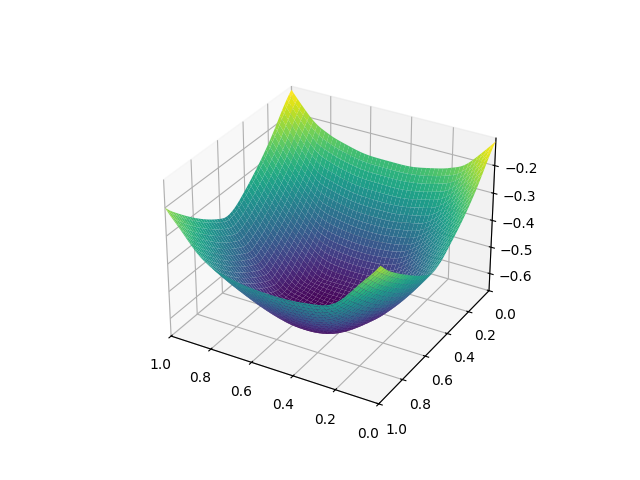} 
    \end{tabular}
    \caption{Output of the Brenier GAN for different values of the penalty parameter $\gamma$ after 1000 epochs: (left) estimated distribution; (center) learned discriminator; (right) learned potential function.}
    \label{fig:results_GMM}
\end{figure}

We first implement and evaluate the Brenier GAN on a synthetic example. Here the target distribution, denoted as $\mu^*$, is a 2D Gaussian Mixture Model (GMM) with seven modes, as shown in Figure \ref{fig:results_GMM} (first row). Each mode has a covariance matrix $\sigma^2 I_d$ with $\sigma^2 = 0.02$. We generate a training dataset with $60\,000$ samples.

For training, we set the learning rate to $0.001$, batch-size to 1000 and the strong convexity parameter $\kappa = 0.1$. The generator network (i.e., the network modeling the Brenier potential) is a 5-layer fully connected architecture with ReCU activations and hidden layers of width 16, 32, 64, 32, and a final output layer in $\R$. The discriminator network includes three fully connected layers with widths 128, 64, and 1, using LeakyReLU activations and a Sigmoid output. To mitigate mode collapse, we reinitialized the discriminator’s weights every 50 epochs until epoch 500. Due to computational complexity we evaluate the convexity loss based on $20$ samples.

The Brenier GAN was trained for 1000 epochs. The estimated density after training, visualized in Figure \ref{fig:results_GMM} (right) using a kernel density estimator with bandwidth 0.1, closely approximates the true distribution $\mu^*$. 

For this experiment, it is worth mentioning that we observed sensitivity to the parameter $\kappa$, where lower values led to accurate mode detection but underestimated the variance, distorting the mode shapes, while too high values of $\kappa$ tend to connect more the modes between them. We selected the value of $\kappa$ achieving the best tradeoff. 

The results obtained for different values of the penalty parameter $\gamma$ are shown in Figure~\ref{fig:results_GMM}. We observe that the choice $\gamma = 0.1$ appears suitable for enforcing the convexity of the potential, which is not convex when $\gamma \in \{0.0, 0.03, 0.06\}$. Moreover, the use of this penalization term has little to no impact on the estimation of the target distribution. As expected, the discriminator becomes approximately constant at 0.5 by the end of training, indicating that it can no longer distinguish between real and generated samples.

\subsection{Handwritten digits and fashion generation}
To showcase the applicability of the Brenier-GAN to computable real world problems, we trained Brenier-GANs to generated handwritten digits based on the MNIST dataset~\cite{lecun1998mnist} as well as T-shirts based on the Fashion-MNIST dataset~\cite{xiao2017fashionmnist}.
The MNIST dataset consists of $60\,000$ gray-scale trainings images of handwritten digits from $0$ to $9$. Each digit is centered in a $32\times32$ image. The Fashion-MNIST dataset adapts the format of MNIST but increases the complexity by moving to $10$ different articles of clothing and accessory.
Example images from both datasets are depicted in \cref{fig:mist-fashionmnist}. 
\begin{figure}
    \centering
    \begin{subfigure}{0.8\linewidth}
        \includegraphics[width=\linewidth]{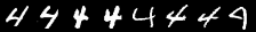}
        \caption{Original MNIST data}
    \end{subfigure}
    \begin{subfigure}{0.8\linewidth}
        \includegraphics[width=\linewidth]{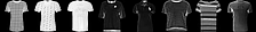}
        \caption{Original Fashion-MNIST data}
    \end{subfigure}
    \caption{Examples of the original datasets.}
    \label{fig:mist-fashionmnist}
\end{figure}
\paragraph{Implementation details}
For generating images, we feed vectors $x\in[-1,1]^d$ where $d = 1042$ corresponds to the flattened image dimension to a $5$-layer fully connected architecture implementing the Brenier potential. To meet the requirements of an increased capacity, we increase the width of the hidden layers to $1050$, $1096$, $512$, $128$ and $1$. As discriminator serves a $3$-layer fully connected architecture with an initial width of $512$ neurons which is halved for each deeper layer. We train with a linear learning rate decay with an initial learning rate of $0.00005$ for $2000$ epochs. Due to computational complexity we evaluate the convexity loss based on $10$ samples.
We reinitialize the weights of the discriminator when its loss drops below $0.001$ or exceeds $50$ to ensure meaningful gradient steps for the generator.
The convexity parameters are set to $\gamma=1$ and $\kappa=0.000001$ for MNIST and $\kappa=0.0001$ for Fashion-MNIST if not stated otherwise.
\paragraph{Results}
Experiments have shown that sampling uniformly from a zero centered bounded set improves the convergence of the Brenier-GAN. As a consequence, we use $x \in [-1,1]^d$ instead of sampling uniformly from the unit cube $[0,1]^d$.
This shift does not interfere with the theoretical results but improves the generation process in practice.
Generated images, both for a handwritten digit and a t-shirt, are shown in \cref{fig:mnist_fashionmnist_inference}. \Cref{fig:mnist-gamma=1,fig:fashion-gamma=1} show the results for a Brenier-GAN trained \textit{with} convexity constraints and \cref{fig:mnist-gamma=0,fig:fashion-gamma=0} show examples for a training \textit{without} convexity constraints.
\begin{figure}
    \centering
    \begin{subfigure}{0.8\linewidth}
        \includegraphics[width=\linewidth]{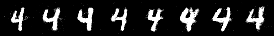}
        \caption{Brenier-GAN generated images of the handwritten digit 4. The Brenier-GAN was trained with $\gamma=1$ and $\kappa=0.000001$ for $1980$ epochs.}
        \label{fig:mnist-gamma=1}
    \end{subfigure}
    \begin{subfigure}{0.8\linewidth}
        \includegraphics[width=\linewidth]{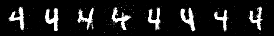}
        \caption{Brenier-GAN generated images of the handwritten digit 4 after roughly $1980$ Epochs. The Brenier-GAN was trained without convexity constraints ($\gamma=\kappa=0$) for $1980$ Epochs.}
        \label{fig:mnist-gamma=0}
    \end{subfigure}
        \begin{subfigure}{0.8\linewidth}
        \includegraphics[width=\linewidth]{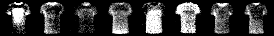}
        \caption{Brenier-GAN generated images of the class `t-shirt/top'. The Brenier-GAN was trained on the Fashion-MNIST t-shirt class for $1980$ epochs with $\gamma=1$ and $\kappa=0.0001$.}
        \label{fig:fashion-gamma=1}
    \end{subfigure}
        \begin{subfigure}{0.8\linewidth}
        \includegraphics[width=\linewidth]{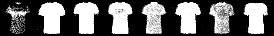}
        \caption{Brenier-GAN generated images of the class `t-shirt/top'. The Brenier-GAN was trained on the Fashion-MNIST t-shirt class for $1980$ epochs without convexity constraints ($\gamma=\kappa=0$).}
        \label{fig:fashion-gamma=0}
    \end{subfigure}
    \caption{Visual examples of Brenier-GAN generated images.}
    \label{fig:mnist_fashionmnist_inference}
\end{figure}
The results show that qualitatively the generated images approximate the true distribution. In contrast to the synthetic $2$D experiments, our experiments suggest that smaller $\kappa$ values lead to more realistic results.
Our Fashion-MNIST experiments show that the convexity constraints mitigate mode collapse since for the unconstrained Brenier-GAN often a white t-shirt is generated. Visual inspecting the generation results of both generators leads to the conclusion that the Brenier-GAN is able to approximate the true distribution with and without the convexity constraints. However, the diversity increases when applying convexity constraints during training by the cost of a bit more noisy images.  
The generation quality is exemplary compared in \cref{fig:fashion-gamma=1,fig:fashion-gamma=0}. In \cref{fig:mnist_convexity,fig:fashionmnist_convexity}, we show the convexity term evaluated on $1000$ randomly drawn noise input vectors during the training process for a Brenier-GAN trained with $\gamma=1, \kappa=0.000001$ and $\gamma=1,\kappa=0.0001$ respectively (green). The convexity term evaluated for a Brenier-GAN trained without penalty term ($\gamma=\kappa=0$) is depicted in blue. The plots show that enforcing convexity on $10$ randomly sampled points with our convexity loss already allows to control the convexity of the overall Brenier potential.
\begin{figure}[t]
    \centering
\begin{subfigure}{0.87\linewidth}
    \centering
    \includegraphics[width=\linewidth]{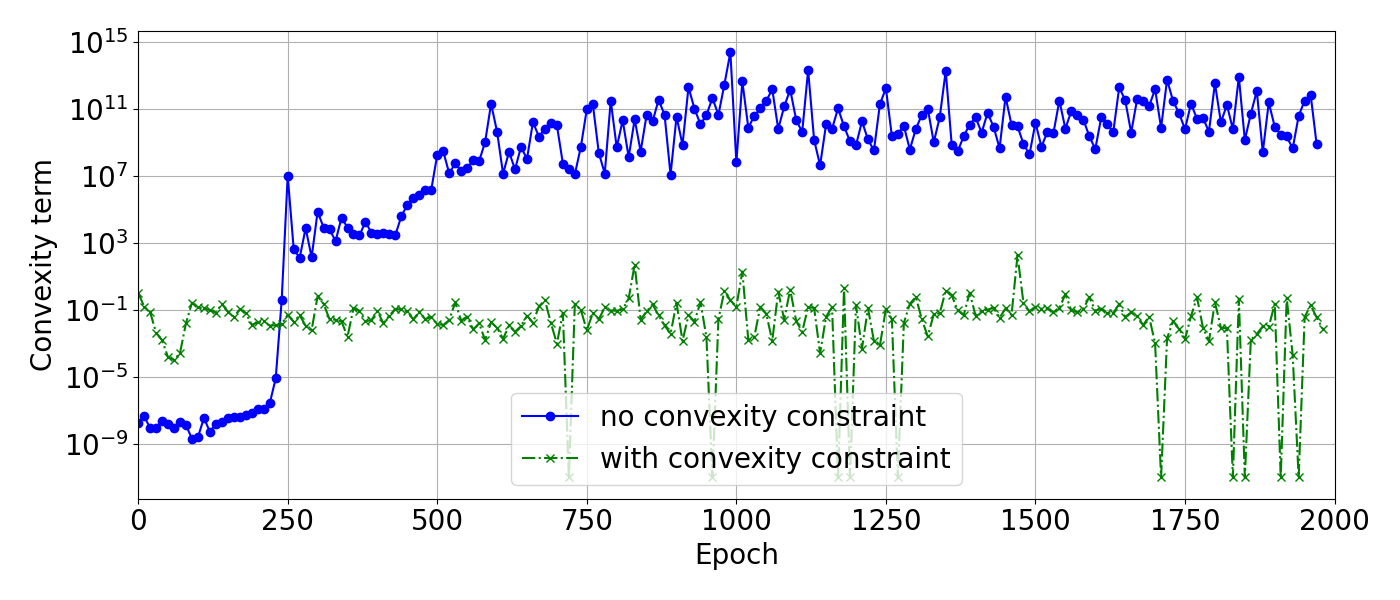}
    \caption{MNIST}
    \label{fig:mnist_convexity}
\end{subfigure}
\begin{subfigure}{0.87\linewidth}
    \centering
    \includegraphics[width=\linewidth]{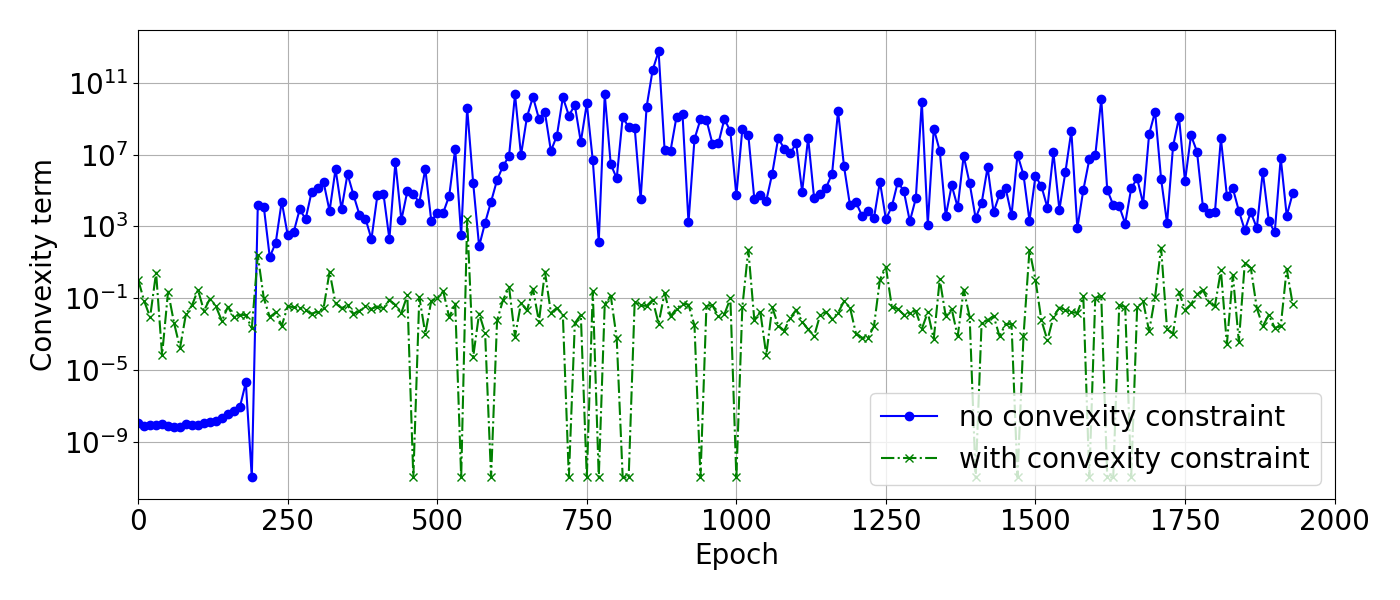}
    \caption{Fashion-MNIST}
    \label{fig:fashionmnist_convexity}
\end{subfigure}
    \caption{Convexity of the learned Brenier potential with and without constraints during training. To allow log-scale visualization, convexity terms which equal $0$ are replaced by a small epsilon of $10^{-10}$.}
    \label{fig:convexity}
\end{figure}

\subsection{NORB}
With the last experiments on the NORB dataset \cite{lecun2004learning}, we can demonstrate that the BrenierGAN is also able to generate images whose objects are displayed in three dimensions. 
The dataset was originally created for 3D object recognition from shape and its training data is given by five generic classes presented by toys, each containing five different instances. 
The images show the 3D objects without colour and texture features against a uniform background. The variation in the data comes from only six different lighting conditions, nine elevations and 18 azimuths set during the recordings by two different cameras (see \cref{fig:norb}). The difference between the images from the different camera positions is barely noticeable to the eye, so we only used the left camera to avoid overfitting and longer GAN training time with redundant data. 
This results in $4860$ images per class, with each image having a resolution of $96 \times 96$ pixels.
As with the MNIST and FashionMNIST experiments, we will only focus on one class and present the results for the truck class.
To sum up, the dataset is challenging in three ways: We have a small amount of data, a small variation in the data, and somehow a depth component, since the objects shown are three dimensional and have a shadow.
\paragraph{Implementation details}
We adopt the same network architecture and parameter settings as used for MNIST and FashionMNIST, with the following exceptions: The training is performed for $4000$ epochs with a batch size of $256$, and the best-performing parameters found through trial and error are set to $\gamma = 0.001$ and $\kappa = 0.00001$. Moreover, the generator employs the ReQU activation function (instead on ReCU) between its layers to improve stability during the GAN training process. To assess the effect of the convexity regularization, we also evaluate experiments with $\gamma = \kappa = 0$. For computational tractability, all input images are resized to a resolution of $32 \times 32$ pixels. In line with the (Fashion)MNIST experiments, the generator received a uniformly distributed noise vector consisting of $1024$ elements within the range $[-1,1]$ as input. While MNIST and FashionMNIST are widely used benchmarks for GAN training, the NORB dataset is not commonly employed in this context. Therefore, we provide a baseline given by the original GAN \cite{vanilla_gan} with a generator composed of five fully connected layers with 128, 256, 512 and $1,024$ neurons in the hidden layers, and a discriminator composed of three fully connected layers with hidden layers of 512 and 256 widths.
\paragraph{Results}
\Cref{fig:norbs_inference} demonstrates that VanillaGAN has limited capability in generating clear object structures, with the synthesized trucks appearing more as accumulations of scattered points rather than coherent forms. In contrast, BrenierGAN significantly outperforms the VanillaGAN baseline. 
Experiments conducted both with and without the convexity parameter indicate that BrenierGAN is capable of generating trucks with diverse shapes, even when employing $\kappa$ and $\gamma$ values that are smaller than those used in the (Fashion)MNIST experiments. 
Notably, training without the convexity term results in trucks with clearer and more distinct shapes, although a tendency to generate them at a certain angle is observed. Conversely, incorporating the convexity term enables the generator to capture a broader range of angles, albeit at the expense of finer structural details. 
As in the (Fashion)MNIST experiments, the generated images exhibit a degree of noise, which could potentially be mitigated through appropriate postprocessing techniques in Python. 
Finally, the convexity loss plot (see \cref{fig:norb_convexity}) supports the findings from the (Fashion)MNIST experiments, demonstrating that even with these very small $\gamma$ and $\kappa$ values, enforcing convexity effectively controls the overall Brenier potential.
\begin{figure}
    \centering
        \includegraphics[width=0.8\linewidth]{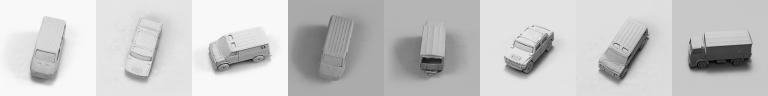}
        \caption{Original NORB data for the class ``truck''.}
        \label{fig:norb}
\end{figure}
\begin{figure}
    \centering
    \begin{subfigure}{0.8\linewidth}
        \includegraphics[width=\linewidth]{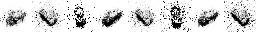}
        \caption{Samples generated by a Vanilla GAN trained for $4000$ epochs representing a baseline.}
        \label{fig:norb_vgan}
    \end{subfigure}
        \begin{subfigure}{0.8\linewidth}
        \includegraphics[width=\linewidth]{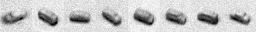}
        \caption{Samples generated by the Brenier GAN trained  on the convexity constraints $\gamma= 0.001$ and $\kappa= 0.00001$ for $4000$ Epochs.}
        \label{fig:norb_convex_on}
    \end{subfigure}
        \begin{subfigure}{0.8\linewidth}
        \includegraphics[width=\linewidth]{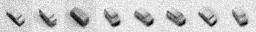}
        \caption{Samples generated by the Brenier GAN trained  without convexity constraints ($\gamma=\kappa=0$) for $4000$ Epochs.}
        \label{fig:norb_convex_off}
    \end{subfigure}
    \caption{Visual examples of GAN generated images for the NORB dataset.}
    \label{fig:norbs_inference}
\end{figure}
\begin{figure}[!ht]
    \centering
    \includegraphics[width=0.87\linewidth]{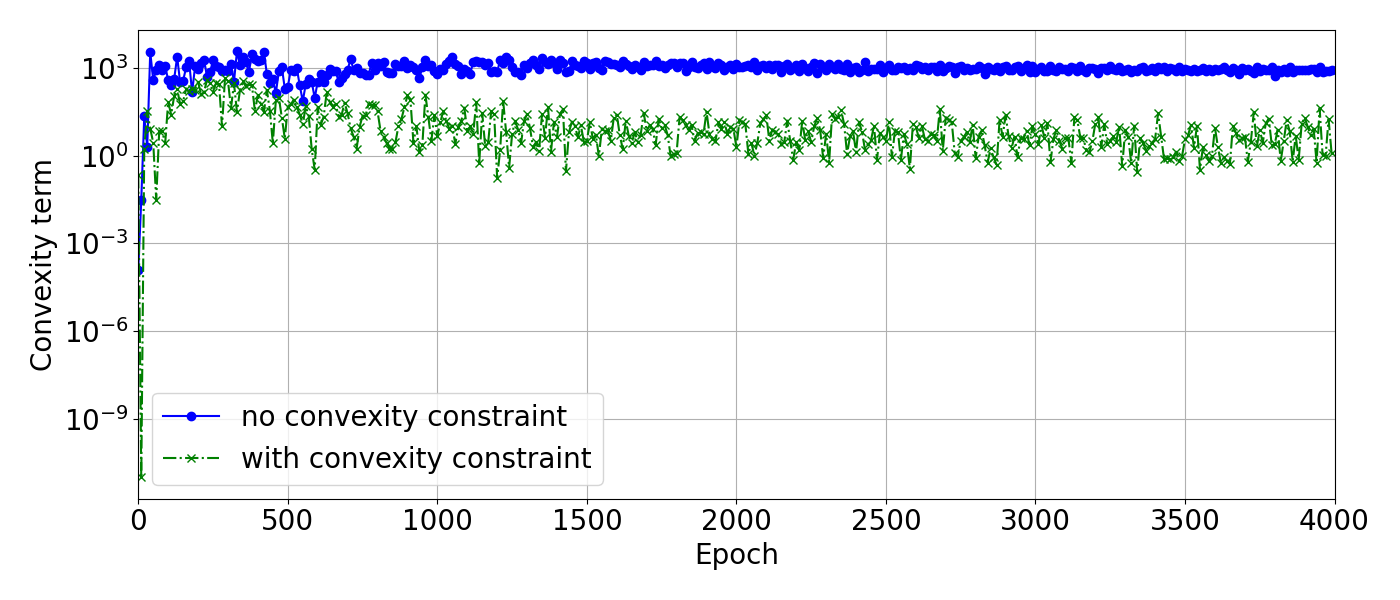}
    \caption{Convexity of the learned Brenier potential with and without constraints during training.}
    \label{fig:norb_convexity}
\end{figure}

\section{Conclusion}
In this work, we introduced the Brenier GAN, a generative model grounded in optimal transport theory. By modeling the generator as the gradient of a convex potential, our framework brings structure into the GAN setting and enables a rigorous statistical learning analysis. Leveraging recent results on universal approximation with ReCU neural networks, we showed that the ground truth target distribution can be accurately approximated, and that the learning error, measured by Jensen–Shannon divergence, can be made arbitrarily small as the sample size increases.

To enforce convexity of the potential numerically, we proposed a simple yet effective regularization technique based on penalizing non-convex behavior at random samples. Our theoretical results are supported by numerical experiments on low-dimensional synthetic data and small grayscale images, which suggest that both a convex potential and the target distribution are learned accurately in practice.

Future work could extend the statistical learning analysis presented in this paper to Brenier-GANs using convolutional neural network architectures. This extension would likely improve training efficiency and scalability, particularly for high-dimensional image data.

\section*{Acknowledgments}
The authors thank Patrick Krüger and Gabriele Steidl for interesting discussions. Large parts of the computations were carried out on the PLEIADES cluster at the University of Wuppertal, which was supported by the Deutsche Forschungsgemeinschaft (DFG, grant No. INST 218/78-1 FUGG) and the Bundesministerium für Bildung und Forschung (BMBF). S\'egol\`ene Martin's work is funded by the Deutsche Forschungsgemeinschaft (DFG, German Research Foundation) under Germany's Excellence Strategy – The Berlin Mathematics
Research Center MATH+ (EXC-2046/1, project ID: 390685689).
Annika M\"utze acknowledges support through the junior research group project ``UnrEAL'' by the German Federal Ministry of Education and Research (BMBF), grant no.\ 01IS22069.

\bibliographystyle{ieeetr}
\bibliography{bibliography}

\end{document}